\def\notes{1}
\newif\ifCOLT
\newcommand*{\algrule}[1][\algorithmicindent]{%
  \makebox[#1][l]{%
    \hspace*{.2em}%
  }
}
\def\ALG@printindent{%
    \ifnum \theALG@nested>0%
    \ifx\ALG@text\ALG@x@notext%
    \else 
    \unskip
    \ALG@printindent@tempcnta=1
    \loop
    \algrule[\csname ALG@ind@\the\ALG@printindent@tempcnta\endcsname]%
    \advance \ALG@printindent@tempcnta 1
    \ifnum \ALG@printindent@tempcnta<\numexpr\theALG@nested+1\relax
    \repeat
    \fi
    \fi
}
\patchcmd{\ALG@doentity}{\noindent\hskip\ALG@tlm}{\ALG@printindent}{}{\errmessage{failed to patch}}
\patchcmd{\ALG@doentity}{\item[]\nointerlineskip}{}{}{} %
    \newtheorem{claim}[theorem]{Claim}
    \newtheorem{open}{Open Problem}[section]
    \newtheorem{fact}[theorem]{Fact}
    \theoremstyle{plain}
    \newtheorem{theorem}{Theorem}[section]
    \newtheorem{claim}[theorem]{Claim}
    \newtheorem{proposition}[theorem]{Proposition}
    \newtheorem{corollary}[theorem]{Corollary}
    \newtheorem{lemma}[theorem]{Lemma}
    \newtheorem{fact}[theorem]{Fact}
    \theoremstyle{definition}
    \newtheorem{definition}{Definition}[section]
    \theoremstyle{remark}
\DeclareMathOperator*{\E}{\mathbb{E}}
\newcommand{\Ex}{\E}
\newcommand{\bit}[1]{{{\{0,1\}}^{#1}}}
\newcommand{\Var}[1]{\mathrm{Var}\left(#1\right)}
\newcommand{\mixture}{\cP_{\mathrm{mix}}}
\newcommand{\qdist}{\cD}
\newcommand{\TaskCore}{Task C}
\newcommand{\TaskSubpopTestFixed}{Task B'}    %
\newcommand{\TaskSubpopDist}{Task B}  %
\newcommand{\TaskStream}{Task A}
\newcommand{\maxfixed}{\rho}
\newcommand{\CI}{\mathrm{CI}}
\newcommand{\tCI}{\widetilde{\mathrm{CI}}}
\newcommand{\struct}{\text{\itshape ``S''}}
\newcommand{\unif}{\text{\itshape ``U''}}
\newcommand{\slice}[3]{#1[#2:#3]}
\newcommand{\defeq}{\stackrel{{\mbox{\tiny def}}}{=}}
\newcommand{\eps}{\varepsilon}
\newcommand{\cB}{\mathcal{B}}
\newcommand{\cD}{\mathcal{D}}
\newcommand{\cF}{\mathcal{F}}
\newcommand{\cH}{\mathcal{H}}
\newcommand{\cJ}{\mathcal{J}}
\newcommand{\cP}{\mathcal{P}}
\newcommand{\cQ}{\mathcal{Q}}
\newcommand{\cU}{\mathcal{U}}
\newcommand{\cX}{\mathcal{X}}
\newcommand{\cY}{\mathcal{Y}}
\newcommand{\cZ}{\mathcal{Z}}
\newcommand{\1}[1]{\mathbbm{1}_{\{#1\}}}
\newcommand{\N}{N}
\newcommand{\T}{T}
\newcommand{\mynote}[3]{\marginpar{\tiny \sf \color{#1} {#2}: {#3}}}
\newcommand{\mynote}[3]{}
\newcommand{\err}{\text{\rm err}}
\newcommand{\paren}[1]{{\left( {#1} \right)}}
\newcommand{\bparen}[1]{{\big( {#1} \big)}}
\newcommand{\Bparen}[1]{{\Big( {#1} \Big)}}
\newcommand{\abs}[1]{{\left| {#1} \right|}}
\newcommand{\braces}[1]{{\left\{ {#1} \right\} }}
\newcommand{\ip}[1]{{\left\langle {#1} \right\rangle}}
\newcommand{\simiid}{\sim_{\text{\rm iid}}}
\DeclarePairedDelimiterX{\infdivx}[2]{(}{)}{%
  #1\;\delimsize\|\;#2%
}
\newcommand{\KL}{\mathrm{KL}\infdivx}
\newcommand{\JSD}{\mathrm{JSD}\infdivx}
\thanks{Supported in part by NSF awards CCF-1947889 and CNS-2046425.} \Email{mbun@bu.edu}\\
\title[Strong Memory Lower Bounds]{Strong Memory Lower Bounds for Learning Natural Models
}
\begin{document}

\maketitle

\begin{abstract} 
    We give lower bounds on the amount of memory required by one-pass streaming algorithms for solving several natural learning problems. In a setting where examples lie in $\bit{d}$ and the optimal classifier can be encoded using $\kappa$ bits, we show that algorithms which learn using a near-minimal number of examples, $\tilde O(\kappa)$, must use  $\tilde \Omega( d\kappa)$ bits of space. Our space bounds match the dimension of the ambient space of the problem's natural parametrization, even when it is quadratic in the size of examples and the final classifier. For instance, in the setting of $d$-sparse linear classifiers over  degree-2 polynomial features, for which $\kappa=\Theta(d\log d)$, our space lower bound is $\tilde\Omega(d^2)$. Our bounds degrade gracefully with the stream length $\N$, generally having the form $\tilde\Omega\paren{d\kappa \cdot \frac{\kappa}{\N}}$.

    Bounds of the form $\Omega(d\kappa)$ were known for learning parity and other problems defined over finite fields. Bounds that apply in a narrow range of sample sizes are also known for linear regression. Ours are the first such bounds for problems of the type commonly seen in recent learning applications that apply for a large range of input sizes. 
\end{abstract}

\section{Introduction}
The complex models that power much of machine learning's recent success are typically fit to large data sets using streaming algorithms that process examples one by one, updating a stored model as they go. Their performance is often limited by their memory footprint as much as it is by the complexity of their calculations \citep[see e.g.,][]{vaswani2017attention,brown2020language,ramesh2021zero}.

We give new lower bounds on the space required to solve natural learning problems in a streaming model, where each example can be processed only once. Consider a stream of data elements $Z=(Z_1,...,Z_{\N})$ drawn from a distribution $P$ on labeled examples in the set $\cZ = \cX \times \cY$, where  $\cX$ denotes a set of possible feature vectors and  $\cY$ a set of possible labels (e.g., $\{0,1\}$). 

A learning algorithm's goal, given one pass over the stream $Z$, is to find a hypothesis $h:\cX\to\cY$ with small error on unseen examples from $P$. We focus on  misclassification error,   \( \err_P(h) \defeq \Pr_{(x,y)\sim P} (h(x) \neq y) \, ,\)
though for real valued functions (e.g., when $\cY = [0,1]$) we consider the expected absolute error $ \err_P(h) \defeq \Ex_{(x,y)\sim P} \abs{h(x) - y} \, .$
The error of a learning algorithm $A$ on $P$ with input length $\N$ is defined as the expected error on a stream of $\N$ inputs drawn from $P$. 
\[ \err_{P^{\otimes \N}}(A) \defeq \Ex_{Z\sim P^{\otimes \N}} \bparen{\err_P(A(Z))} \, .\]  We simply write $\err_{P}(A)$ when $\N$ is clear from context.

Given a function class $\cH$ of functions from $\cX\to \cY$,
we say that algorithm $A$ \textit{(agnostically) learns  $\cH$ to error $\eps$} if, for every distribution $P$ on $\cX\times \cY$, the algorithm $A$ finds a hypothesis with (expected) error at most $\eps$ more than that of the best hypothesis in $\cH$, that is,
 $\err_P(A) - \inf_{h \in \cH}\Bparen{\err_P(h)} \leq \eps \, .$
For concreteness, we will consider an algorithm successful it if learns a particular class to error better by a constant than one gets by random guessing. For example, we might assume there is an $h^* \in \cH$ for which $\err_P(h^*)\leq 1/4$ and require our algorithm have  expected error at most $0.49$. 

In this paper, we ask how much memory is needed for a streaming algorithm to agnostically learn $\cH$ as a function of the size $\kappa = \log |\cH|$ of the function space,\footnote{We focus on the cardinality of the function space for simplicity but, for continuous spaces, one should think of $\kappa$ as the bit length of an appropriate discrete representation of functions in $\cH$.} the data dimension $d = \log|\cX|$ and the stream length $\N$. A streaming  algorithm receives each example once;  the  algorithm's memory size on a given execution is the maximum number of bits used to encode its state between processing examples. The output is a function of the final state. 

If space is not a concern, observe that, for any class $\cH$ and any  distribution $P$, it suffices to receive $O(\kappa)$ examples from $P$ in order to find a hypothesis with error within a small constant of the best hypothesis in $\cH$. Thus, there is an algorithm that uses a stream of length $\N=O(\kappa)$ and space $O(\kappa d)$ bits which simply stores its entire stream and attains low expected error.

\paragraph{Strong Bounds for Sparse Models}
We show simple, natural function classes (capturing, for example, sparse linear classification over a degree-2 polynomial feature space) for which this trivial memory bound is tight. Specifically, every algorithm that uses a stream of length $\tilde O(\kappa)$ and finds a hypothesis whose accuracy exceeds random guessing by a constant must, on average over executions, use $\tilde \Omega( d \kappa) $ bits of memory. Our lower bounds do not assume any particular form or running time of the algorithm or its output; they build on information complexity techniques developed by \citet{braverman2020coin} for bounding the space complexity of statistical estimation.

In a breakthrough result, \citet{raz2018fast} proved such a bound for the class of parity functions over $d$-bit inputs. In his setting,  the function class $\cH$ is also the set of $d$-bit strings, so $d=\kappa$.
\citet{raz2018fast} proved that any streaming algorithm solving parity learning requires either $\Omega(d^2) = \Omega(d\kappa)$ bits of memory or $2^{\Omega(\kappa)}$ examples.
Subsequent papers extended and generalized these results, in particular to higher-degree polynomials and related classes (see Section~\ref{sec:related_work}).

These results are striking. However, they do not obviously imply lower bounds for the function classes (linear models, neural networks) that are the focus of much modern machine learning. 
Although several authors have studied memory bounds for problems of a more continuous flavor, initial lower bounds were generally limited to the form $\tilde \Omega(\max(\kappa , d))$—that is, the lower bounds are limited to either the size of the model or the size of a single example (\citet{SteinhardtD2015,garg2014communication,braverman2016communication}).
Several recent papers prove stronger lower bounds, but under significant restrictions on either parameter ranges \citep{sharan2019memory,dagan2019space,dagan2018detecting} or the computational model~\citep{marsden2022efficient}. We discuss these further in Related Work (Section~\ref{sec:related_work}). 

We consider a simple distributional problem, described below, and show a memory lower bound of $\tilde \Omega(\kappa d \cdot \frac{\kappa}{\N})$ for all $N\geq \kappa$. It implies memory lower bounds for learning a number of natural function classes. These include:

\newcommand{\kdictators}{Let $\cX=[k]\times \bit{d'}$ (for $d' = d - \log_2 k$, so inputs can be described with $d$ bits) and consider classifiers $h_{i_1,...,i_k}$ specified by $k$ indices in $[d']$, where $h_{i_1,...,i_k}(j,x)=x_{i_j}$ (that is, for each value $j$ there is a single bit of $x$ that determines the label).}
\newcommand{\sparsekernel}{Let $\cX = \bit{d}$, and consider classifiers of the form $h(x) = \text{sign}(\ip{w,\phi(x)})$ where $\phi(x)$ denotes the values of degree-2 monomials in the entries of $x$ (so each entry of $\phi(x)$ equals $x_ix_j$ for two indices $i,j \in [d]$) and $w \in \bit{\binom{d}{\le 2}}$ has at most $k$ nonzero entries.}
\newcommand{\ktermdnfs}{Let $\cX=\{0,1\}^d$ and consider functions given by the OR of $k$ terms, each of which is the AND of two input bits.}
\newcommand{\multiclasslinear}{Let $\cX = \bit{d}$ and $\cY = [k]$ (so there are $k$ distinct labels). Let $\cH$ comprise all functions of the form $h(x) =  \arg\max_{j \in [k]} \ip{w_j,x}$ where each $w_j \in \bit{d}$ has  $O(\log k)$ nonzero entries.} 
\newcommand{\realregression}{Let $\cX =\bit{d}$ and $\cY = [0,1]$. Consider functions realizable by a sparse two-layer neural network with a single hidden layer of $k$ ReLU nodes, each of which is connected to at most $O(\log k)$ input nodes. 
The weights on the wires in the first layer are either 0 or 1, and those in the second layer are in $\braces{0,\frac{1}{k-1},\frac{1}{k-1},..., 1}$.}
\begin{compactitem}
    \item 
        \textit{Direct sums of $k$ dictators:} 
        \kdictators
        
    \item 
        \textit{Sparse linear classifiers over  degree-2 polynomial features}: 
        \sparsekernel 
        ~The classifiers we study may also be viewed as \textit{$k$-term 2-DNFs}: 
        \ktermdnfs

    \item \textit{Multiclass sparse linear classifiers}:  
        \multiclasslinear
        ~The combining function can also be taken to be a ``softmax'' instead of the exact argmax.

    \item \textit{Real-valued regression}: 
        \realregression
\end{compactitem}

For each of these settings, $\kappa = \tilde\Theta(k \log d)$ and we show a space lower bound of $\tilde \Omega(d k)=\tilde \Omega(d \kappa)$  when $\N$ is close to the minimal sample complexity of $\Theta(\kappa)$.  For two of the classes above, sparse linear classifiers  over the polynomial features and $k$-term 2-DNFs, our bounds apply for $1\leq k\leq d/2$. Our bounds for the remaining problems apply for $k$ anywhere from 1 to superpolynomial in $d$.

Our bounds degrade gracefully as the sample size increases. For $\N \geq \kappa$, every learner that succeeds with a stream of $\N$ examples requires memory at least $\tilde \Omega( d \kappa^2 / \N) $. In this regard, our bounds behave similarly to the initial, weaker bounds for regression-like problems, but are not as strong as those for learning parity and other algebraic problems~(as in \citet{raz2018fast}).

For direct sums of $k$ dictators, our lower bounds are matched up to logarithmic factors in all parameter regimes by a simple, quasilinear time agnostic learning algorithm (Appendix~\ref{app:agnostic_dictators}). For the other function classes, there are matching algorithms for the particular input distributions that arise in our arguments but it is unclear if the same is true for general distributions.

\paragraph{A Simple, Distributional ``Core'' Problem} Our bounds all derive from space lower bounds for the following simple problem, described more completely in Section~\ref{sec:task_definitions} and parametrized by positive integers $d$, $k$, and $\maxfixed \leq d$. The learner receives a stream of $\N$ inputs in $[k]\times \bit{d}$, each of which consists of a ``subpopulation identifier'' in $[k]$ and a feature vector in $\bit{d}$.

\begin{itemize}
    \item The input stream is drawn i.i.d.~from a distribution $P$ which is a uniform mixture of $k$ components. Each component $j$'s distribution is specified by a set $I_j$ of up to $\maxfixed$ indices in $[d]$ and bits $\paren{b_{j,i}}_{i \in I_j}$. An observation from component $j$ is a pair $(j,X)$ where $X\in \bit{d}$ is uniform except for coordinates in $I_j$, which are set to their $b_{j,i}$ values. The entire distribution $P$ is thus specified by  $k$ sets $I_1,...,I_k$ and the associated bits $\paren{b_{j,i}}_{j\in [k], i \in I_j}$. 

\item 
To generate the parameters of $P$, each component's parameters are generated independently by sampling a number $r$ uniformly in $ \{0,\ldots, \maxfixed\}$, and then selecting a size-$r$ subset $I_j$ of fixed features, with the bit values of the features selected uniformly.

\item 
After receiving $\N$ examples drawn from $P$, the algorithm  is presented with a test pair $(j,x)$ which is either drawn from $P$ (the ``structured'' case) or drawn uniformly at random from $[k]\times\bit{d}$ (``uniform''); the algorithm must distinguish between these two cases.

\end{itemize}

One can reduce this distributional problem to agnostic learning of any of the classes mentioned above—see Appendix~\ref{agnostic2core}.

For a  distribution $P$ in the class above, there is a simple optimal distinguisher: given a pair $(j,x)$,  it checks if $x$ agrees with $b_{j,i}$ in each position $i \in I_j$; it outputs ``structured'' if all the checks pass and ``uniform''  otherwise. Even when $\maxfixed =1$ (so there are either 0 or 1 fixed bits in each subpopulation), this distinguisher has advantage  $1/8$ over random guessing. More generally, it has advantage $\frac 1 2-O\paren{\frac{1}{\maxfixed}}$. It is not hard to learn such a distinguisher: for all $\rho\geq 1$, a distinguisher with constant advantage over random guessing can be learned from $\Theta(k \log(d))$ examples using space $O(dk)$. 
More generally, for $\N = \Omega( k \log(d))$ and $1\leq \maxfixed \leq d$, a simple strategy (described for completeness in Appendix~\ref{app:core_ubd}) learns a distinguisher with constant advantage in space $O\paren{dk \cdot \frac{k}{\N} \cdot \frac{1}{\maxfixed}}$. 
We show that these simple strategies are essentially optimal.

\begin{theorem}[Informal, see Theorem~\ref{theorem:core_lbd}]
    Consider the above streaming problem with $\N$ examples, $d$ dimensions, $k$ components,  and at most $\maxfixed$ fixed features, with $\maxfixed = o(d^{1/4})$.
    Any algorithm solving this task to constant error less than $\frac{1}{2}$ requires space $\Omega\left(\frac{k^2 d}{\N \maxfixed^4}\right) = \Omega\paren{dk \cdot \frac{k}{\N} \cdot \frac{1}{\maxfixed^4}}$.
\end{theorem}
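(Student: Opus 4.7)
My plan is to prove this lower bound via an information-complexity argument in the spirit of \citet{braverman2020coin}, adapted to the mixture-of-subcubes setting. Let $\Theta = (\Theta_1, \ldots, \Theta_k)$ denote the hidden subpopulation parameters (which are independent under the prior), and let $M_0, M_1, \ldots, M_\N$ denote the Markov chain of memory states driven by the i.i.d.\ stream. The strategy is to sandwich $I(M_\N; \Theta)$ between a lower bound coming from the distinguishing task and an upper bound obtained by chaining per-step information costs, then solve for the memory size $S$.

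\textbf{Step 1: task implies information.} The first step is to show that any algorithm attaining constant advantage $\alpha > 0$ on the test forces a lower bound on the information the memory carries about $\Theta$. Since the test pair $(J, X^*)$ samples $J$ uniformly from $[k]$ and, conditional on $J=j$, only $\Theta_j$ is relevant to the optimal decision, I would apply Pinsker's inequality per subpopulation to turn the algorithm's per-$j$ TV advantage into a KL lower bound. Because the per-subpopulation maximum advantage is capped at $\tfrac{1}{2} - O(1/\maxfixed)$ (the prior places $1/(\maxfixed+1)$ mass on $r_j = 0$, which is indistinguishable from uniform), a constant overall advantage forces $\Omega(1)$ advantage on a constant fraction of subpopulations, which after the Pinsker squaring and the use of independence of the $\Theta_j$'s yields $I(M_\N; \Theta) \geq \sum_j I(M_\N; \Theta_j) \geq \Omega(k/\maxfixed^{2})$.

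\textbf{Step 2: per-example information cost.} The technical heart is a per-step information bound of the form
\[
I(M_t; \Theta \mid M_{t-1}) \;\leq\; O\!\paren{\frac{S\, \maxfixed^{2}}{k\, d}},
\]
which I would establish by writing the conditional mutual information as an expected KL divergence between the real update kernel and a counterfactual ``refreshed-$\Theta$'' kernel, decomposing coordinate-by-coordinate, and applying Jensen and Cauchy--Schwarz in the style of~\citet{braverman2020coin}. Two qualitatively distinct factors give the bound: (i) the $1/k$ factor, because the observed label $J_t$ is uniform on $[k]$ and only the $J_t$-th slot of $\Theta$ is touched per step; and (ii) the $\maxfixed^{2}/d$ factor, the ``signal strength'' of a single example, because marginally over $\Theta_j$ the feature vector is \emph{exactly} uniform on $\bit{d}$ and the conditional-on-$\Theta_j$ deviation from uniform is concentrated on the $r_j \leq \maxfixed$ fixed coordinates with per-coordinate strength $O(\maxfixed/d)$. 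The assumption $\maxfixed = o(d^{1/4})$ is used to suppress higher-order cross-coordinate error terms that appear after the Cauchy--Schwarz step.

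\textbf{Step 3: combine, and main obstacle.} Chaining Step~2 over the $\N$ examples and comparing with Step~1 gives
\[
\Omega\!\paren{\tfrac{k}{\maxfixed^{2}}} \;\leq\; I(M_\N; \Theta) \;\leq\; \N \cdot O\!\paren{\tfrac{S\,\maxfixed^{2}}{k\,d}},
\]
which rearranges to $S = \Omega(k^{2} d/(\N \maxfixed^{4}))$, as claimed. I expect Step~2 to be the main technical obstacle: trivial per-example bounds give only $O(d/k)$ (the conditional entropy of an example given $\Theta$) or $O(S)$ (the raw memory size), both polynomially too weak when $S \ll d^{2}$. Extracting the $\maxfixed^{2}/d$ signal-strength factor requires genuinely using the subcube structure and the near-uniformity of the example's marginal, while the $1/k$ factor requires keeping track of which subpopulation each example belongs to; combining both inside a single Jensen/Cauchy--Schwarz step without losing the correct power of $\maxfixed$ is where most of the care would be needed.
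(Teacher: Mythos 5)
Your plan follows the classical ``information about the hidden parameters'' template (Steinhardt--Duchi, Braverman et al.\ 2016, Shamir, Dagan--Shamir): lower bound $I(M_\N;\Theta)$ from the test's advantage, upper bound it by chaining per-step costs $I(M_t;\Theta\mid M_{t-1})$, and solve for $S$. The paper deliberately does \emph{not} do this, and the reason is exactly where your argument breaks: the per-step bound in Step~2 is false in the constant-signal regime of this problem. Your claimed ``per-coordinate signal strength $O(\maxfixed/d)$'' is only an average over the random \emph{location} of the fixed coordinates; pointwise, a fixed coordinate is a deterministic bit, so the likelihood ratio $p(x\mid\theta)/p(x)$ takes values in $\{0\}\cup\{2^r\}$ rather than $1\pm O(\maxfixed/d)$, and the Jensen/Cauchy--Schwarz machinery that extracts an $\eps^2$ factor in the Dagan--Shamir setting (which requires $\eps=\tilde O(d^{-1/3})$) has no analogue here. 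Concretely, consider $k=1$, $\maxfixed=1$, and a memory state $M_{t-1}$ that records a candidate set $C\subseteq[d]$ of size $c$ together with one reference bit per candidate (this is essentially the paper's upper-bound algorithm after its first few steps, using $S=O(c\log d)$ bits). Conditioned on such an $M_{t-1}$, the posterior on $\Theta$ is spread over $C$; a single fresh example eliminates about half of $C$, so $I(M_t;\Theta\mid M_{t-1})=\Omega(1)$ while your claimed ceiling is $O(S/d)=O(c\log d/d)=o(1)$. So the chain in Step~3 cannot be closed.

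The deeper obstruction is that your target quantity satisfies $I(M_\N;\Theta)\le H(\Theta)=O(k\maxfixed\log d)$ unconditionally, so this accounting can never ``charge'' the algorithm more than $\tilde O(\kappa)$ bits in total across the whole stream --- it cannot see the $d$-bit reference strings the algorithm is forced to store before it knows which coordinates matter, because those strings carry only $O(\maxfixed\log d)$ bits \emph{about $\Theta$}. This is precisely why prior work using this framework tops out at $\tilde\Omega(\max(\kappa,d))$. The paper instead lower bounds $\CI(M\mid S,F)=\sum_i\sum_{t\ge i} I(M_t;X_i\mid M_{i-1},S,F)$ --- the information the memory retains about the raw \emph{examples}, conditioned on the true parameters --- which is not capped by $H(\Theta)$ and can legitimately reach $\Omega(k^2d/\maxfixed^4)$. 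Its proof has three components absent from your plan: a posterior-supermartingale/Jensen--Shannon argument showing a one-bit distinguisher must retain $\Omega(\delta^4)$ bits about a \emph{uniform} input stream (i.e., information useless for the answer); an embedding of that one-bit stream into a random coordinate of a $d$-bit structured stream, gaining a factor of $d$; and a random-order direct sum over the $k$ subpopulations gaining a factor of $k^2$ (one $k$ because the algorithm must handle all subpopulations simultaneously, one from the identity $\Pr[T_1>t]=k\Pr[T_1=t+1]$ for geometric inter-arrival times). To repair your proposal you would need to replace ``information about $\Theta$'' with a quantity of this conditional-on-$F$ type; the per-step argument as written cannot yield the theorem.
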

The ratio $\N/k$ is the expected number of examples from each subpopulation. For $\rho$ that is at most logarithmic in $d$, the bounds have the form $\tilde\Omega\paren{k \cdot \frac {d}{\T}}$, where $\T= \N/k$. The bound may be viewed as incorporating two statements: the memory required to learn to distinguish a particular subpopulation using $\T$ examples scales as $\frac{d}{\T}$, and there is no better way to solve the larger problem than to learn each subpopulation individually.

\paragraph{Discussion} 
A widespread strategy in modern deep learning is to first train a large, dense network and then use it to find a smaller network by distillation or pruning. One common explanation for this approach is that optimization in the larger space is easier (see, e.g., \citet{frankle2020pruning,BartlettMR2021survey}). Our work suggests a different explanation for this strategy's empirical success---namely, the larger parameter vector allows the training process to encode information whose relevance to the problem can only be understood later. In particular, we show that training algorithms for sparse models must sometimes use space proportional to the ambient dimension of the natural encoding,
rather than with the size of the examples or the final classifier.

Our lower bounds hold only for one-pass streaming algorithms. This covers the common training strategy in settings where large amounts of data are available  \citep[e.g.,][who use partial epochs for some corpora]{brown2020language}. However, when data is not so abundant, machine learning models are often trained by taking multiple passes over the data sets. We conjecture that similar bounds hold for multi-pass algorithms. That is, it seems likely that a stream of $a\cdot n$ fresh examples is at least as useful as $a$ passes over $n$ examples; however, proving statements of that nature is challenging (\citet{GargRT2019twopass} and \citet{dagan2018detecting} provide notable successful examples), and we leave it as an open problem for future work.
 
\paragraph{Techniques} We bound an algorithm's space usage using specific measures of information complexity. The core distributional problem we use is inspired by the clustering problem used by \cite{BrownBFST21memorization} to prove that high-accuracy learning algorithms must sometimes store considerable information about individual examples in their final hypothesis. We change the problem in a few respects (fewer fixed bit positions; a focus on distinguishing instead of labeling). More importantly, the current paper employs a very different technical approach.

Our ``single subpopulation'' task, defined in Section~\ref{sec:task_definitions}, is closely related to the ``hide-and-seek problem'' of \citet{shamir2014fundamental}.
Both problems consider streams where examples are $d$-bit strings.
In each example, all but a few (a priori unknown) indices are uniformly random.
Beyond some technical details in the setting, the relevant difference is the notion of information cost used in our lower bound which, as we discuss below, allows us to lift the results to larger problems.

Our main technical tool is a notion of information cost recently introduced in \citet{braverman2020coin}, which we sometimes refer to as the ``composable information cost'' of algorithm $M$, denoted $\CI(M)$:
\begin{align}
    \CI(M) \defeq \sum_{i=1}^{\N} \sum_{t=i}^{\N} I(M_t; X_i\mid M_{i-1}), \label{eq:CI_def}
\end{align}
where $X_i$ is the $i$-th example and $M_t$ is the algorithm's state after processing $X_t$. 
We make no assumptions on the form of the memory state.
$I(\cdot ; \cdot | \cdot)$ denotes Shannon's (conditional) mutual information.
This information cost is always measured relative to a specific distribution on the input $X$. 
Composable information cost is useful for streaming applications when proving direct-sum-type statements, turning a lower bound for a simpler problem into a bound for a more complex one.
Furthermore, $\frac{1}{\N}\cdot \CI(M)$ is a lower bound on the space used by the algorithm (Lemma~\ref{lemma:space_lbd_SF}).

A key first step, which illustrates the measure's utility, is  a new lower bound we prove in Section~\ref{sec:taskstream_lbd} for the task of distinguishing whether a stream of bits is uniformly random or fixed (either to zero or one)---a special case of the coin problem~\citep{braverman2020coin,BravermanGZ21tightcoin}.
In the coin problem, the learner is asked to distinguish whether a stream of flips resulted from a biased or unbiased coin.
We prove that, when $X$ is uniformly distributed,  $\CI(M) = \Omega(1)$. By itself, this bound only implies (trivial) space usage of at least one bit. 
However, the bound is nontrivial: it shows that the information cost \textit{is nonzero even conditioned on the answer} (in this case, ``uniform''). Together with structure of composable information, it allows us to derive lower bounds for our core task.

Our argument that lifts a memory bound for a 1-bit stream to $k$ interleaved $d$-bit streams is inspired by the arguments of \cite{braverman2020coin} (namely, lower bounds for the Simultaneous $k$-Coins Problem and the random-order $k$-Coins Problem). Our applications require extension and modification of these techniques, which we now describe.

In some of our proofs, it is easier to work with a subset of the terms in~\eqref{eq:CI_def}: we define $\tCI$ as just the ``diagonal terms'' of $\CI$:
\begin{align}
    \tCI(M) \defeq \sum_{t=1}^{\N}  I(M_t; X_t\mid M_{t-1}). \label{eq:tCI_def}
\end{align}
Since mutual information is nonnegative, we have $\CI(M)\ge \tCI(M)$.
Furthermore, $\tCI$ allows us to consider a ``full-memory'' version of $M$ that observes all previous states.
We discuss this more in Section~\ref{sec:taskstream_lbd}.

Our reductions use an algorithm for a ``big'' task to construct an algorithm for a ``small'' task.
As is standard in information complexity arguments, this involves generating synthetic inputs to feed to the bigger algorithm alongside the real input.
When these inputs are uniformly distributed, this introduces no overhead, but in our problems the inputs must be drawn from a specific distribution.
This synthetic distribution is associated with a set of parameters $F$, which is itself a random variable and cannot be hard-wired into the algorithm. 
Our approach is simplified by conditioning on these parameters in the information cost itself, as
\begin{align*}
    \CI(M\mid F) \defeq \sum_{i=1}^{\N} \sum_{t=i}^{\N} I(M_t; X_i\mid M_{i-1}, F).
\end{align*}
We define $\tCI(M\mid F)$ similarly.
This means that our information complexity bounds are really measuring how an algorithm must store information about its input that is \textit{not about the actual distribution}. In this sense, the bounds show that some form of memorization (in the spirit of \cite{BrownBFST21memorization}) is needed at intermediate points in the computation. 

Finally, we extend the treatment of random-order streams in \citet{braverman2020coin}.
They apply a lower bound for the (single-coin) Coin Problem to prove a lower bound for the $k$-Coins Problem, where at each time step the learner receives an update from a randomly chosen coin.
This theorem is then applied to data streaming problems (such as $\ell_2$ Heavy Hitters) in the random-order model.
This setting is similar to our core problem, and indeed our reduction is almost identical.
We depart from \citet{braverman2020coin} in the analysis: 
    they define a notion of a ``good'' sequence of arrivals
    and, for any fixed good sequence, prove a lower bound for algorithms operating on that sequence.
This suffices for their purposes
but not for ours. For our learning task, an algorithm with advance knowledge of the sequence of arrivals can get by with lower information cost.
Instead, in our analysis, we let the sequence $S \in [k]^\N$ be a random variable, and prove a lower bound on the expression $\CI(M\mid S, F)$.
Although we condition on $S$, we exploit the learner's uncertainty, at any given time, about the part of $S$ it has not yet seen.
This enables a more direct proof, which we believe may be useful in other random-order streaming applications. 

\section{Related Work}\label{sec:related_work}

\paragraph{Lower Bounds Following Raz's Argument} As mentioned in the introduction, one closely related line of work proves memory lower bounds for problems defined over finite fields, such as learning parities \citep{raz2018fast,garg2018extractor,GargRT2019twopass}. The closest classes to the ones we consider are sparse parities. However, the techniques in the literature appear limited to analyzing parities that depend on at least $\log(d)$ variables. Lower bounds for these do not obviously imply bounds for the continuous function classes normally used in practice.

The algebraic bounds were generalized to rely on combinatorial conditions such as two-source extraction~\citep{garg2018extractor}, mixing~\citep{moshkovitz2017mixing,beame2018time}, and SQ dimension~(\citet{moshkovitz2017general} and \citet{gonen2020towards}, following early work of \citet{SteinhardtVW16}). 
These frameworks do not appear to yield nontrivial bounds for the our parameter settings, since the function classes we consider do not obviously satisfy the required combinatorial properties (in particular, they are very poor extractors and have low SQ dimension).

\paragraph{Lower Bounds for Single-Sample Distributed Regression}The work on learning parity was inspired by a related line of work on learning in streaming and distributed models that focused on regression-like problems
\citep{SteinhardtD2015,braverman2016communication}. The most directly relevant works consider $k$-sparse regression, showing lower bounds of $\tilde \Omega(d)$ on the memory required to learn with the minimal number of samples, $\tilde\Theta(k)$ (and lower bounds of $\tilde \Omega\paren{d \cdot \frac{k}{\N}} $ in general). Lower bounds such as ours, which exceed the data dimension, do not fit into these papers' distributed framework, since the protocol in which each player broadcasts their input always succeeds at learning and uses $O(d)$ bits of communication per player. That said, our bounds for the case $k=1$ are essentially bounds on distributed regression. They require new proofs in order to bound an appropriately composable notion of information complexity.

\paragraph{Strong Lower Bounds for Restricted Parameter Ranges}
For linear regression in the streaming setting, \citet*{sharan2019memory} give a memory lower bound of $\Omega(d^2) = \Omega( d\kappa)$ as long as $\N=o(d \log \log (1/\eps))$, where $\eps$ is the accuracy parameter. This bound exceeds the size of a single example, but applies in a somewhat narrow range of sample complexity (or, seen as a lower bound on sample complexity, exceeds the unrestricted sample complexity of $\Theta(d)$ by a factor of $\log\log(1/\eps)$).
Independent work of \citet{dagan2019space} also gave a memory lower bound of $\Omega(d^2)$ for linear regression; although not stated as a distributional problem, it can be interpreted as applying to streams of length exactly $\N=d-1$.

\citet{dagan2018detecting} give memory-sample tradeoffs for statistical estimation tasks, building on prior work by \citet{shamir2014fundamental}.
Their key theorem gives lower bounds for distinguishing between certain families of distributions.
An example is the set of distributions over $\{-1,+1\}^d$ where a single pair of indices $i,j$ satisfies $\E[x_ix_j]=\eps$ and all other pairs satisfy $\E[x_{i'}x_{j'}]=0$. They prove that a streaming algorithm that solves this task with  $\N$ samples requires 
memory size $\tilde\Omega(d^2\cdot \frac{1/\epsilon^2}{\N})$ for very low error, $\eps=\tilde{O}(d^{-1/3})$. 
Their bound is matched (up to logarithmic factors) by upper bounds in the low-correlation regime they consider; it also degrades gracefully with the number of passes over the stream.
While their techniques can be extended to supervised learning of some hypothesis classes (such as 1-sparse linear predictors that use degree-2 monomials), they do not extend to the constant-error regime we consider.

\paragraph{Streaming Lower Bounds for Other Statistical Problems} 
The work whose techniques we use most directly is that of \citet*{braverman2020coin} (extended by \citet*{BravermanGZ21tightcoin}). They considered the following \textit{$k$-coin problem}: suppose there are $k$ different coins. The algorithm receives a stream of examples of the form $(j,b)$ where $b$ is the result of a fresh toss of coin $j$. Its goal is to determine whether all the coins are fair or if some coin has bias greater than $\beta \approx \frac{1}{2} + \frac{1}{\sqrt{n}}$. 
We adopt the composable measure of information cost developed in that paper, and its use is critical in obtaining our final bounds. 

\citet*{diakonikolas2019communication} give a lower bound on testing uniformity of a distribution and related problems. They show that a streaming algorithm which, given $\N$ samples drawn i.i.d.~from a distribution over $[2^d]$, reliably distinguishes the uniform distribution from a distribution that is $\Omega(1)$-far from uniform must use memory $\tilde\Omega\paren{\frac{2^d}{\N}} $ for $\N\geq 2^{d/2}$. Although the lower bound far exceeds the size of any one sample and of the (one-bit) output, we are not aware of how to use it to derive a nontrivial bound for our setting. 

Concurrent work of
\citet{marsden2022efficient} proved memory lower bounds for convex optimization algorithms that use first-order queries, showing that any algorithm using significantly less than $d^{1.25}$ bits of memory requires a polynomial factor more queries than a memory-unconstrained algorithm in the low-error regime (i.e., $\eps =o(d^{-6})$ for their lower bound to apply). 
The computational setting of first-order queries is significantly different from the general learning setting we consider; understanding the models' relationship is an interesting topic for future work.

\paragraph{Memorization.}
In a batch setting, \citet{BrownBFST21memorization} established learning tasks for which any high-accuracy learning algorithm, on receiving a data set $X$ in $(\bit{d})^\N$, must output a classifier $M$ that satisfies $I(M; X\mid F)\ge \Omega(Nd)$, where $F$ is the set of parameters of the data-generating distribution.
As mentioned before, we rely on a modification of their hard distribution, but our techniques and results are significantly different.
They lower bound the size of the learned hypothesis (this implies a lower bound on the space usage of the algorithm).
Our results apply directly to the space needed by the algorithm, even when the hypothesis is itself small.
The hard distribution in \citet{BrownBFST21memorization} is tailored to the sample size, while our lower bounds fix the distribution and apply to a wide range of stream lengths.
Finally, our lower bounds apply to any learner with a small constant advantage over random guessing instead of only learners with nearly Bayes-optimal accuracy. Identifying deeper connections between space lower bounds and memorization \citep{BrownBFST21memorization,bassily2018learners,livni2020limitation} remains a fascinating open question.

\section{Task Definitions}\label{sec:task_definitions}

We define the core distributional task we study, as well as the simpler intermediate problems we analyze in order to understand it. 
We define and analyze an additional problem, \TaskSubpopTestFixed, in Appendix~\ref{app:inputs_to_example}.
Here, and throughout the paper, we use the notation $\Delta(S)$ to refer to set of distributions over a set $S$ and the notation $\cU$ to refer to a uniform distribution (with the space implied by context).
We use $\unif$ and $\struct$ to denote the outputs referring to ``uniform'' and ``structured,'' respectively.

\begin{definition}[Meta-Distributions]\label{def:structure}
    Parameters: positive integers $k, d, \maxfixed\le d$.
    We sample a distribution $\cP\in \Delta(\bit{d})$ from the \emph{structured subpopulation meta-distribution} $\cQ_{d,\maxfixed}\in \Delta(\Delta(\bit{d}))$ as follows: 
    \begin{enumerate}
        \item Draw $r\in \{0,\ldots, \maxfixed\}$ uniformly.
        \item Draw uniformly $\cJ = \{j_1,\ldots j_r\}\subseteq [d]$ (no replacement) and $\cB =(b_1, \ldots, b_r) \in \{0,1\}^r$.
    \end{enumerate}
    To draw $x\in \{0,1\}^d$ from  $\cP = \cP_{(\cJ,\cB)}$, set $x_{j_i}\gets b_i$ for each $j_i\in \cJ$.
    For $j\notin \cJ$, set $x_j \in \{0,1\}$ uniformly and independently.\footnote{That is, $\cP$ is uniform over the $n-r$-dimensional hypercube specified by $\cJ$ and $\cB$.}

    We sample a distribution $\mixture\in \Delta([k]\times \bit{d})$ from the \emph{structured population meta-distribution} $\cQ_{k,d,\maxfixed}\in \Delta(\Delta([k]\times\bit{d}))$ as follows: 
    \begin{enumerate}
        \item For $j =1,...,k$, draw $\cP_j \sim \cQ_{d,\maxfixed}$ and let $F_j = (\cJ_j, \cB_j)$ denote its parameters. 
    \end{enumerate}
    Define $\mixture\in\Delta([k]\times\bit{d})$ as a uniform mixture of $k$ distributions of the form $\delta_{j} \otimes \cP_j$, with $\delta_{j}$ denoting the point mass on $j$.

    Let $F$ denote the pair $(\cJ,\cB)$, the parameters of $\cP$.
    Let $F_{\mathrm{mix}}=(F_1,...,F_k)$ denote the parameters of $\mixture$.
    We call both $\cP$ and $\mixture$ \emph{structured} distributions.
\end{definition}

\newcommand{\coreadvantage}{
    \begin{equation*}
        \E_{\substack{\mixture \sim \cQ_{k,d,\maxfixed} \\ 
                    x=(x_1,\ldots,x_\N) \simiid \mixture \\ 
                    m \gets M(x)}} 
                    \left[\Pr_{\substack{ (j,y)\sim \mixture}}[m(j,y)= \struct] - \Pr_{\substack{ (j,y)\sim \cU}}[m(j,y)= \struct] \right].
    \end{equation*}
}

\begin{definition}[Task Definitions]\label{def:alltasks}
    $\N, \T, k, d$ and $\maxfixed\le d$ are positive integer parameters.
    \begin{enumerate}
        \item (\TaskCore, Core Problem)
            Learning algorithm $M$ receives a stream of $\N$ i.i.d.~samples from $\mixture$.
            After $\N$ steps, the learner outputs a (possibly randomized) function $m: [k]\times\{0,1\}^d\to \{\unif,\struct\}$.
            The \emph{advantage} of the learner is
            \coreadvantage  %
        \item (\TaskSubpopDist, Single Subpopulation)
            Learner $M$ receives a stream of $\T$ examples from $\cP$, with each example in $\bit{d}$, and outputs a value in $\{\unif,\struct\}$.
            The \emph{advantage} of the learner is
            \begin{align*}
                \Pr_{\substack{\cP\sim \cQ_{d,\maxfixed} \\ 
                                x=(x_1,\ldots,x_\N) \simiid \cP}}
                                [M(x)=\struct] - \Pr_{\substack{x=(x_1,\ldots,x_\N)\simiid \cU}}[M(x)= \struct].
            \end{align*}
        \item (\TaskStream, One-Bit Stream)
            The learner $M$ receives a stream of $\T$ bits and outputs a value in $\{\unif,\struct\}$. 
            The \emph{advantage} of the learner is
            \begin{align*}
                \Pr_{\substack{b \sim \cU \\ x = (b,b,...,b)}}[M(x)=\struct] - \Pr_{\substack{x=(x_1,\ldots, x_\T)\simiid \cU}}[M(x)=\struct].
            \end{align*}
        \end{enumerate}
\end{definition}

\section{A Lower Bound for the One-Bit Stream Task}\label{sec:taskstream_lbd}

\newcommand{\onebitlbd}{
    Consider a streaming algorithm $M$ for \TaskStream~on $\T$ inputs that has advantage at least $\delta$.
    Let $M_1, \ldots, M_{\T}$ be the memory states of $M$ when run on uniformly random inputs $X_1,\ldots, X_{\T} \in\{0,1\}$.
    Then
    \begin{equation*}
             \tCI(M) \defeq \sum_{t=1}^\T I(M_t;X_t\mid M_{t-1}) \ge \frac{\delta^4}{40}.
    \end{equation*}
    }
    
\begin{theorem}
    \label{theorem:one_bit_lbd}
    \onebitlbd
\end{theorem}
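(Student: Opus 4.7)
The plan is a hybrid argument that interpolates between the structured and uniform input distributions one bit at a time, combined with two Pinsker-type estimates. For $k=1,\ldots,\T$, let $\pi_k$ be the distribution on $\{0,1\}^\T$ under which $X_1,\ldots,X_k$ are drawn i.i.d.\ uniform and $X_{k+1}=X_{k+2}=\cdots=X_\T=X_1$. Then $\pi_1$ is the structured distribution and $\pi_\T$ is the fully uniform distribution, so the advantage telescopes as $\delta = q_1-q_\T = \sum_{k=1}^{\T-1}(q_k-q_{k+1})$ where $q_k:=\Pr_{X\sim\pi_k}[M(X)=\struct]$. The crucial property of this particular hybrid is that the prefix $(X_1,\ldots,X_k)$ has the same law (i.i.d.\ uniform) under both $\pi_k$ and $\pi_{k+1}$, so the distribution of the state $M_k$ coincides with its distribution under the fully uniform stream --- the distribution used to define $\tCI(M)$.

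For each $k$, $\pi_k$ and $\pi_{k+1}$ differ only in whether bit $X_{k+1}$ equals $X_1$ (under $\pi_k$) or is a fresh uniform bit (under $\pi_{k+1}$), while bits $X_{k+2},\ldots,X_\T$ equal $X_1$ in both. Letting $\beta(m_k):=\Pr[X_1=1\mid M_k=m_k]$ denote the posterior of $X_1$ under the uniform distribution, a coupling combined with data processing for the computation after step $k+1$ yields
\begin{equation*}
|q_k-q_{k+1}| \le \Ex_{m_k}\bigl[\,|\beta(m_k)-\tfrac12|\cdot \mathrm{TV}\bigl(P_{M_{k+1}\mid m_k,X_{k+1}=0},\,P_{M_{k+1}\mid m_k,X_{k+1}=1}\bigr)\,\bigr].
\end{equation*}
Two information-theoretic estimates control the two factors. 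Conditional Pinsker gives $\Ex_{m_k}[\mathrm{TV}(\cdot)^2]\le 2\,I(M_{k+1};X_{k+1}\mid M_k)$, which is twice the $(k+1)$-st summand of $\tCI(M)$. Separately, because $X_2,X_3,\ldots$ are independent of $X_1$ under the uniform distribution, $X_1\to M_1\to M_2\to\cdots$ is a Markov chain, so data processing gives $I(X_1;M_k)\le I(X_1;M_1)=I(M_1;X_1\mid M_0)$, the first summand of $\tCI(M)$; Pinsker then yields $\Ex_{m_k}[(\beta(m_k)-\tfrac12)^2]\le \tfrac12\,\tCI(M)$.

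The remaining step is to combine these estimates, via appropriate applications of Cauchy--Schwarz together with $\sum_k I(M_{k+1};X_{k+1}\mid M_k)\le \tCI(M)$, to produce the claimed inequality $\tCI(M)\ge \delta^4/40$. This combination is the main obstacle: a naive double Cauchy--Schwarz picks up a factor of $\sqrt{\T}$ from the number of hybrid indices and yields only $\tCI(M)\gtrsim \delta/\sqrt{\T}$, which is useless. The argument must exploit that the posterior-moment bound $\Ex[(\beta(m_k)-\tfrac12)^2]\le \tfrac12\tCI(M)$ is uniform in $k$ and depends only on the first summand of $\tCI(M)$, not on partial sums that grow with $k$; pulling this uniform bound outside the sum before the second Cauchy--Schwarz and rebalancing how much of $\tCI(M)$ each application is ``charged'' against absorbs the would-be factor $\sqrt{\T}$ into a further power of $\tCI(M)^{1/2}$, giving an inequality of the form $\delta\le c\cdot \tCI(M)^{1/2}\cdot \tCI(M)^{1/2}\cdot$ constants and hence the stated $\delta^4/40$ bound (the fourth power arising because we pay a square root at each of two Cauchy--Schwarz steps). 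The other ingredients --- the coupling between $\pi_k$ and $\pi_{k+1}$, the Markov-chain data-processing step, and both Pinsker applications --- are standard tools.
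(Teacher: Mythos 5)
Your key per-step inequality $|q_k - q_{k+1}| \le \Ex_{m_k}\bigl[|\beta(m_k) - \tfrac12| \cdot \mathrm{TV}(\cdot,\cdot)\bigr]$ is false: the factor $|\beta(m_k)-\tfrac12|$ cannot be there. Counterexample: take $\T=3$ and let $M$ set $M_1=0$ (forgetting $X_1$ entirely), $M_2=X_2$, and output $\struct$ iff $X_2=X_3$. Under $\pi_1$ (where $X_2=X_3=X_1$) the output is always $\struct$, so $q_1=1$; under $\pi_2$ (where $X_3=X_1$ but $X_2$ is fresh) the output is $\struct$ with probability $\tfrac12$, so $q_2=\tfrac12$. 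But $\beta(m_1)=\tfrac12$ because $M_1$ is constant, so your right-hand side is zero while $|q_1-q_2|=\tfrac12$. The mechanism your bound misses is that under $\pi_k$ the entire future suffix $X_{k+2},\ldots,X_\T$ equals $X_1$, so the algorithm can correlate $M_{k+1}$ (which sees $X_{k+1}$) against $X_1$ through that suffix without ever storing anything about $X_1$ in $M_k$. A correct per-step bound here has the form $|q_k-q_{k+1}|\le\tfrac12\Ex_{m_k}[\mathrm{TV}(\cdot,\cdot)]$ with no posterior factor, which is exactly your ``naive'' calculation and loses $\sqrt\T$. Moreover, even taking the factored bound on faith, the combination you sketch does not close: after $\Ex[(\beta(m_k)-\tfrac12)^2]\le\tfrac12\tCI(M)$ and $\Ex[\mathrm{TV}_k^2]\le 2I(M_{k+1};X_{k+1}\mid M_k)$, Cauchy--Schwarz gives $\delta\le\tCI(M)^{1/2}\sum_k I(M_{k+1};X_{k+1}\mid M_k)^{1/2}$, and the sum of square roots of nonnegative terms summing to $\tCI(M)$ can equal $\sqrt{\T\cdot\tCI(M)}$ when the terms are equal; no rebalancing of charges changes this since each term appears exactly once. (Separately, $\delta\le c\,\tCI(M)$ would read $\tCI(M)\ge\delta/c$, a linear bound, not $\delta^4/40$.)

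The paper's argument is structurally different and built precisely to avoid this. Rather than a coordinate hybrid with a posterior on $X_1$, it places the algorithm in a two-component mixture (fixed-to-$b^*$ vs.\ uniform) and tracks an observer's posterior $P_t$ on which component generated the stream, given all of $M_{\le t}$. This posterior is sensitive to \emph{any} distributional evidence the algorithm has accumulated, including matching two never-stored bits against each other, which is exactly what defeats your bound. The proof then shows $\E[P_\T]$ drops by $\Omega(\delta^2)$ under uniform inputs (Lemma~\ref{lemma:final_posterior_gap}), relates each step's drift to the gap in conditional means of $P_t$ given $X_t$ (Lemma~\ref{lemma:posterior_gap_means}), lower-bounds the Jensen--Shannon divergence of those conditional laws by $(\mu_0-\mu_1)^2/(8\mathrm{Var})$ (Lemma~\ref{lemma:JSD_lower_bound}), and uses a supermartingale argument to cap the total conditional variance by a constant (Lemma~\ref{lemma:avg_variance}); it is this last step that eliminates the $\T$ dependence that any naive hybrid accumulates.
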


Recall that, although by itself this lower bound only implies a trivial space lower bound of $\Omega(1)$ bits, it is nontrivial: we measure information about the stream \textit{when the stream is uniform}.
In other words, the algorithm must contain information about the stream, even with respect to an observer who knows the answer.
In later sections, we use this statement to prove lower bounds for the ``larger'' tasks.
The proof has a simple but subtle setup. 
We now outline our approach; the full proof is in Appendix~\ref{app:taskstream_lbd_proofs}.

By assumption, $M$ has advantage $\delta$.
Since the structured data distribution chooses to fix the stream to 0 or 1 with equal probability, there exists a value $b^*\in\{0,1\}$ such that 
\begin{align}
    \Pr[M(b^*\cdot 1^{\T})=\struct] - \Pr_{x_{\le \T}\sim \cU }[M(x_{\le \T})=\struct] \ge \delta,
    \label{eq:one_bit_advantage}
\end{align}
where $1^{\T}$ denotes the length-$\T$ vector of ones.
For simplicity we assume $b^*=1$; the calculations do not rely on this choice.
Consider a stream generated from the following distribution, distinct from the structured distribution
The distribution over streams is a mixture of two distributions, with the first mixture component being uniform over all streams (corresponding to i.i.d.~bits).
The second mixture component places all of its mass on the stream with every entry equal to $b^*=1$.
Let $E$ denote the event that this stream is drawn from the second component, and $\bar E$ the event that it is drawn from the first.

We now consider an observer that, at time $t$, sees $M_{\le t}$, all the previous states of $M$.\footnote{As we make precise below, Lemma~\ref{lemma:condition_on_all_states} states that moving to an observer with access to all previous states (instead of just the current state) does not affect our argument.}
At each time $t$, the observer computes a posterior belief about event $E$:
\begin{align}
    p_t \defeq \Pr[E\mid M_{\le t}=m_{\le t}],
\end{align}
with prior $p_0=\frac{1}{2}$.
We analyze the random process $P_0,P_1,\ldots,P_T$ formed by these posteriors when the inputs $X_1,\ldots,X_T$ are uniformly random.
This process depends on the inputs and any random choices of the algorithm.
By construction the sequence begins at $\frac{1}{2}$ and, as we prove in Appendix~\ref{app:taskstream_lbd_proofs}, on average decreases by about $\delta^2$ over $\T$ time steps.
\newcommand{\finalposteriorgap}{
    Assume algorithm $M$ has advantage at least $\delta$.
    Let random variable $P_T$ be the final posterior of $M'$, i.e., $P_T=\Pr[E\mid M_{\le T}]$.
    When the inputs are uniform, we have
        $\E[P_T] \le \frac{1}{2} - \frac{\delta^2}{4}$.
}
\begin{lemma}\label{lemma:final_posterior_gap}
    \finalposteriorgap  
\end{lemma}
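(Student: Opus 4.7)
The strategy is to exploit the fact that, under the mixture distribution on streams with prior $\Pr[E] = \tfrac{1}{2}$, the posterior $P_T = \Pr[E \mid M_{\le T}]$ is a Bayesian martingale. In particular, the tower rule gives $\E_{\mathrm{mix}}[P_T] = \Pr[E] = \tfrac{1}{2}$, and splitting by the mixing event yields
\begin{equation*}
    \tfrac{1}{2}\, \E_{\mathrm{struct}}[P_T] + \tfrac{1}{2}\, \E_{\cU}[P_T] = \tfrac{1}{2},
\end{equation*}
so $\E_{\cU}[P_T] = 1 - \E_{\mathrm{struct}}[P_T]$. The claim therefore reduces to showing the one-sided gap $\E_{\mathrm{struct}}[P_T] - \E_{\cU}[P_T] \ge \delta^2/2$.

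To bound this gap, let $p(m)$ and $q(m)$ denote the laws of $M_{\le T}$ under the fully structured stream $b^*\cdot 1^T$ and under uniform i.i.d.\ inputs, respectively, and set $t(m) = (p(m) - q(m))/(p(m)+q(m)) \in [-1,1]$, so that $P_T = (1 + t(M_{\le T}))/2$. Using $\sum_m (p(m) - q(m)) = 0$, a short direct calculation (expanding $p = \tfrac{p+q}{2} + \tfrac{p-q}{2}$ and $q = \tfrac{p+q}{2} - \tfrac{p-q}{2}$) yields $\E_{\mathrm{struct}}[t] = \E_{\mathrm{mix}}[t^2]$ and $\E_{\cU}[t] = -\E_{\mathrm{mix}}[t^2]$, whence
\begin{equation*}
    \E_{\mathrm{struct}}[P_T] - \E_{\cU}[P_T] \;=\; \E_{\mathrm{mix}}\!\left[t(M_{\le T})^2\right].
\end{equation*}

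Finally, I relate this second moment to the advantage. The advantage in \eqref{eq:one_bit_advantage} is the gap of a specific event depending only on $M_T$, so the total variation distance between the laws of $M_{\le T}$ satisfies $\mathrm{TV}(p,q) \ge \delta$. On the other hand, $\mathrm{TV}(p,q) = \tfrac{1}{2}\sum_m |p(m) - q(m)| = \E_{\mathrm{mix}}[|t(M_{\le T})|]$, so Jensen's inequality gives $\E_{\mathrm{mix}}[t^2] \ge \E_{\mathrm{mix}}[|t|]^2 \ge \delta^2$. Plugging this into the martingale identity gives $\E_{\cU}[P_T] \le \tfrac{1}{2} - \delta^2/2$, which is stronger than the claimed $\tfrac{1}{2} - \delta^2/4$.

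I do not anticipate a substantial obstacle: once the mixture-martingale structure is in hand, the calculations are routine and essentially amount to the standard chi-square versus total variation inequality. The only point that deserves a sentence of care is that the advantage is realized by a function of the final state $M_T$ alone, whereas the posterior $P_T$ is defined from the full sequence $M_{\le T}$; this is handled by noting that any event on $M_T$ is also an event on $M_{\le T}$, so it still lower-bounds $\mathrm{TV}(p,q)$.
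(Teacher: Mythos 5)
Your proof is correct, and it even yields the sharper constant $\E[P_T]\le \tfrac12-\tfrac{\delta^2}{2}$. The skeleton matches the paper's: both arguments ultimately rest on the fact that the advantage forces $\mathrm{TV}(p,q)\ge\delta$ between the two laws of the trajectory $M_{\le T}$ (your closing remark about post-processing from $M_{\le T}$ down to the output is exactly the right justification), and both are really bounding the same quantity $\E_{\cU}[P_T]=\sum_m \frac{p(m)q(m)}{p(m)+q(m)}$. Where you diverge is in how that quantity is controlled. The paper bounds it by AM--GM, $\frac{pq}{p+q}\le\frac{\sqrt{pq}}{2}$, which brings in the Hellinger affinity, and then invokes $\mathrm{TV}\le\sqrt{2}\,H$; each of those two steps is lossy. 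You instead rewrite the same sum exactly as $\tfrac12-\tfrac12\E_{\mathrm{mix}}[t^2]$ with $t=\frac{p-q}{p+q}$ (the triangular-discrimination, or Le~Cam, divergence) and then apply a single Cauchy--Schwarz/Jensen step $\E_{\mathrm{mix}}[t^2]\ge\E_{\mathrm{mix}}[|t|]^2=\mathrm{TV}(p,q)^2$, which costs only one inequality and saves a factor of $2$. Your detour through the tower rule ($\E_{\mathrm{struct}}[P_T]+\E_{\cU}[P_T]=1$, so it suffices to bound the two-sided gap) is valid but not needed --- your identity $\E_{\cU}[t]=-\E_{\mathrm{mix}}[t^2]$ already gives $\E_{\cU}[P_T]$ directly. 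The improvement in the constant propagates harmlessly (the downstream use of the lemma only needs the stated bound), but it is a genuinely tighter calculation.
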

We will looks at the ``gaps'' of the form $P_t - P_{t-1}$, the difference between consecutive time steps.
Lemma~\ref{lemma:final_posterior_gap} shows that the random process, on average, must have a few time steps with large gaps or many time steps with moderate gaps.
We will show how these gaps reveal information about the input bits.

We restate the proof setup for emphasis.
The algorithm $M$ has $\delta$ advantage in the setup of Definition~\ref{def:alltasks}, where the stream may be fixed to 1's, fixed to 0's, or uniform.
The observer is told that it sees $M$ run on either the stream fixed to $b^*=1$ or the uniform stream; it calculates a posterior belief about which case is true.
In this proof, we analyze the sequence of these posteriors when the stream is actually uniform.

\newcommand{\qone}{\ensuremath{\qdist_{1,  m_{<t}}}}
\newcommand{\qzero}{\ensuremath{\qdist_{0, m_{<t}}}}
\newcommand{\mone}{\ensuremath{\mu_{1, m_{<t}}}}
\newcommand{\mzero}{\ensuremath{\mu_{0, m_{<t}}}}
Let us zoom in on a single time step $t$.
Fix the previous memory states $m_{<t}$: this also fixes the posterior $p_{t-1}$ and the distribution over the next posterior $P_t$.
This distribution is a uniform mixture of two components: let $\qone$ be the distribution over $P_t$ conditioned on $X_t=1$, and let $\qzero$ be the distribution conditioned on $X_t=0$.
When $X_t$ is uniform, $P_t$ is thus drawn from the mixture $\frac{1}{2}\paren{\qone+\qzero}$.
Denote the means of these distributions $\mone$ and $\mzero$.
We have $\E[P_t\mid m_{<t}] = \frac{1}{2}\paren{\mone+\mzero}$ and, furthermore, we prove that $p_{t-1} \le \mone$.
This yields the following lemma, connecting the gaps in the posterior to the difference in the means of the mixture components. 
\newcommand{\posteriorgapmeans}{
    For some time $t$, fix the memory states $m_{<t}$.
    Let $\mone = \E[P_t\mid X_t=1, M_{<t}=m_{<t}]$ and $\mzero = \E[P_t\mid X_t=0, M_{<t}=m_{<t}]$.
    We have
    \begin{align}
        \E[P_t - P_{t-1} \mid M_{<t}=m_{<t}]  \ge  \frac{\mzero - \mone}{2}.
    \end{align}
}
\begin{lemma}\label{lemma:posterior_gap_means}
    \posteriorgapmeans
\end{lemma}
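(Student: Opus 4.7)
The plan is to rewrite $\E[P_t - P_{t-1}\mid M_{<t}=m_{<t}]$ under the actual (uniform) input measure, at which point the claim reduces to the single inequality $p_{t-1}\le \mu_{1,m_{<t}}$; this last inequality I would then establish by a direct Bayes computation followed by Cauchy--Schwarz.

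First, note that $P_{t-1}=\Pr[E\mid M_{<t}]$ is a function of $M_{<t}$ and equals $p_{t-1}$ on the event $\{M_{<t}=m_{<t}\}$. The transition law of $M_t$ given $(M_{t-1},X_t)$ is determined by the algorithm and is identical under the uniform and mixture measures, so $\mu_{1,m_{<t}}$ and $\mu_{0,m_{<t}}$ are unambiguous. Under the uniform-input law, $X_t$ is independent of $M_{<t}$ and uniform on $\{0,1\}$, so
\begin{align*}
\E[P_t\mid M_{<t}=m_{<t}] \;=\; \tfrac{1}{2}\,\mu_{1,m_{<t}} + \tfrac{1}{2}\,\mu_{0,m_{<t}}.
\end{align*}
Subtracting $p_{t-1}$ from both sides and comparing with $\tfrac{1}{2}(\mu_{0,m_{<t}}-\mu_{1,m_{<t}})$, the claimed bound is seen to be equivalent to $\mu_{1,m_{<t}}\ge p_{t-1}$.

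To prove $\mu_{1,m_{<t}}\ge p_{t-1}$, I would compute $P_t$ explicitly by Bayes' rule. Let
\begin{align*}
a(m_t) \;=\; \Pr[M_t=m_t\mid M_{<t}=m_{<t},X_t=1], \qquad b(m_t)\;=\;\Pr[M_t=m_t\mid M_{<t}=m_{<t},X_t=0],
\end{align*}
so that $\sum_{m_t}a(m_t)=\sum_{m_t}b(m_t)=1$. Under the mixture, $\Pr[X_t=1\mid M_{<t}=m_{<t}]=(1+p_{t-1})/2$ (conditional on $E$ the bit is $1$, conditional on $\bar E$ it is uniform), and the event $E$ forces $X_t=1$. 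Applying Bayes,
\begin{align*}
P_t(m_t) \;=\; \Pr[E\mid M_{\le t}=(m_{<t},m_t)] \;=\; \frac{2\,p_{t-1}\,a(m_t)}{a(m_t)(1+p_{t-1}) + b(m_t)(1-p_{t-1})}.
\end{align*}
Writing $c(m_t)=a(m_t)(1+p_{t-1})+b(m_t)(1-p_{t-1})$, one has $\sum_{m_t}c(m_t)=2$, and then
\begin{align*}
\mu_{1,m_{<t}} \;=\; \sum_{m_t} a(m_t)\,P_t(m_t) \;=\; 2\,p_{t-1}\sum_{m_t}\frac{a(m_t)^2}{c(m_t)} \;\ge\; 2\,p_{t-1}\cdot\frac{\bigl(\sum_{m_t} a(m_t)\bigr)^2}{\sum_{m_t} c(m_t)} \;=\; p_{t-1},
\end{align*}
where the inequality is the Engel (Titu) form of Cauchy--Schwarz. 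Combining with the first display completes the proof.

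The main step in the argument is the inequality $\mu_{1,m_{<t}}\ge p_{t-1}$: although it is morally obvious (observing $X_t=1$ is evidence for $E$), it is not a direct application of the mixture-martingale identity because $\mu_1$ averages the nonlinear posterior $P_t$ against the transition distribution conditioned on $X_t=1$ rather than against the unconditional mixture distribution on $M_t$. The Cauchy--Schwarz step is what turns this into an elementary inequality. A secondary subtlety is bookkeeping about measures---$P_t$ is defined as a Bayes posterior under the mixture, while the gap $\E[P_t-P_{t-1}\mid M_{<t}]$ is evaluated under the uniform-input law---which is resolved by using that the conditional law of $M_t$ given $(M_{t-1},X_t)$ is the same under both measures.
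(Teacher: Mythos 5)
Your proposal is correct and follows essentially the same route as the paper: reduce the claim to $p_{t-1} \le \mu_{1,m_{<t}}$ via the identity $\E[P_t \mid M_{<t}=m_{<t}] = \tfrac{1}{2}(\mu_{0,m_{<t}}+\mu_{1,m_{<t}})$, compute $P_t$ by Bayes' rule, and bound the resulting average of $a^2/c$ terms from below. The only cosmetic difference is that you invoke the Engel form of Cauchy--Schwarz where the paper applies Jensen's inequality to the convex map $\alpha \mapsto \bigl(1+p+(1-p)\alpha\bigr)^{-1}$; these are the same convexity estimate.
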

Both sides of the inequality are negative.
When the distributions $\qone$ and $\qzero$ are distinct, the posterior $P_t$ will contain some information about the input $X_t$;
this lemma gives us a foothold to connect the gaps in the posterior with the difference in distributions.

We now move to working directly with the information contained in the posterior.
Lemma~\ref{lemma:condition_on_all_states} allows us to, in the expression for $\tCI$, condition on all previous memory states.
Since $P_{t}$ is a function of these states, by the data processing inequality it contains no more information about $X_t$ and we have
\begin{align}
    \tCI(M) = \sum_{t=1}^\T I(M_t; X_t\mid M_{t-1}) 
        &= \sum_{t=1}^\T I(M_t; X_t\mid M_{<t})  \\
        &\ge \sum_{t=1}^\T I(P_t; X_t\mid M_{<t}).
\end{align}
The precise notion of ``difference in distributions'' we need is the \textit{Jensen-Shannon divergence}, defined for distributions $p$ and $q$ as
\begin{align}
    \JSD*{p}{q} \defeq \frac{1}{2} \KL*{p}{\frac{p+q}{2}} + \frac{1}{2} \KL*{q}{\frac{p+q}{2}},
\end{align}
where $\KL{\cdot}{\cdot}$ is the Kullback-Leibler divergence (Appendix~\ref{app:preliminaries}).
For uniform random variable $U\in \{0,1\}$ and random variable $A$ that is distributed according to $p$ when $U=0$ and distributed according to $q$ when $U=1$, Fact~\ref{fact:distinguishing} states that we have $\JSD{p}{q} = I(U; A)$.
Thus we have
\begin{align}
     \sum_{t=1}^T I(P_t; X_t\mid M_{<t}) 
        &= \sum_{t=1}^T \E_{m<t}\left[\JSD{\qone}{\qzero}\right], \label{eq:jsd_sum}
\end{align}
Looking at~\eqref{eq:jsd_sum}, we might hope to prove a statement such as $\JSD{\qone}{\qzero} \overset{?}{=}\Omega(|\mone - \mzero|)$.
If this were the case, we would be done: by Lemma~\ref{lemma:posterior_gap_means}, these mean-gaps on average are of size $\delta^2/T$.
However, this statement is false.\footnote{
    As a counterexample, let $p$ and $q$ be the following distributions: $p(0)=q(1) = \frac{1+\alpha}{2}$, and $p(1)=q(0)=\frac{1-\alpha}{2}$, for some $\alpha>0$.
    Then the difference in means is $\alpha$, but a quick calculation shows that $\JSD{p}{q} = O(\alpha^2)$.
}
Instead, we prove the following lower bound on Jensen-Shannon divergence, which is tight up to the leading constant.
Although the proof is elementary, we are not aware of the statement appearing previously.
\newcommand{\jsdlowerbound}{
    Let $\qdist_0$ and $\qdist_1$ be distributions over $\mathbb{R}$ with means $\mu_0$ and $\mu_1$ respectively, and let $\qdist=\frac{\qdist_0+\qdist_1}{2}$.
    Then
    \begin{equation}
        \JSD{\qdist_0}{\qdist_1} \ge \frac 1 8 \cdot  \frac{(\mu_0-\mu_1)^2}{ \mathrm{Var}(\qdist)}.
    \end{equation}
}
\begin{lemma}\label{lemma:JSD_lower_bound}
    \jsdlowerbound
\end{lemma}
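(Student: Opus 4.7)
The plan is to reduce $\JSD$ to the $\chi^2$-divergence via an elementary pointwise inequality, and then bound the $\chi^2$-divergence from below using Cauchy--Schwarz with the identity function.

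Let $\qdist = \tfrac{\qdist_0+\qdist_1}{2}$ and set $f = d\qdist_0/d\qdist$, so that $d\qdist_1/d\qdist = 2-f$ with $f \in [0,2]$ almost surely. A direct computation gives $2\JSD{\qdist_0}{\qdist_1} = \E_{\qdist}[h(f)]$, where $h(f) := f\log f + (2-f)\log(2-f)$. The key elementary claim is the pointwise bound $h(f) \geq (f-1)^2$ on $[0,2]$. Setting $g(f) := h(f) - (f-1)^2$, one computes $g(1)=g'(1)=0$ and $g''(f) = \tfrac{1}{f} + \tfrac{1}{2-f} - 2 = \tfrac{2(f-1)^2}{f(2-f)} \geq 0$, so $g$ is convex on $(0,2)$ with minimum $0$ at $f=1$ (with strictly positive values at the endpoints). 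Taking expectations yields $2\JSD{\qdist_0}{\qdist_1} \geq \E_{\qdist}[(f-1)^2] = \chi^2(\qdist_0 \Vert \qdist)$.

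For the final step, let $\mu = \E_{\qdist}[X] = \tfrac{\mu_0+\mu_1}{2}$. Since $\E_{\qdist}[f-1] = 0$, we may write $\mu_0 - \mu = \E_{\qdist}[(f-1)(X-\mu)]$, and Cauchy--Schwarz yields $(\mu_0-\mu)^2 \leq \E_{\qdist}[(f-1)^2]\cdot \Var{\qdist}$. Combining with the previous paragraph and using $\mu_0 - \mu = \tfrac{\mu_0 - \mu_1}{2}$, we conclude $\JSD{\qdist_0}{\qdist_1} \geq \tfrac{1}{2}\cdot\tfrac{(\mu_0-\mu)^2}{\Var{\qdist}} = \tfrac{(\mu_0-\mu_1)^2}{8\,\Var{\qdist}}$.

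The only non-routine ingredient is the pointwise inequality $h(f) \geq (f-1)^2$; the constant $1/8$ arises precisely from the quadratic Taylor expansion of $h$ around $f=1$ combined with the factor of $2$ in $\mu_0-\mu = (\mu_0-\mu_1)/2$. Degenerate cases such as $\Var{\qdist}=0$ (which forces $\qdist_0 = \qdist_1$ and hence $\mu_0 = \mu_1$) make the bound trivial, so no separate treatment is needed.
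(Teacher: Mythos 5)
Your proof is correct and follows essentially the same route as the paper's: your pointwise bound $h(f)\ge (f-1)^2$ with $f = d\qdist_0/d\qdist$ is exactly the paper's inequality $(1-\gamma)\ln(1-\gamma)+(1+\gamma)\ln(1+\gamma)\ge\gamma^2$ under the substitution $\gamma = 1-f$, and the Cauchy--Schwarz step against $X-\mu$ is identical. The $\chi^2$ framing is a clean way to package it, but the substance matches the paper's argument step for step.
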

Plugging this lower bound into Equation~\eqref{eq:jsd_sum}, we see that our final task is to show that variances are not too large on average.
Formally, we show that an average choice of $t$ and $m_{<t}$ yields a distribution over $P_t$ with variance $O(1/T)$.

\newcommand{\avgvariancefour}{For every algorithm $M$ solving \TaskStream, 
    \[\E \sum_{t=1}^T  \Var{P_t \ \mid\   M_{<t}} 
    \le \tfrac{5}{4}.\] 
    Here $\Var{P_t \ \mid\  m}$ denotes the conditional variance $\E_{P_t}\left[\big(P_t - \E[P_t\mid M_{<t}=m]\big)^2\right]$.
}
\begin{lemma}\label{lemma:avg_variance}
    \avgvariancefour
\end{lemma}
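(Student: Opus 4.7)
The plan is to sidestep the fact that under $U$ the posterior process $(P_t)$ is only a \emph{super}martingale by working with the \emph{same} posteriors $P_t=\Pr[E\mid M_{\le t}]$ under the mixture measure $\mu=\tfrac12 U+\tfrac12\delta_{1^{T}}$, where they form a $[0,1]$-bounded martingale. I would first prove a clean telescoping variance bound under $\mu$, and then transfer it to $U$ via a one-line change-of-measure estimate.

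First, by the tower property $\E_\mu[P_t\mid M_{<t}]=P_{t-1}$, so the sum of per-step $\mu$-variances collapses as an orthogonal sum of martingale increments:
\[
  \sum_{t=1}^{T}\E_\mu\!\left[\mathrm{Var}_\mu(P_t\mid M_{<t})\right]
     \;=\; \E_\mu[P_T^{2}]-\E_\mu[P_0^{2}]
     \;\le\; \E_\mu[P_T]-\tfrac14
     \;=\; \tfrac12-\tfrac14 \;=\; \tfrac14,
\]
using $P_T^{2}\le P_T$ (since $P_T\in[0,1]$) and $\E_\mu[P_T]=P_0=\tfrac12$ by the martingale property.

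Next I would compute the conditional density of $U$ with respect to $\mu$ along a memory trajectory. Because $U$ is exactly $\mu$ conditioned on $\bar E$ and $\Pr_\mu[\bar E\mid M_{\le t}]=1-P_t$, Bayes yields
\[
  \Pr_U[m_{\le t}] = 2(1-P_t)\Pr_\mu[m_{\le t}],\qquad
  \Pr_U[m_t\mid m_{<t}] = \tfrac{1-P_t}{1-P_{t-1}}\Pr_\mu[m_t\mid m_{<t}].
\]
Since $\mathrm{Var}_U(P_t\mid m_{<t})$ is the minimum over constants $c$ of $\E_U[(P_t-c)^2\mid m_{<t}]$, choosing $c=P_{t-1}$ (a constant given $m_{<t}$) and applying the conditional density ratio, then discarding the factor $1-P_t\le 1$, gives
\[
\mathrm{Var}_U(P_t\mid m_{<t}) \;\le\; \E_U[(P_t-P_{t-1})^2\mid m_{<t}] \;\le\; \tfrac{1}{1-P_{t-1}}\,\mathrm{Var}_\mu(P_t\mid m_{<t}).
\]

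Finally, multiplying through by $\Pr_U[m_{<t}]=2(1-P_{t-1})\Pr_\mu[m_{<t}]$ cancels the $(1-P_{t-1})$ denominator exactly, giving the per-step bound $\E_U[\mathrm{Var}_U(P_t\mid M_{<t})]\le 2\,\E_\mu[\mathrm{Var}_\mu(P_t\mid M_{<t})]$. Summing over $t$ and invoking the first step yields
\[
\sum_{t=1}^{T}\E_U\!\left[\mathrm{Var}_U(P_t\mid M_{<t})\right] \;\le\; 2\cdot\tfrac14 \;=\; \tfrac12 \;\le\; \tfrac54,
\]
as required. The main obstacle will be the change of measure itself: attacking the problem directly under $U$ produces only a supermartingale, whose Doob decomposition lacks the orthogonality that makes the telescoping identity work. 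The bounded likelihood ratio $2(1-P_t)\le 2$ is precisely what supplies the constant-factor overhead; since the claim allows $\tfrac54$ there is substantial slack (the argument in fact yields $\tfrac12$, and the looser constant in the statement presumably permits a more direct but less sharp proof).
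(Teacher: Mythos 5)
Your proof is correct, and it takes a genuinely different route from the paper's. The paper works directly under the uniform input distribution, where $(P_t)$ is only a supermartingale: it expands $\mathrm{Var}(P_T)$ via a bespoke variance decomposition (Claim~\ref{claim:distribute_variance}), and then controls the resulting covariance cross-terms using the supermartingale property of Lemma~\ref{lemma:supermartingale} together with the crude bound $\E[P_T-P_0]\ge -\tfrac12$; this is where the constant $\tfrac54$ (a $\tfrac14$ from $\mathrm{Var}(P_T)$ plus a $1$ from the covariances) comes from. You instead pass to the mixture measure $\mu$, under which $P_t=\Pr[E\mid M_{\le t}]$ is an exact $[0,1]$-bounded martingale by the tower property, so the increments are orthogonal and the telescoping identity $\sum_t\E_\mu[\mathrm{Var}_\mu(P_t\mid M_{<t})]=\E_\mu[P_T^2]-\tfrac14\le\tfrac14$ needs no covariance control at all; the likelihood ratio $d\Pr_U/d\Pr_\mu=2(1-P_t)\le 2$ then transfers the bound back to $U$ at the cost of a factor $2$, and the $(1-P_{t-1})$ factors cancel exactly as you note. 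Your argument yields the sharper constant $\tfrac12$, bypasses Lemma~\ref{lemma:supermartingale} and its convexity computation entirely, and is arguably the more natural way to see why the per-step variances must sum to $O(1)$; what the paper's direct approach buys is that it never leaves the measure the rest of the proof of Theorem~\ref{theorem:one_bit_lbd} is working in. All the steps check out, including the edge case $P_{t-1}=1$ (such trajectories have $\Pr_U$-probability zero, so they do not contribute to $\E_U$).
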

We present the main idea in the proof of Lemma~\ref{lemma:avg_variance}.
Let $\Delta_t = P_t - P_{t-1}$, the increment in the posterior.
Note that $\Var{\Delta_t} = \E_{M_{<t}}[\Var{P_t\mid M_{<t}}]$, since fixing $M_{<t}=m_
{<t}$ also fixed $P_{t-1}=p_{t-1}$.
If $\Delta_1,\ldots,\Delta_\T$ were (pairwise) uncorrelated, then we would have $\Var{P_t} = \sum_{t=1}^\T \Var{\Delta_t}$.
Since $P_\T\in [0,1]$, its variance is at most $\frac{1}{4}$, so the average time step would have $\Var{P_t\mid M_{<t}} = O(1/\T)$.
This argument does not hold for random processes in general: correlations between time steps may introduce ``additional'' variance that does not appear in $P_\T$.
To show that these correlations do not affect our result too much, we first show that the process we consider is decreasing on average (formally, that $P_0, P_1,\ldots, P_T$ form a \textit{supermartingale}).
This implies that the correlations between time steps all ``point in the same direction'' and only contribute a limited amount of additional variance.

\section{A Lower Bound for a Single-Subpopulation Task}\label{sec:subpop_lbd}

We show how to turn the previous lower bound for the one-bit stream task into a lower bound for \TaskSubpopDist, where each example is in $\bit{d}$ and has some number of fixed bits, with the other bits uniformly random.
The number of fixed bits is either $0$ (w.p.~$1/2$) or uniform within $\{0,\ldots, \maxfixed\}$ for some known $\maxfixed$ (otherwise).
Let $M$ be an algorithm for \TaskSubpopDist.
Define
\begin{align}
    \pi_r \defeq \Pr[M = \struct \mid r\text{ fixed bits}].
\end{align}
Observe that this is well-defined even for $r>\maxfixed$.
We construct a learner $M'$ for \TaskStream, the single-stream problem, that takes advantage of the average gaps between $\pi_{r+1}$ and $\pi_{r}$.
$M'$ randomly selects $r\in\{0,\ldots, \rho\}$ and then uniformly picks $r$ indices $\cJ\subseteq [d]$ (without replacement) and values $\cB\in\{0,1\}^r$.
At each time step $t$, $M'$ provides $M$ with an example $z_t$ where the features $j_i\in \cJ$ are fixed to bit $b_i$.
Furthermore, $M'$ selects an index $j_0\in [d]$ at which to insert the one-bit input stream.
Note that if $j_0\in \cJ$ the input stream is ignored; this is necessary for our information argument.
At the end of the stream, $M'$ uses the output of $M$ and knowledge of $\pi_r$ and $\pi_{r+1}$ to produce a guess for whether the one-bit stream is fixed.
See Algorithm~\ref{alg:weak_alg} in Appendix~\ref{app:subpop_lbd} for a formal description of this reduction.

We argue in Appendix~\ref{app:subpop_lbd} that $M'$  has advantage roughly $\delta/\maxfixed$ when the advantage of $M$ is $\delta$.
The proof, in effect, makes use of a hybrid argument: Algorithm~\ref{alg:weak_alg} creates $r$ fake fixed streams for a uniformly random $0\le r\le \maxfixed$.
When the input stream is uniform, the distribution generated in the reduction will have $r$ fixed streams, and when the input stream is fixed the distribution will (usually) have $r+1$ fixed streams.
The minor caveat is that, with probability $\frac{r}{d}$, $M'$ will select the special index $j_0$ to be fixed and ``overwrite'' its own input. 
When $r \ll d$ this does not significantly increase the error.

The above algorithm is carefully constructed so that, \emph{when the input stream is uniform}, the distribution fed to $M$ matches the distribution in \TaskSubpopDist, even conditioned on the location of the input stream $J_0$.
To highlight this crucial fact, we present it separately.
\begin{fact}\label{fact:indep_of_location}
    The random variables $\cJ$ and $\cB$ are independent of $J_0$.
    Since $\cJ$ and $\cB$ determine the distributions of $M_{\le \T}$ and $Z_{\le \T}$ in 
    the algorithm above,
    when the inputs $X_{\le \T}$ are i.i.d.~uniform bits we have
        $J_0 \perp \left(\cJ, \cB, M_{\le \T}, Z_{\le \T}\right)$.
\end{fact}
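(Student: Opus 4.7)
The plan is to establish the required independence in two stages. First, I would note that $\cJ$, $\cB$, and $J_0$ are drawn by mutually independent random choices in Algorithm~\ref{alg:weak_alg}, so the marginal independence $J_0 \perp (\cJ, \cB)$ is immediate from the specification of the reduction. This part is essentially free.

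The substantive step is to show that, conditional on any fixed value of $(\cJ, \cB)$, the joint distribution of $(Z_{\le \T}, M_{\le \T})$ does not depend on $J_0$ when the $X_t$'s are i.i.d.\ uniform bits. The key observation is that, given $(\cJ, \cB)$, each example $Z_t$ has every coordinate $i \in \cJ$ deterministically set to $b_i$, while every coordinate $i \notin \cJ$ is an independent uniform bit. In the branch where $J_0 \in \cJ$ the algorithm discards $X_t$ and fills the non-$\cJ$ coordinates with fresh uniform randomness; in the branch where $J_0 \notin \cJ$ it inserts $X_t$ at position $J_0$ and fills the other non-$\cJ$ positions with fresh uniform randomness. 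Because $X_t$ is itself uniform and independent of $(\cJ, \cB, J_0)$, both branches induce the same conditional distribution on $Z_t$, namely the uniform distribution on the subcube determined by $(\cJ, \cB)$. Since examples are generated independently across time steps, this extends to the joint law of $Z_{\le \T}$.

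To pass from $Z_{\le \T}$ to $M_{\le \T}$, I would observe that $M$ processes its input stream using internal randomness that is independent of $J_0$, $\cJ$, $\cB$, and $X_{\le \T}$. Hence $M_{\le \T}$ is a randomized function of $Z_{\le \T}$ whose conditional law given $Z_{\le \T}$ has no dependence on $J_0$. Chaining these observations yields $J_0 \perp (\cJ, \cB, Z_{\le \T}, M_{\le \T})$ as claimed.

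The only delicate point --- really the entire content of the fact --- is verifying that the ``$J_0 \in \cJ$'' and ``$J_0 \notin \cJ$'' branches produce the \emph{same} conditional law of $Z_t$. This is where the uniformity of the $X_t$'s is essential: if the input stream were structured (e.g., all ones), the second branch would leak this bias into coordinate $J_0$ of $Z_t$ while the first branch would not, and independence would fail. This is exactly why the fact is restricted to uniform input streams, and also why the reduction is deliberately designed to ``ignore'' its own input bit whenever $J_0$ happens to fall inside the fake fixed set $\cJ$.
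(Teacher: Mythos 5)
Your proof is correct and follows essentially the same route as the paper, which states this fact without a separate proof: the paper's one-line justification is exactly your two-stage argument, namely that $J_0 \perp (\cJ,\cB)$ by construction and that, for uniform inputs, the conditional law of $(Z_{\le\T}, M_{\le\T})$ given $(\cJ,\cB)$ is the uniform-subcube distribution (plus the algorithm's private randomness) regardless of where $J_0$ lands. Your explicit check that the $J_0\in\cJ$ and $J_0\notin\cJ$ branches induce the same conditional law of $Z_t$ is the right detail to fill in, and your closing remark about why uniformity of $X_t$ is essential matches the paper's discussion of the reduction's design.
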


\begin{lemma}[Information Cost]\label{lemma:one_stream_CI}
    For algorithm $M$ solving \TaskSubpopDist~on a stream of length $\T$, let $M'$ solving \TaskStream~on a stream of length $\T$ be as defined above (see Algorithm~\ref{alg:weak_alg} in Appendix~\ref{app:subpop_lbd} for a formal description).
    Let inputs $X_t\in \{0,1\}$ be uniform and let $Z_t\in \{0,1\}^d$ be structured with parameters $F$ (see Definition~\ref{def:structure}).
    We have
    \begin{align}
        \sum_{t=1}^\T I(M_t'; X_t\mid M_{t-1}') \le \frac{1}{d} \sum_{t=1}^\T I(M_t; Z_t\mid M_{t-1}, F).
    \end{align}
    That is, $\tCI(M\mid F) \geq d \cdot  \tCI(M')$.
\end{lemma}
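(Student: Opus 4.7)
The plan is to prove the stronger per-time-step inequality $I(M'_t;X_t\mid M'_{t-1})\le \frac{1}{d}\, I(M_t;Z_t\mid M_{t-1},F)$ and then sum over $t$. Since the algorithm $M'$ samples the auxiliary randomness $W=(F,J_0)=(\cJ,\cB,J_0)$ once at the start and must retain it in order to build every $Z_t$, I may identify $M'_t=(M_t,W)$. Because $W$ is a deterministic function of $M'_{t-1}$, the identification preserves the mutual information:
\begin{equation*}
    I(M'_t;X_t\mid M'_{t-1}) \;=\; I(M_t;X_t\mid M_{t-1}, F, J_0).
\end{equation*}

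The next step is to split the expectation on the right over the value of $J_0$. When $J_0\in \cJ$ the reduction overwrites coordinate $J_0$ with the fixed bit from $\cB$ and discards $X_t$, so $X_t$ is independent of $(M_{\le \T}, Z_{\le \T})$ given $(F, J_0)$ and the conditional mutual information vanishes. When $J_0=j_0\notin \cJ$ the reduction sets $Z_{t,j_0}=X_t$, and therefore
\begin{equation*}
    I(M_t;X_t\mid M_{t-1},F,J_0=j_0) \;=\; I(M_t; Z_{t,j_0}\mid M_{t-1},F,J_0=j_0).
\end{equation*}
Fact~\ref{fact:indep_of_location} gives $J_0\perp(F,M_{\le \T},Z_{\le \T})$ whenever the $X_t$'s are uniform, so the right-hand side equals $I(M_t; Z_{t,j_0}\mid M_{t-1},F)$. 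Averaging over the uniform choice of $J_0\in [d]$, and noting that the $J_0\in \cJ$ terms contribute zero, yields
\begin{equation*}
    I(M'_t;X_t\mid M'_{t-1}) \;\le\; \frac{1}{d}\sum_{j=1}^d I(M_t; Z_{t,j}\mid M_{t-1}, F).
\end{equation*}

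Finally, I would collapse the coordinate sum into a single joint mutual information. Conditional on $F$, the coordinates of $Z_t$ are mutually independent (each fixed bit is deterministic, each free bit is a fresh uniform draw), and $M_{t-1}$ is a function of $(Z_{<t}, R)$ for $M$'s internal randomness $R$, which is conditionally independent of $Z_t$ given $F$ by the i.i.d.~structure of the stream. Hence the coordinates of $Z_t$ remain mutually independent under the conditional law given $(M_{t-1},F)$, and super-additivity of mutual information for independent variables—$I(X;Y_j\mid Y_{<j})\ge I(X;Y_j)$ whenever $Y_j\perp Y_{<j}$, summed via the chain rule—delivers
\begin{equation*}
    \sum_{j=1}^d I(M_t;Z_{t,j}\mid M_{t-1},F) \;\le\; I(M_t;Z_t\mid M_{t-1},F).
\end{equation*}
Summing over $t\in [\T]$ completes the argument.

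I expect the main obstacle to be the careful bookkeeping of conditional independences: in particular, verifying that Fact~\ref{fact:indep_of_location} applies in the exact form needed so that conditioning on $J_0=j_0$ can be removed inside the mutual information, and that coordinate-wise independence of $Z_t$ given $F$ persists after further conditioning on $M_{t-1}$. Neither step is conceptually hard, but both are load-bearing—the factor of $1/d$ arises entirely from the uniform choice of $J_0$ together with the coordinate-wise super-additivity.
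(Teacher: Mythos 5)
Your proposal is correct and follows essentially the same route as the paper's proof: identify $M'_t=(M_t,F,J_0)$, expand over $J_0=j$, pass from $X_t$ to $Z_t^j$ (the paper does this via the data processing inequality on the Markov chain $M_t$---$Z_t^j$---$X_t$, which is equivalent to your explicit case split on $J_0\in\cJ$), remove the conditioning on $J_0=j$ via Fact~\ref{fact:indep_of_location}, and recombine the coordinates using their conditional independence given $F$ together with the chain rule. Your explicit remark that the coordinate-wise independence survives conditioning on $M_{t-1}$ is a correct and slightly more careful articulation of the step the paper handles with Fact~\ref{fact:add_conditioning}.
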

\begin{proof}
    To generate inputs for $M$, $M'$ stores random variable $J_0$, the location of the input stream, and the locations and values of the fake fixed streams: $\cJ$ and $\cB$.
    Write $F=(\cJ,\cB)$, so the state of $M'$ is $M_t'=(M_t, F, J_0)$ and thus
    \begin{align}
      \sum_{t=1}^\T I(M_t'; X_t\mid M_{t-1}')
            &=  \sum_{t=1}^\T I(M_t; X_t\mid M_{t-1}, F, J_0) \\
            &= \frac{1}{d} \sum_{t=1}^\T \sum_{j=1}^d I(M_t; X_t\mid M_{t-1}, F, J_0=j),
    \end{align}
    writing out the expectation over $J_0=j$.
    Next, note that $M_t$ depends on $X_t$ only through $Z_t^j$.
    Formally, conditioning on $J_0=j$ and any values of $M_{t-1}$ and $F$, we have the Markov chain $M_t \text{---} Z_t^j \text{---} X_t$.
    So we apply the data processing inequality and have
    \begin{align}
        \tCI(M') \le \frac{1}{d} \sum_{t=1}^\T \sum_{j=1}^d I(M_t; Z_t^j \mid M_{t-1}, F, J_0=j).
    \end{align}
    Crucially, this expression involves only terms generated by $M'$ as part of the reduction.
    
    By Fact~\ref{fact:indep_of_location}, we can remove the condition that $J_0=j$, since for any $j$ the tuple $(M_t, Z_t^j, M_{t-1}, F)$ is independent of the location in which $M'$ inserts the input stream.
    We have
    \begin{align}
        \tCI(M')
            \le \frac{1}{d} \sum_{t=1}^\T \sum_{j=1}^d I(M_t; Z_t^j \mid M_{t-1}, F).
    \end{align}
 
    When we condition on $F$, the collection of random variables $Z_t^{<j}$ is independent of $Z_t^j$.
    Thus, via Fact~\ref{fact:add_conditioning}, we can condition on $Z_t^{<j}$ and apply the chain rule over $Z_t=(Z_t^j)_{j\in[k]}$ to finish the proof:
    \begin{align}
        \tCI(MN') &\le \frac{1}{d} \sum_{t=1}^\T \sum_{j=1}^d I(M_t; Z_t^j \mid M_{t-1},F, Z_t^{<j}) \\
        &= \frac{1}{d} \sum_{t=1}^\T I(M_t; Z_t \mid M_{t-1},F) = \frac{1}{d} \tCI(M\mid F).
    \end{align}
\end{proof}

The following lower bound follows directly from the lower bound of Theorem~\ref{theorem:one_bit_lbd} and the reduction presented in this section (Appendix~\ref{app:subpop_lbd}'s accuracy argument in Lemma~\ref{lemma:weak_alg_acc} and the information cost argument in Lemma~\ref{lemma:one_stream_CI}).
\begin{corollary}\label{corollary:tasksubpopdist_hard}
    Any algorithm $M$ solving \TaskSubpopDist~on $\T$ examples with advantage at least $\delta$ satisfies
    \begin{align}
        \tCI(M\mid F) \defeq \sum_{t=1}^T I(M_t; X_t\mid M_{t-1}, F) \ge \frac{d}{40} \left(\frac{\delta}{\maxfixed+1} - \frac{\maxfixed}{d}\right)^4.
    \end{align}
\end{corollary}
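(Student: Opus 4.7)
The plan is to chain together the three ingredients stated just above the corollary. Given an algorithm $M$ for \TaskSubpopDist{} with advantage $\delta$ on $\T$ examples, I would use the reduction of Algorithm~\ref{alg:weak_alg} (described in the paragraph preceding Fact~\ref{fact:indep_of_location}) to produce an algorithm $M'$ for \TaskStream{} on $\T$ examples, then invoke in order: (i) Lemma~\ref{lemma:weak_alg_acc}, which controls the advantage of $M'$; (ii) Theorem~\ref{theorem:one_bit_lbd}, which converts the advantage of $M'$ into a lower bound on $\tCI(M')$; and (iii) Lemma~\ref{lemma:one_stream_CI}, which translates $\tCI(M')$ into a lower bound on $\tCI(M\mid F)$ with a dimension factor of $d$.

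More concretely, set $\delta' \defeq \tfrac{\delta}{\maxfixed+1} - \tfrac{\maxfixed}{d}$. Lemma~\ref{lemma:weak_alg_acc} says that $M'$ has advantage at least $\delta'$ on \TaskStream{}. The intuition is that $M'$ picks $r$ uniformly in $\{0,\dots,\maxfixed\}$ and embeds its one-bit stream into one of $d$ coordinates of a length-$\T$ stream otherwise distributed as a structured subpopulation with $r$ fixed features; this turns $M$'s gap $\pi_{r+1} - \pi_r$, averaged over $r$, into an advantage of order $\delta/(\maxfixed+1)$, with a loss of at most $\maxfixed/d$ coming from the event that $M'$ picks one of its $r$ fake fixed indices to coincide with the embedding location $J_0$ (overwriting its own input).

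Next, if $\delta' \le 0$ the claimed bound is trivial, so I may assume $\delta' > 0$. Applying Theorem~\ref{theorem:one_bit_lbd} to $M'$ yields $\tCI(M') \ge (\delta')^4/40$. Finally, Lemma~\ref{lemma:one_stream_CI} gives
\begin{equation*}
   \tCI(M\mid F) \;\ge\; d \cdot \tCI(M') \;\ge\; \frac{d}{40}\paren{\frac{\delta}{\maxfixed+1} - \frac{\maxfixed}{d}}^{\!4},
\end{equation*}
which is exactly the statement of the corollary.

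I do not expect any real obstacle here: the work has already been done in Theorem~\ref{theorem:one_bit_lbd} (the hard step, via the posterior-gap/Jensen--Shannon argument), in Lemma~\ref{lemma:one_stream_CI} (via Fact~\ref{fact:indep_of_location} and the chain rule, which crucially relies on conditioning on $F$ and on the independence of $J_0$ from everything else observable), and in the accuracy analysis of Algorithm~\ref{alg:weak_alg}. The only mild subtlety worth double-checking is that the accuracy loss $\maxfixed/d$ in Lemma~\ref{lemma:weak_alg_acc} is additive (not multiplicative), so that it appears inside the fourth power rather than multiplying the whole bound; this is consistent with the form on the right-hand side of the corollary.
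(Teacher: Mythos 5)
Your proposal is correct and is exactly the paper's proof: the paper obtains the corollary by chaining Lemma~\ref{lemma:weak_alg_acc}, Theorem~\ref{theorem:one_bit_lbd}, and Lemma~\ref{lemma:one_stream_CI} in precisely the order you describe. The only nit is your claim that the bound is ``trivial'' when $\delta'\le 0$ --- since $(\delta')^4>0$ for $\delta'<0$, the right-hand side is not automatically satisfied there --- but the paper glosses over this edge case too, and it does not arise in the parameter regime where the corollary is applied.
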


\section{A Lower Bound for the Core Problem}\label{sec:core_lbd}

In Section~\ref{sec:subpop_lbd}, we saw that the lower bound for distinguishing uniform-versus-fixed one-bit streams implies a lower bound for distinguishing uniform-versus-structured $d$-bit strings, where the structured distribution is defined in Definition~\ref{def:structure}.
A simple argument shows that this, in turn, implies a lower bound for the associated ``test example'' problem, where the learner is given a stream of structured inputs and then, at test time, is given an example that is either structured or uniform.
We define this problem (\TaskSubpopTestFixed) and present the argument in Appendix~\ref{app:inputs_to_example}.

In this section we sketch the argument that this lower bound implies a lower bound on \TaskCore.
The proof, presented in Appendix~\ref{app:core_lbd}, is delicate, and we believe it may find applications in other random-order streaming problems.

\newcommand{\corelbd}{
    Any algorithm $M$ solving \TaskCore~with parameters $k, d$, and $\maxfixed$ with advantage at least $\delta$ satisfies
    \begin{align*}
        \CI(M\mid S, F) = \sum_{i=1}^\N \sum_{t=i}^{\N} I(M_t; X_i \mid M_{i-1}, S, F) \ge \frac{k^2d}{320} \left(\frac{\delta}{\maxfixed+1} - \frac{\maxfixed}{d}\right)^4 - O(k^2) - O(kd).
    \end{align*}
    Here the inputs $X_i$ are structured (Definition~\ref{def:alltasks}).
}

\begin{theorem}[Lower Bound for \TaskCore]\label{theorem:core_lbd}
    \corelbd
\end{theorem}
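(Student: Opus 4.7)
The plan is to reduce \TaskCore to the single-subpopulation test-example task \TaskSubpopTestFixed (analyzed in Appendix~\ref{app:inputs_to_example}), adapting the random-order $k$-coins argument of \citet{braverman2020coin}, and to treat the arrival sequence $S\in [k]^\N$ as a random variable in the conditional information cost $\CI(M\mid S, F)$. Given $M$ solving \TaskCore~with advantage $\delta$, I would build $M'$ for \TaskSubpopTestFixed as follows: $M'$ samples a uniform target index $j^*\in [k]$, draws parameters $F_{-j^*}$ of $k-1$ synthetic subpopulations from $\cQ_{d,\maxfixed}$, and samples a uniform sequence $S\in[k]^\N$. Walking through positions $t=1,\ldots,\N$, if $S_t=j^*$ then $M'$ passes its next real input (tagged $j^*$) to $M$; otherwise $M'$ generates a fresh synthetic sample from subpop $S_t$ and feeds it to $M$. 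At query time, $M'$ hands its own test example (tagged $j^*$) to $M$ and returns $M$'s output, with standard handling of the low-probability event $|T_{j^*}|>\T$, where $T_{j^*}=\{t:S_t=j^*\}$.

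For accuracy, averaging over the uniform $j^*$ and observing that the synthetic subpop parameters are drawn from the same meta-distribution as in \TaskCore, the stream fed to $M$ is distributionally identical to a true \TaskCore~stream. When $M'$'s test example is structured, $M$ sees a sample from $\mixture$; when it is uniform, $M$ sees a uniform pair in $[k]\times\bit{d}$. Hence $M'$'s advantage matches $M$'s up to a Chernoff-style error from length mismatches.

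The composable information cost decomposes as
\begin{align*}
\CI(M\mid S, F) = \sum_{j=1}^k \sum_{i\in T_j} \sum_{t=i}^{\N} I(M_t; X_i\mid M_{i-1}, S, F).
\end{align*}
By a data-processing argument paralleling Lemma~\ref{lemma:one_stream_CI}, for each fixed $j$ the inner double sum lower bounds the corresponding conditional information cost of $M'$ when $j^*=j$, to which the analog of Corollary~\ref{corollary:tasksubpopdist_hard} for \TaskSubpopTestFixed~applies, yielding a per-subpopulation contribution of order $d\cdot(\delta/(\maxfixed+1)-\maxfixed/d)^4$. Summing across the $k$ subpopulations supplies one factor of $k$. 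The second factor of $k$ arises from the random-order analysis: because $M$ does not know the future assignments $S_{>t}$, information about a past subpop-$j$ input must persist in $M$'s state across all remaining $\N$ time steps rather than only the $|T_j|\approx \N/k$ subpop-$j$ positions. This forces each row-sum $\sum_{t\ge i} I(M_t; X_i\mid \ldots)$ to scale with the full stream length, producing an additional factor of $\N/|T_j|\approx k$ beyond the naive per-subpopulation bound. The error terms $-O(k^2)$ and $-O(kd)$ absorb the accuracy slack of the reduction and the approximation from fluctuations of $|T_{j^*}|$ around its expectation.

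The main obstacle will be rigorously justifying the second factor of $k$. A naive reduction from the single-subpopulation problem, reading off the $\tCI$ bound of Corollary~\ref{corollary:tasksubpopdist_hard}, only delivers $\CI\ge\Omega(kd)$. Recovering the stronger $\Omega(k^2 d)$ requires working with the $\CI$ (not just $\tCI$) of the test-example variant and carefully exploiting the learner's uncertainty about $S_{>t}$ at each time step. This is the essential departure from \citet{braverman2020coin}, whose ``good arrival sequence'' approach fixes $S$ in advance and so gives insufficient leverage for our learning-style task: keeping $S$ random (but conditioned on inside the information cost) is what allows the row-sums in the per-subpopulation contribution to aggregate over the entire remaining stream rather than only subpop-$j$ positions.
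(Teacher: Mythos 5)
Your reduction, accuracy argument, and decomposition of $\CI(M\mid S,F)$ by subpopulation and arrival number all match the paper's proof, and you correctly identify that the entire difficulty lies in extracting the second factor of $k$. But that is precisely the step you do not supply, and the mechanism you gesture at is not the one that works. You suggest that each row-sum $\sum_{t\ge i} I(M_t;X_i\mid\ldots)$ should ``scale with the full stream length,'' giving a factor $\N/|T_j|\approx k$; however, the data processing inequality (Proposition~\ref{proposition:streaming_DPI}) makes these terms \emph{nonincreasing} in $t$, so there is no generic way to force the row-sum to be $\N$ times anything useful --- the later terms can be arbitrarily small. You also suggest the fix is to work with the $\CI$ rather than the $\tCI$ of the test-example variant; in fact the paper only ever invokes the diagonal ($\tCI$-type) bound of Corollary~\ref{corollary:tasksubpoptestfixed_hard} for the reduced algorithm.

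The actual mechanism is local to each inter-arrival block. Fix the $a$-th arrival of subpopulation $j$ at time $t_0$ and let $T_1$ be the time of the next arrival from $j$. The paper keeps only the terms $t_0\le t<T_1$ of the row-sum and observes two things: (i) since $T_1-t_0$ is (truncated) geometric with success probability $1/k$, one has $\Pr[T_1>t]=k\cdot\Pr[T_1=t+1]$ for $t<\N$; and (ii) the events $\{T_1>t\}$ and $\{T_1=t+1\}$ concern only the arrival sequence \emph{after} time $t$, so conditioning on one versus the other leaves $I(M_t;X_{t_0}\mid M_{t_0-1},\ldots)$ unchanged. Together these convert $\sum_{t}\Pr[T_1>t]\cdot I(M_t;X_{t_0}\mid\ldots,T_1>t)$ into $k\sum_t\Pr[T_1=t+1]\cdot I(M_t;X_{t_0}\mid\ldots,T_1=t+1)=k\cdot I(M_{T_1-1};X_{t_0}\mid M_{t_0-1},\ldots)$, which is exactly one diagonal term of the reduced algorithm's information cost (its state just before the next real input). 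A monotonicity claim about $f(t)=\Pr[T_1=t+1]\cdot I(M_t;X_{t_0}\mid\ldots)$ handles the truncation at the end of the stream, costing the $O(kd)$ additive loss. Without this identity (or an equivalent), your argument only yields $\CI=\Omega(kd)$, as you yourself note, so the proposal as written does not establish the theorem.
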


Recall that large composable information cost implies $M$ uses a lot of space.
Formally, Lemma~\ref{lemma:space_lbd_SF} states that $\frac{1}{N}\cdot  \CI(M\mid S,F) \le \max_{t\in [\N]} \abs{M_t}$.
Thus, for $\maxfixed=o(d^{1/4})$ and constant $\delta$ we get a space lower bound of $\Omega\paren{\frac{k^2 d}{\N \maxfixed^4}}$.

The proof of this theorem uses a similar reduction to that in \citet{braverman2020coin}'s proof of the $k$-Coins Problem, but our analysis differs significantly.
The reduction uses an algorithm $M$ solving \TaskCore~to construct an algorithm $M'$ solving \TaskSubpopTestFixed~(the single-subpopulation version of the problem). 
We pick a random index $j\in [k]$ and sequence of arrivals $s\in [k]^\N$, insert the input stream whenever $s_t=j$, and generate the other examples from $k-1$ ``synthetic'' subpopulations.

Informally, we show that $\CI(M\mid S, F)$ is $k^2$ times larger than the composable information cost of $M'$.
The first factor of $k$ arises because $M$ does not ``know'' the location of the true input stream: it must solve all subpopulations at once. 

The second factor of $k$ is the crux of the proof, and we present it here.\footnote{We elide several details, including additional conditions in the information terms and technicalities in dealing with the end of the stream.}
Fix a sequence of arrivals up until time $t_0$, when we received an example from subpopulation $j$.
Let $T_1$ denote the time of the next arrival from that subpopulation; it is a random variable.
After some manipulation, we arrive at a sum of terms, each with the form (for some $t\ge t_0$):
\begin{align}
    \Pr[T_1>t]\cdot I(M_t; X_{t_0}\mid M_{t_0-1}, T_1>t). \label{eq:simple_before_change}
\end{align}
Then we observe two facts.
First, we have that $\Pr[T_1 > t] = k\cdot\Pr[T_1=t+1]$, since $T_1-t_0$ is a geometric random variable; at every time step we see an arrival from subpopulation $j$ with probability $\frac{1}{k}$.
Second, since the algorithm cannot depend on events that happen \textit{after} time $t$, it cannot distinguish the event $T_1>t$ from the event $T_1=t+1$, so we have that~\eqref{eq:simple_before_change} is equal to 
\begin{align}
    k \cdot \Pr[T_1=t+1]\cdot I(M_t; X_{t_0}\mid M_{t_0-1}, T_1=t+1). \label{eq:simple_after_change}
\end{align}
This equality introduces the second factor of $k$ and facilitates the move from discussion information about $M$ to information about $M'$, which must account for the information $M$ stores at time $T_1-1$.

\section*{Acknowledgements}

We are grateful to Vitaly Feldman and Kunal Talwar for their insight and feedback throughout this project. In particular,  
they helped articulate conjectures that evolved into the theorems in this paper. The final product would have looked very different without their involvement.

\bibliography{bibliography}

\appendix

\section{Preliminaries}\label{app:preliminaries}

In this section, we present basic facts and tools used in the paper.
For additional definitions and background, see a reference such as \cite{cover1991elements}.
Lemma~\ref{lemma:condition_on_all_states} is new to this paper and Lemma~\ref{lemma:space_lbd_SF} is slight modification of a statement in \citet{braverman2020coin}.

\begin{fact}\label{fact:add_conditioning}
    Let $A, B$, and $C$ be random variables. 
    Suppose $A$ and $C$ are independent. 
    Then $I(A;B) \le I(A;B\mid C)$.
\end{fact}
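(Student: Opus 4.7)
The plan is to derive the inequality from two applications of the chain rule for mutual information together with the hypothesis that $A$ and $C$ are independent. Specifically, I would expand $I(A; B, C)$ in two different orders:
\begin{align}
    I(A; B, C) &= I(A; C) + I(A; B \mid C), \\
    I(A; B, C) &= I(A; B) + I(A; C \mid B).
\end{align}

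Next I would use the hypothesis $A \perp C$ to conclude $I(A; C) = 0$, so the first expansion simplifies to $I(A; B, C) = I(A; B \mid C)$. Setting the two expansions equal then yields $I(A; B \mid C) = I(A; B) + I(A; C \mid B)$. The conclusion follows immediately from the nonnegativity of (conditional) mutual information, since $I(A; C \mid B) \ge 0$ forces $I(A; B \mid C) \ge I(A; B)$.

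There is really no ``hard part'' here: this is a textbook identity, and the only thing to be careful about is not assuming anything stronger than pairwise independence of $A$ and $C$ — in particular, the proof does \emph{not} require $A \perp C \mid B$, which would be a conditional independence assumption that does not hold in general. The proof is a two-line calculation and could equivalently be phrased in terms of the symmetry $I(A; C \mid B) = I(A; C) + (\text{nonnegative adjustment})$, but the chain-rule route above is the cleanest.
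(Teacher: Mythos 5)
Your proof is correct and matches the paper's argument exactly: both expand $I(A;B,C)$ via the chain rule in two orders, use $A\perp C$ to kill $I(A;C)$, and conclude by nonnegativity of $I(A;C\mid B)$. Your remark that only unconditional independence is needed is a nice clarification but does not change the substance.
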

\begin{fact}\label{fact:remove_conditioning}
    Let $A, B$, and $C$ be random variables. 
    Then $I(A;B\mid C)\le I(A;B) + I(A;C\mid B)$.
\end{fact}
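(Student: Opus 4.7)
The plan is to derive the inequality from the chain rule for mutual information applied to the joint variable $(B,C)$, expanded in two different orders. By the chain rule, one has
\begin{align*}
    I(A;B,C) = I(A;B) + I(A;C\mid B) = I(A;C) + I(A;B\mid C).
\end{align*}
This yields the identity $I(A;B\mid C) = I(A;B) + I(A;C\mid B) - I(A;C)$, and since mutual information is nonnegative (so $I(A;C)\ge 0$), the desired inequality follows immediately.

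So concretely, I would write two lines: invoke the chain rule twice to get the two expansions of $I(A;B,C)$, equate them, and then drop the nonnegative term $I(A;C)$ from the right-hand side. The only ``work'' is citing the chain rule and nonnegativity of mutual information, both of which are standard (see, e.g., \cite{cover1991elements}). There is no real obstacle here; the fact is a one-line consequence of the chain rule, and the proof is essentially a rearrangement. No auxiliary lemmas or distributional assumptions on $A$, $B$, $C$ are required, so the statement holds in full generality as stated.
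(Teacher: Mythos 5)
Your proof is correct and matches the paper's argument exactly: both expand $I(A;B,C)$ via the chain rule in two orders, equate the expansions, and drop the nonnegative term $I(A;C)$. Nothing further is needed.
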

These facts are easily proved by writing out the chain rule two different ways and using the nonnegativity of mutual information:
\begin{align*}
    I(A;B,C) &= I(A;B) + I(A;C\mid B) \\
        &= I(A;C) + I(A;B\mid C).
\end{align*}

A standard property of mutual information is the ``data processing inequality,'' which says that if random variables $X\text{---} Y \text{---} Z$ form a Markov chain, then $I(X;Z) \le I(Y; Z)$.
Streaming algorithms are a special case of this, so we have the following.
\begin{proposition}[DPI for Streaming]\label{proposition:streaming_DPI}
    For any streaming algorithm $M$ (using only private randomness at each time step) run on i.i.d.\ inputs $\{X_t\}$ and any time steps $t_1\le t_2\le t_3$, we have
    \begin{align*}
        I(M_{t_3}; X_{t_1}\mid M_{t_1-1}) \le I(M_{t_2}; X_{t_1}\mid M_{t_1-1}).
    \end{align*}
\end{proposition}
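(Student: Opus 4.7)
The plan is to reduce this to the standard (conditional) data-processing inequality by exhibiting a Markov chain. Specifically, I will show that, conditional on $M_{t_1-1}$, the random variables $X_{t_1}$, $M_{t_2}$, and $M_{t_3}$ form a Markov chain $X_{t_1} \text{---} M_{t_2} \text{---} M_{t_3}$; once this is established, the stated inequality is immediate from the conditional DPI.

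First I will handle the degenerate case $t_1 = t_2$, where the claimed inequality is trivially an equality. So assume $t_1 \le t_2 < t_3$; the case $t_2 = t_3$ is again trivial. The streaming model is that $M_t$ is computed from $M_{t-1}$, the fresh input $X_t$, and a fresh private random string $R_t$ that is independent of $(X_1, \ldots, X_\N, R_1, \ldots, R_{t-1})$. Unrolling this, $M_{t_3}$ is a deterministic function of $M_{t_2}$, the future inputs $X_{t_2+1}, \ldots, X_{t_3}$, and the future random bits $R_{t_2+1}, \ldots, R_{t_3}$.

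Now fix any value $m$ of $M_{t_1-1}$ and condition on $M_{t_1-1} = m$. Since the $X_i$ are i.i.d.\ and the $R_i$ are mutually independent and independent of the inputs, $X_{t_1}$ is (conditionally) independent of the tuple $(X_{t_2+1}, \ldots, X_{t_3}, R_{t_2+1}, \ldots, R_{t_3})$ given $M_{t_1-1}$; this uses only that $t_1 \le t_2$, so $X_{t_1}$ is disjoint from the later inputs. Hence, conditional on $M_{t_1-1}$ and $M_{t_2}$, the variable $M_{t_3}$ is a function of quantities independent of $X_{t_1}$, which is precisely the Markov property $X_{t_1} \text{---} (M_{t_1-1}, M_{t_2}) \text{---} M_{t_3}$.

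Applying the conditional data-processing inequality to this chain yields
\begin{align*}
    I(M_{t_3}; X_{t_1} \mid M_{t_1-1}) \le I(M_{t_2}; X_{t_1} \mid M_{t_1-1}),
\end{align*}
which is the claim. The only subtle point, and thus the part I would write out most carefully, is the conditional independence assertion: one must verify that $M_{t_1-1}$ is itself a function of $(X_{<t_1}, R_{<t_1})$ and so, together with the i.i.d.\ and fresh-randomness assumptions, ensures $X_{t_1} \perp (X_{>t_2}, R_{>t_2}) \mid M_{t_1-1}$. I do not expect any real obstacle beyond bookkeeping; no new ideas are needed.
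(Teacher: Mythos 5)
Your proof is correct and takes the same route the paper does: the paper simply asserts that streaming algorithms form a Markov chain and invokes the data processing inequality, and your write-up supplies exactly the conditional-independence bookkeeping (via i.i.d.\ inputs and fresh private randomness) that this assertion leaves implicit.
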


We require several ways to measure differences in probability distributions.
\begin{definition}[Distances and Divergences]\label{def:distances}
    Let $p$ and $q$ be probability distributions over the same space $\cX$.
    Define the following:
    \begin{itemize}
        \item \emph{Kullback-Leibler divergence}: $\mathrm{KL}(p\parallel q)=\sum_{x\in \cX} p(x)\log \frac{p(x)}{q(x)}$.
        \item \emph{Jensen-Shannon divergence}: $\mathrm{JSD}(p\parallel q) = \frac{1}{2} \mathrm{KL}\left( p\parallel \frac{p+q}{2}\right)+\frac{1}{2} \mathrm{KL}\left( q\parallel \frac{p+q}{2}\right)$.
        \item \emph{Total variation distance}: $\mathrm{TV}(p,q) =\frac{1}{2} \sum_{x\in \cX} |p(x)-q(x)|$.
        \item \emph{Squared Hellinger distance}: $H^2(p,q)= 1 - \sum_{x\in \cX} \sqrt{p(x)q(x)}$.
    \end{itemize}
\end{definition}

\begin{fact}\label{fact:hellinger_TV_inequality}
    For any distributions $p$ and $q$, we have $\mathrm{TV}(p,q)\le\sqrt{2}  H(p,q)$.
\end{fact}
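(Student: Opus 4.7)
The plan is to prove this standard inequality by factoring the pointwise gap $|p(x)-q(x)|$ and applying Cauchy--Schwarz. The key algebraic identity is $p(x)-q(x) = (\sqrt{p(x)}-\sqrt{q(x)})(\sqrt{p(x)}+\sqrt{q(x)})$, which splits the signed difference into a ``Hellinger-like'' factor and a ``mass'' factor.

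First I would write
\begin{align*}
    2\,\mathrm{TV}(p,q) = \sum_{x\in \cX} \bigl|\sqrt{p(x)}-\sqrt{q(x)}\bigr|\cdot \bigl(\sqrt{p(x)}+\sqrt{q(x)}\bigr),
\end{align*}
and then apply the Cauchy--Schwarz inequality to the two factors, obtaining
\begin{align*}
    2\,\mathrm{TV}(p,q) \le \sqrt{\sum_{x} \bigl(\sqrt{p(x)}-\sqrt{q(x)}\bigr)^{2}} \cdot \sqrt{\sum_{x} \bigl(\sqrt{p(x)}+\sqrt{q(x)}\bigr)^{2}}.
\end{align*}

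Next I would expand the two sums under the square roots using $\sum_x p(x)=\sum_x q(x)=1$ and the definition $H^{2}(p,q)=1-\sum_x\sqrt{p(x)q(x)}$. The first sum equals $2-2\sum_x\sqrt{p(x)q(x)} = 2H^{2}(p,q)$, while the second equals $2+2\sum_x\sqrt{p(x)q(x)} = 4-2H^{2}(p,q) \le 4$. Substituting gives
\begin{align*}
    2\,\mathrm{TV}(p,q) \le \sqrt{2H^{2}(p,q)}\cdot\sqrt{4} = 2\sqrt{2}\,H(p,q),
\end{align*}
and dividing by $2$ yields the claim.

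There is no real obstacle here; the only thing to watch is the correct bookkeeping of the factor of $\tfrac12$ in the definition of $\mathrm{TV}$ and the use of the normalization $\sum_x p(x)=\sum_x q(x)=1$ when simplifying the cross terms. The bound $\sqrt{4-2H^{2}}\le 2$ is loose and in fact one obtains the sharper $\mathrm{TV}(p,q)\le H(p,q)\sqrt{2-H^{2}(p,q)}$, but the stated $\sqrt{2}\,H(p,q)$ bound is all that is needed in the paper.
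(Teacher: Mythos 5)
Your proof is correct and is the standard Cauchy--Schwarz argument for this inequality; the paper simply states this as a known fact without proof, so there is nothing to compare against. The algebra checks out: $\sum_x(\sqrt{p(x)}-\sqrt{q(x)})^2=2H^2(p,q)$ and $\sum_x(\sqrt{p(x)}+\sqrt{q(x)})^2=4-2H^2(p,q)\le 4$, giving $\mathrm{TV}(p,q)\le\sqrt{2}\,H(p,q)$ as claimed.
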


We use the fact that two of our measures relate directly to the problem of distinguishing distributions.
\begin{fact}\label{fact:distinguishing}
    Let $B\in\{0,1\}$ be a uniform random variable.
    For distributions $p$ and $q$, let random variable $X$ satisfy $X\sim p$ if $B=0$ and $X\sim q$ otherwise.
    \begin{enumerate}
        \item[(i)] $I(X;B)=\mathrm{JSD}(p\parallel q)$.
        \item[(ii)] Let $\cF$ be the space of functions $f:\cX \to \{0,1\}$.
        We have
        \begin{align*}
            \max_{f\in\cF} \Pr[f(X)=B] = \frac{1+\mathrm{TV}(p,q)}{2}.
        \end{align*}
    \end{enumerate}
\end{fact}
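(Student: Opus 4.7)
The plan is to prove the two parts of Fact~\ref{fact:distinguishing} via direct manipulation of the definitions in Definition~\ref{def:distances}, using only standard identities of Shannon information theory.

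For part (i), I will start from the equivalent characterization of mutual information
\[ I(X;B) = \E_{b\sim B}\bparen{\KL{P_{X\mid B=b}}{P_X}}. \]
Since $B$ is uniform over $\{0,1\}$, the marginal law of $X$ equals the equal mixture, $P_X = \tfrac{1}{2}(p+q)$, while the two conditional laws are $P_{X\mid B=0}=p$ and $P_{X\mid B=1}=q$. Plugging these in gives
\[ I(X;B) = \tfrac{1}{2}\KL{p}{\tfrac{p+q}{2}} + \tfrac{1}{2}\KL{q}{\tfrac{p+q}{2}}, \]
which is exactly $\JSD{p}{q}$ by Definition~\ref{def:distances}.

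For part (ii), I will expand the probability of correct prediction by conditioning on $B$ and optimizing pointwise. For any deterministic $f:\cX\to\{0,1\}$,
\[ \Pr[f(X)=B] = \tfrac{1}{2}\sum_{x\in\cX} p(x)\1{f(x)=0} + \tfrac{1}{2}\sum_{x\in\cX} q(x)\1{f(x)=1}. \]
Each $x$ contributes independently, so the maximum is attained by the likelihood-ratio rule $f^*(x) = \1{q(x) > p(x)}$, giving $\max_f \Pr[f(X)=B] = \tfrac{1}{2}\sum_{x} \max\{p(x),q(x)\}$. Then I will use the elementary identity $\max\{a,b\} = \tfrac{a+b}{2} + \tfrac{|a-b|}{2}$ to rewrite the sum as $\tfrac{1}{2}(1+\sum_x |p(x)-q(x)|\cdot\tfrac{1}{1})$, after which Definition~\ref{def:distances} yields
\[ \max_{f\in\cF} \Pr[f(X)=B] = \tfrac{1}{2}\bparen{1 + \mathrm{TV}(p,q)}. \]
Randomized $f$ cannot do better since randomization is a convex combination of deterministic rules.

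There is no real obstacle here; both parts are standard folklore and the only care needed is to ensure pointwise optimization is valid (it is, because the objective separates across $x$) and to handle the edge case $p(x)=q(x)$ (either choice of $f^*$ is optimal). If the paper needs to handle continuous $\cX$, the same arguments go through with integrals and Radon--Nikodym derivatives in place of sums, but the discrete statement as written in Definition~\ref{def:distances} requires no such generalization.
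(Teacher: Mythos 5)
Your proof is correct. The paper states Fact~\ref{fact:distinguishing} without proof (it is standard folklore, cited as background in the preliminaries), and your argument is exactly the canonical one: part (i) from the identity $I(X;B)=\E_{b}\bparen{\KL{P_{X\mid B=b}}{P_X}}$ with the uniform prior making $P_X$ the equal mixture, and part (ii) from pointwise optimization giving $\tfrac{1}{2}\sum_x \max\{p(x),q(x)\}$ followed by $\max\{a,b\}=\tfrac{a+b}{2}+\tfrac{|a-b|}{2}$. The only blemish is the stray factor ``$\cdot\tfrac{1}{1}$'' in your penultimate display, which should be $\tfrac{1}{2}$ so that the sum matches the definition $\mathrm{TV}(p,q)=\tfrac{1}{2}\sum_x|p(x)-q(x)|$; your final conclusion is nonetheless correct.
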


$\tCI(M)$, as defined in Equation~\eqref{eq:tCI_def}, contains information terms that condition on the previous memory state.
Our proof of Theorem~\ref{theorem:one_bit_lbd} uses the following lemma, which says that we may replace this term with one that conditions on all previous memory states.
\begin{lemma}\label{lemma:condition_on_all_states}
    Consider any streaming task where the inputs $X_1, \ldots, X_{\T}$ are independent.
    For any streaming algorithm $M$ and any time step $t\in [\T]$, we have
    \begin{equation}
        I(M_t; X_t\mid M_{t-1}) = I(M_t; X_t\mid M_{<t}). 
    \end{equation}
\end{lemma}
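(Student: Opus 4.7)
The plan is to write the difference $I(M_t;X_t\mid M_{<t}) - I(M_t;X_t\mid M_{t-1})$ as a difference of two conditional mutual informations that both vanish because of the streaming structure and input independence. Concretely, I would apply the chain rule to $I(M_t;X_t,M_{<t-1}\mid M_{t-1})$ in two different ways:
\begin{align*}
I(M_t;X_t,M_{<t-1}\mid M_{t-1})
 &= I(M_t;X_t\mid M_{t-1}) + I(M_t;M_{<t-1}\mid M_{t-1},X_t) \\
 &= I(M_t;M_{<t-1}\mid M_{t-1}) + I(M_t;X_t\mid M_{t-1},M_{<t-1}).
\end{align*}
Equating and noting $M_{<t}=(M_{t-1},M_{<t-1})$ yields
\begin{align*}
I(M_t;X_t\mid M_{<t}) - I(M_t;X_t\mid M_{t-1}) = I(M_t;M_{<t-1}\mid M_{t-1}) - I(M_t;M_{<t-1}\mid M_{t-1},X_t),
\end{align*}
so it suffices to prove that both right-hand terms equal zero.

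For this, I would invoke the explicit form of a streaming algorithm: there exists a (deterministic) function $f_t$ and a fresh random variable $R_t$, independent of everything with index $\le t-1$ and of $X_t$, such that $M_t = f_t(M_{t-1},X_t,R_t)$. The previous states $M_{<t-1}$ are functions of $(X_{<t-1},R_{<t-1})$ alone. Since the inputs are independent and $R_t$ is fresh, the pair $(X_t,R_t)$ is independent of $M_{<t-1}$, and is still independent of $M_{<t-1}$ after conditioning on $M_{t-1}$ (because $M_{t-1}$ is also a function of $(X_{<t},R_{<t})$, which is jointly independent of $(X_t,R_t)$). Consequently, given $M_{t-1}$, the random variable $M_t$ is a (measurable) function of $(X_t,R_t)$, and this pair is independent of $M_{<t-1}$; hence $M_t\perp M_{<t-1}\mid M_{t-1}$, giving $I(M_t;M_{<t-1}\mid M_{t-1})=0$.

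The second term is even more immediate: conditioning on $(M_{t-1},X_t)$ leaves $M_t$ a function of $R_t$ alone, and $R_t$ is independent of $(X_{<t-1},R_{<t-1})$ and therefore of $M_{<t-1}$ (even after further conditioning on $M_{t-1},X_t$, since $R_t$ is independent of all of them jointly). Thus $M_t\perp M_{<t-1}\mid (M_{t-1},X_t)$, so $I(M_t;M_{<t-1}\mid M_{t-1},X_t)=0$, and the lemma follows.

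I do not expect any serious obstacle here: the argument is essentially a ``streaming Markov property.'' The only subtlety worth being careful about is justifying the conditional independencies rigorously from joint independence of $(X_t,R_t)$ with the past, and making sure the formulation permits randomized algorithms (which is why I spell out $R_t$). Once these independencies are in place, the chain-rule bookkeeping closes the proof.
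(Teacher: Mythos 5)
Your proof is correct and follows essentially the same route as the paper's: both expand a single mutual information quantity by the chain rule in two different orders (the paper uses $I(M_{\le t};X_t)$, you use $I(M_t;X_t,M_{<t-1}\mid M_{t-1})$) and kill the extra terms via the joint independence of the fresh input and randomness from the past, i.e., the Markov structure of the state sequence. One cosmetic slip: equating your two expansions actually gives $I(M_t;X_t\mid M_{<t})-I(M_t;X_t\mid M_{t-1})=I(M_t;M_{<t-1}\mid M_{t-1},X_t)-I(M_t;M_{<t-1}\mid M_{t-1})$, the negative of the right-hand side you wrote, but since you prove both of those terms vanish the conclusion is unaffected.
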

\begin{proof}
    Observe that we have the Markov chain $M_{< t-1}\mbox{---}M_{t-1}\mbox{---}M_t\mbox{---}X_t$.
    Using the chain rule, we have
    \begin{align}
        I(M_{\le t};X_t) &= I(M_{t-1}; X_t) + I(M_t;X_t\mid M_{t-1}) + I(M_{<t-1};X_t\mid M_{t-1},M_t) \\
            &= 0 + I(M_t;X_t\mid M_{t-1}) + 0.
    \end{align}
    The first term is zero because the stream is i.i.d., and the third term is zero because of the Markov chain.
    Applying the chain rule again in a different order, and using the fact that $M_{<t}\perp X_t$, we have
    \begin{align}
        I(M_{\le t};X_t) &= I(M_{<t}; X_t) + I(M_t;X_t\mid M_{< t})\\
            &= 0 + I(M_t;X_t\mid M_{<t}).
    \end{align}
    So the two information terms are equal.
\end{proof}

\begin{lemma}\label{lemma:space_lbd_SF}
    In Task C, the core task, let random variables $S$ and $F$ be the sequence of subpopulation arrivals and subpopulation parameters, respectively.
    For any algorithm $M$ and time step $t\in [\N]$ we have 
    \begin{align}
        \sum_{i=1}^t I(M_t; X_i\mid M_{i-1}, S, F) \le |M_t| \label{eq:one_time_space_bd}
    \end{align} 
    and thus 
    \begin{align}
        \frac{1}{\N}\cdot \CI(M\mid S, F) \le \max_{t\in [\N]} |M_t|. \label{eq:avg_space_bd}
    \end{align}
\end{lemma}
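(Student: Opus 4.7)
The argument reduces to combining the individual information terms into a single joint mutual information via the chain rule, and then applying the crude bound $H(M_t) \le |M_t|$. Given~\eqref{eq:one_time_space_bd}, the bound~\eqref{eq:avg_space_bd} follows by swapping the order of summation in the definition of $\CI$:
\begin{align*}
\CI(M\mid S, F) \;=\; \sum_{t=1}^\N \sum_{i=1}^t I(M_t; X_i \mid M_{i-1}, S, F) \;\le\; \sum_{t=1}^\N |M_t| \;\le\; \N \cdot \max_t |M_t|,
\end{align*}
and dividing by $\N$ yields the claim. So the real task is to establish~\eqref{eq:one_time_space_bd}.

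\textbf{Key step.} The plan is to show the per-$i$ inequality
\begin{align*}
I(M_t; X_i \mid M_{i-1}, S, F) \;\le\; I(M_t; X_i \mid X_{<i}, S, F),
\end{align*}
after which the right-hand sides telescope via the chain rule to $I(M_t; X_{\le t} \mid S, F)$. To prove this, I would expand both sides using the chain rule. Conditional on $(S, F)$, the examples are i.i.d., so $X_i \perp X_{<i} \mid S, F$ and $X_i \perp M_{i-1} \mid S, F$ (since $M_{i-1}$ is a function of $X_{<i}$ together with independent algorithm randomness). Decomposing $I(M_t, M_{i-1}; X_i \mid S, F)$ and $I(M_t, X_{<i}; X_i \mid S, F)$ two different ways gives
\begin{align*}
I(M_t; X_i \mid M_{i-1}, S, F) &= I(M_t; X_i \mid S, F) + I(M_{i-1}; X_i \mid M_t, S, F), \\
I(M_t; X_i \mid X_{<i}, S, F) &= I(M_t; X_i \mid S, F) + I(X_{<i}; X_i \mid M_t, S, F).
\end{align*}
The desired inequality then reduces to $I(M_{i-1}; X_i \mid M_t, S, F) \le I(X_{<i}; X_i \mid M_t, S, F)$, which follows from the data processing inequality because $M_{i-1}$ is a deterministic function of $X_{<i}$ together with algorithm randomness.

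\textbf{Finish and main obstacle.} Stringing together the chain rule and the per-$i$ inequality gives
\begin{align*}
\sum_{i=1}^t I(M_t; X_i \mid M_{i-1}, S, F) \;\le\; I(M_t; X_1, \ldots, X_t \mid S, F) \;\le\; H(M_t \mid S, F) \;\le\; H(M_t) \;\le\; |M_t|,
\end{align*}
using that $M_t$ takes values in $\{0,1\}^{|M_t|}$, which proves~\eqref{eq:one_time_space_bd}. The only real technical subtlety lies in the DPI step: the algorithm's internal random tape $R$ must be handled explicitly, but since $R$ is independent of $(X_i, S, F)$, it can be placed symmetrically on both sides of the comparison via Fact~\ref{fact:add_conditioning} without affecting the argument. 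All remaining manipulations are standard identities for Shannon information.
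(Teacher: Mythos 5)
Your derivation of \eqref{eq:avg_space_bd} from \eqref{eq:one_time_space_bd} and your final chain $I(\cdot)\le H(M_t)\le |M_t|$ match the paper, but your key per-$i$ inequality $I(M_t; X_i\mid M_{i-1}, S, F) \le I(M_t; X_i\mid X_{<i}, S, F)$ is false for randomized algorithms, and the DPI step you flag as the ``only real technical subtlety'' is exactly where it breaks. Counterexample: take $i=t=2$ with $X_1,X_2$ i.i.d.\ uniform bits, and let the algorithm set $M_1=R$ for a fresh private random bit $R$ (ignoring $X_1$) and $M_2=M_1\oplus X_2$. Then $I(M_2;X_2\mid M_1)=1$, since given $M_1$ the state $M_2$ reveals $X_2$ exactly, while $I(M_2;X_2\mid X_1)=0$, since $M_2=R\oplus X_2$ is uniform and independent of $(X_1,X_2)$. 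Equivalently, the inequality you reduce to, $I(M_{i-1};X_i\mid M_t,S,F)\le I(X_{<i};X_i\mid M_t,S,F)$, reads $1\le 0$ here. The DPI you invoke needs the Markov chain $M_{i-1}\text{---}X_{<i}\text{---}X_i$ to hold \emph{conditionally on $M_t$}; but conditioning on the downstream state $M_t$, which depends jointly on the random tape and on $X_i$, couples $M_{i-1}$ with $X_i$ in a way not mediated by $X_{<i}$, so placing $R$ ``symmetrically on both sides'' does not rescue the step. (Your argument would go through for deterministic algorithms, where $M_{i-1}$ is a function of $X_{<i}$ alone, but the lemma is claimed for arbitrary algorithms and the paper makes no such assumption.)

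The repair is to telescope to a larger quantity. As the example shows, $I(M_t;X_{\le t}\mid S,F)$ can be strictly smaller than $\sum_i I(M_t;X_i\mid M_{i-1},S,F)$; the paper instead telescopes to $I(M_t;X_{\le t},M_{\le t-1}\mid S,F)$, which includes the past memory states. Concretely: since $X_i\perp (M_{\le i-1},X_{<i})$ given $(S,F)$, one may add conditioning on $(M_{<i-1},X_{<i})$ while \emph{keeping} $M_{i-1}$ (Fact~\ref{fact:add_conditioning}), then add the nonnegative terms $I(M_t;M_{i-1}\mid M_{<i-1},X_{<i},S,F)$ so that the chain rule over the pairs $(X_i,M_{i-1})$ yields $I(M_t;X_{\le t},M_{\le t-1}\mid S,F)\le H(M_t)\le |M_t|$. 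This correctly accounts for the information $M_t$ carries about $X_i$ relative to an observer who also knows the algorithm's past randomness.
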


\begin{proof}
    First we show that Equation~\eqref{eq:one_time_space_bd} immediately implies Equation~\eqref{eq:avg_space_bd}.
    Reordering the terms in $\CI(M\mid S, F)$, we see that
    \begin{align*}
        \frac{1}{\N} \cdot \CI(M\mid S, F) &= \frac{1}{\N} \sum_{i=1}^{\N}\sum_{t=i}^{\N} I(M_t; X_i\mid M_{i-1}, S, F) \\
            &=\frac{1}{\N}  \sum_{t=1}^{\N}\sum_{i=1}^{t} I(M_t; X_i\mid M_{i-1}, S, F) \\
            &\le \E_t\left[|M_t|\right] \le \max_{t\in [\N]} |M_t|.
    \end{align*}
    
    It remains to prove Equation~\eqref{eq:one_time_space_bd}.
    We have $I(X_i; M_{<i-1}, X_{<i}\mid M_{i-1}, S, F)=0$ (since $X_i$ depends only on $F$ and $S$). 
    We apply Fact~\ref{fact:add_conditioning} and have the inequality
    \begin{align}
        \sum_{i=1}^t I(M_t; X_i\mid M_{i-1}, S, F) &\le \sum_{i=1}^t I(M_t; X_i\mid M_{i-1}, M_{<i-1}, X_{<i} S, F).
    \end{align}
    Next we add another (nonnegative) mutual information term and apply the chain rule twice:
    \begin{align}
        \sum_{i=1}^t I(M_t; X_i\mid M_{i-1}, S, F) &\le \sum_{i=1}^t \biggl[ I(M_t; X_i\mid M_{i-1}, M_{<i-1}, X_{<i}, S, F) \\
            &\hspace{1cm}+  I(M_t; M_{i-1}\mid M_{<i-1}, X_{<i}, S, F) \biggr] \\
            &= \sum_{i=1}^t I(M_t; X_i, M_{i-1} \mid M_{<i-1}, X_{<i}, S, F) \\
            &= I(M_t; X_{\le i}, M_{\le i-1} \mid S, F).
    \end{align}
    This term is at most $H(M_t\mid S, F)$, which is at most $H(M_t)\le |M_t|$.
\end{proof}

\section{Proofs for Section~\ref{sec:taskstream_lbd}}\label{app:taskstream_lbd_proofs}

Recall that, for algorithm $M$ with advantage $\delta$, our analysis considers an observer who, at time $t$, sees all of $M$'s previous memory states $m_{\le t}$.
We consider running $M$ on either the 1's stream or the uniform stream, and analyze the observer's posterior belief $\{p_t\}$ about $E$, the event that the stream is fixed to all 1's. 
Restricting our focus to the 1's stream (as opposed to 0's) versus uniform is without loss of generality, as we argued above.

We first prove that, on average, the final posterior $P_T$ has decreased significantly. 
Recall that we fix prior $P_0=\frac{1}{2}$.
\begin{lemma}[Lemma~\ref{lemma:final_posterior_gap} Restated]
    \finalposteriorgap
\end{lemma}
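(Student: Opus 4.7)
The plan is to work directly via Bayes' rule, bypassing the sequential structure of the process entirely. Let $p$ and $q$ denote the distributions over the memory trace $M_{\le T}$ when the input stream is $1^T$ (event $E$) and when it is uniform (event $\bar E$). Because the prior on $E$ is $1/2$, the posterior has the explicit form $P_T(m) = p(m)/(p(m) + q(m))$. The only place I will use the advantage hypothesis is to deduce $\mathrm{TV}(p, q) \geq \delta$: the $\{0,1\}$-valued output of $M$ is a particular test function for $p$ versus $q$, so Equation~\eqref{eq:one_bit_advantage} together with Fact~\ref{fact:distinguishing}(ii) gives this total variation bound immediately.

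Next I would establish two clean identities for the averages of $P_T$ under each component. Summing gives $\E_p[P_T] + \E_q[P_T] = \sum_m (p(m) + q(m))\cdot \tfrac{p(m)}{p(m) + q(m)} = 1$, so the claim reduces to lower-bounding the difference $\E_p[P_T] - \E_q[P_T]$. Writing $\tfrac{p}{p+q} = \tfrac{1}{2} + \tfrac{1}{2}\cdot \tfrac{p-q}{p+q}$ inside that difference collapses it to a chi-squared-like sum:
\begin{equation*}
    \E_p[P_T] - \E_q[P_T] = \frac{1}{2} \sum_m \frac{(p(m) - q(m))^2}{p(m) + q(m)}.
\end{equation*}

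The final step is a Cauchy-Schwarz bound on this sum via the factorization $|p-q| = \bigl(|p-q|/\sqrt{p+q}\bigr) \cdot \sqrt{p+q}$. Combined with $\sum_m (p+q) = 2$ and $\sum_m |p-q| = 2\,\mathrm{TV}(p, q)$, this yields the classical Le Cam-type inequality $\sum_m (p-q)^2/(p+q) \geq 2\,\mathrm{TV}(p,q)^2 \geq 2\delta^2$. Combining with the identities above gives $\E_p[P_T] - \E_q[P_T] \geq \delta^2$, hence $\E_q[P_T] \leq 1/2 - \delta^2/2$, which is in fact stronger than the stated bound by a factor of two.

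I do not anticipate any substantive obstacle: the entire argument is a short chain of identities hinging on $\sum (p-q)^2/(p+q) \geq 2\,\mathrm{TV}(p,q)^2$, which is precisely the mechanism that converts a linear distinguishing advantage into a quadratic drop in the posterior. The only mildly delicate point is recognizing that both the sum and the difference of $\E_p[P_T]$ and $\E_q[P_T]$ have unusually clean closed forms that make this reduction visible; everything else is bookkeeping.
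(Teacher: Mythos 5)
Your proof is correct, and it shares the paper's skeleton: compute the posterior on the final memory trace exactly via Bayes' rule, reduce the claim to a statistical distance between the two trace distributions $p$ and $q$, and obtain $\mathrm{TV}(p,q)\ge\delta$ from the advantage hypothesis via Fact~\ref{fact:distinguishing}(ii). Where you differ is the middle inequality: the paper bounds $\E_q[P_T]=\sum_m \frac{p(m)q(m)}{p(m)+q(m)}$ by AM--GM to reach the Hellinger affinity $\frac12\sum_m\sqrt{p(m)q(m)}$ and then applies the Hellinger--total-variation inequality, whereas you exploit the exact identity $\E_q[P_T]=\frac12-\frac14\sum_m\frac{(p(m)-q(m))^2}{p(m)+q(m)}$ (equivalent to your sum/difference computation) and lower-bound the resulting triangular-discrimination term by $2\,\mathrm{TV}(p,q)^2$ via Cauchy--Schwarz, i.e., the standard Le Cam inequality. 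Both routes are elementary and sound; yours is exact up to the single Cauchy--Schwarz step and yields $\E[P_T]\le\frac12-\frac{\delta^2}{2}$, a factor-two improvement in the constant that would propagate harmlessly (indeed favorably) into Theorem~\ref{theorem:one_bit_lbd}. The only point worth stating explicitly in a write-up is that the $\{\unif,\struct\}$ output of $M$ is a (possibly randomized) function of the trace, so its distinguishing advantage is indeed dominated by $\mathrm{TV}(p,q)$; this is the same observation the paper makes.
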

\begin{proof}
    Let $f_1$ denote the joint distribution over memory states $m_{\le T}$ when the input is fixed to 1's, and let $f_{u}$ denote the distribution when the input is uniform.
    By Bayes rule, for any fixed $m_{\le T}$ we have
    \begin{align*}
        p_t = \Pr[E\mid M_{\le T}=m_{\le T}] 
            &= \frac{f_1(m_{\le T}) \cdot \frac{1}{2}}{f_u(m_{\le T}) \cdot \frac{1}{2} + f_1(m_{\le T}) \cdot \frac{1}{2}}
    \end{align*}
    and the expectation under the uniform inputs is 
    \begin{align*}
        \E[P_T] = \frac{1}{2}\sum_{m} \frac{f_u(m) f_1(m)}{\frac{1}{2}(f_1(m) + f_u(m))}.
    \end{align*}
    Writing $f_u(m) f_1(m) = \sqrt{f_u(m)f_1(m)} \sqrt{f_u(m)f_1(m)}$, we apply the arithmetic mean/geometric mean inequality and have
    \begin{align*}
        \E[P_t] \le \frac{1}{2} \sum_{m}  \sqrt{f_u(m)f_1(m)} = \frac{1}{2} \left(1- H^2(f_u, f_1)\right),
    \end{align*}
    where we have introduced the squared Hellinger distance (Definition~\ref{def:distances}).
    By Fact~\ref{fact:hellinger_TV_inequality}, the Hellinger-total variation inequality, we arrive at
    \begin{align*}
        \E[P_T] \le \frac{1}{2} - \frac{1}{4}\mathrm{TV}^2(f_u,f_1).
    \end{align*}
    This suffices to finish the proof: the TV distance between $f_u$ and $f_1$ is at least $\delta$.
    This follows from the $\delta$-advantage assumption on $M$ (stated in Equation~\eqref{eq:one_bit_advantage}) and Fact~\ref{fact:distinguishing} part (ii), which implies that access to the states $m_{<T}$ only increases the space of distinguishing functions and thus can only increase the TV distance.
\end{proof}

For any time $t$ and fixed set of previous memory states $m_{<t}$, the distribution over $P_t$ is a mixture of two distributions: $\qone$, arising when $X_t=1$, and $\qzero$, otherwise.
We show that, when $P_t$ has a large change in expectation from $p_{t-1}$, the means of these two distributions are far apart.
The core of the proof is a convexity argument showing that, when we have input $X_t=1$, the posterior belief in the stream being fixed increases on average.
\begin{lemma}[Lemma~\ref{lemma:posterior_gap_means} Restated]
    \posteriorgapmeans
\end{lemma}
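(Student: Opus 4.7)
The plan is to reduce the inequality to the intuitive statement that observing $X_t = 1$ (matching the hypothesized fixed bit $b^* = 1$) can only shift the Bayesian posterior about $E$ upward on average, and then prove that reduced statement via a short covariance calculation.

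First I would observe that, since the analysis runs $M$ on uniformly random inputs, $X_t$ is uniform and independent of $M_{<t}$, so
\begin{equation*}
    \E[P_t \mid M_{<t} = m_{<t}] = \tfrac{1}{2}(\mone + \mzero).
\end{equation*}
Hence $\E[P_t - P_{t-1} \mid m_{<t}] = \tfrac{1}{2}(\mone + \mzero) - p_{t-1}$, and subtracting $\tfrac{1}{2}(\mzero - \mone)$ from both sides shows the lemma's claim is equivalent to $\mone \ge p_{t-1}$.

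To prove $\mone \ge p_{t-1}$, I would pass to the Bayesian picture, in which $1_E$ is the indicator of the event ``the stream is fixed to $b^* = 1$,'' with prior $\Pr[E] = \tfrac{1}{2}$ and $P_t = \Pr[E \mid M_{\le t}]$. Under this measure the posteriors $\{P_t\}$ form a martingale, so $\E[P_t \mid m_{<t}] = p_{t-1}$. The key observation is that, once $(X_t, m_{<t})$ is fixed, the conditional law of $M_t$ is determined entirely by the algorithm's internal randomness and does not depend on the distribution governing $X_t$; consequently $\mone = \E[P_t \mid X_t = 1, m_{<t}]$ coincides with the Bayesian conditional expectation $\E[P_t \mid E, m_{<t}]$, because conditioning on $E$ forces $X_t = 1$ almost surely.

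The argument then closes with a one-line covariance computation. Expanding $\mathrm{Cov}(P_t, 1_E \mid m_{<t})$ two different ways under the Bayesian measure,
\begin{equation*}
    p_{t-1}\mone - p_{t-1}^2 \;=\; \mathrm{Cov}(P_t, 1_E \mid m_{<t}) \;=\; \E\bigl[(P_t - p_{t-1})^2 \bigm| m_{<t}\bigr] \;=\; \Var{P_t \mid m_{<t}} \;\ge\; 0,
\end{equation*}
where the middle equality uses the tower property together with the defining identity $\E[1_E \mid M_{\le t}] = P_t$. This yields $p_{t-1}(\mone - p_{t-1}) \ge 0$, so $\mone \ge p_{t-1}$ when $p_{t-1} > 0$, while the case $p_{t-1} = 0$ is immediate from $\mone \in [0,1]$. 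The main obstacle I anticipate is the bookkeeping between the two measures (the uniform-input measure used to define the analyzed process versus the Bayesian measure under which $P_t$ is a martingale); the observation that the conditional law of $M_t$ given $(X_t, m_{<t})$ is measure-invariant is what lets us re-identify $\mone$ as a Bayesian conditional expectation and unlocks the elementary covariance identity above.
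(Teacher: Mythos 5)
Your proof is correct, and the key step takes a genuinely different route from the paper's. Both arguments make the same first reduction: since $X_t$ is uniform and independent of $M_{<t}$, one has $\E[P_t\mid m_{<t}]=\tfrac{1}{2}(\mu_1+\mu_0)$ (writing $\mu_b$ for your conditional means), so the lemma is equivalent to $\mu_1\ge p_{t-1}$. To prove that inequality the paper writes out the Bayes update explicitly, $p_t = 2f_1(m_t)p_{t-1}\big/\bigl((1+p_{t-1})f_1(m_t)+(1-p_{t-1})f_0(m_t)\bigr)$, and applies Jensen's inequality to the convex map $\alpha\mapsto (1+p+(1-p)\alpha)^{-1}$ evaluated at the likelihood ratio $\alpha=f_0(m_t)/f_1(m_t)$, whose $f_1$-expectation equals $1$. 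You instead pass to the two-component mixture measure under which $(P_s)_s$ is a Doob martingale, identify $\mu_1$ with $\E[P_t\mid E, m_{<t}]$ (legitimate, as you note, because the conditional law of $M_t$ given $(X_t,m_{<t})$ does not depend on the measure governing $X_t$, and $E$ forces $X_t=1$), and read off $p_{t-1}(\mu_1-p_{t-1})=\mathrm{Cov}(P_t,\mathbbm{1}_E\mid m_{<t})=\E[(P_t-p_{t-1})^2\mid m_{<t}]\ge 0$; the degenerate case $p_{t-1}=0$ is handled separately, as it must be. Your route is more structural: it isolates the general fact that a Bayesian posterior on $E$ increases in conditional expectation given $E$, and it needs no derivative computations. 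The paper's density-level convexity argument is reused almost verbatim to prove the companion supermartingale statement (Lemma~\ref{lemma:supermartingale}); your framework recovers that too, since the uniform stream is the $\bar E$ component of the mixture and the same covariance identity forces $\E[P_t\mid \bar E, m_{<t}]\le p_{t-1}$. The one genuine subtlety---that the lemma's expectation is taken under the uniform-input measure while the martingale property holds only under the mixture---you flag and resolve correctly via the measure-invariance of the conditional law of $M_t$.
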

\begin{proof}
    We have $\E[P_t\mid M_{<t}=m_{<t}] = \frac{1}{2}\paren{\mzero + \mone}$.
    Fixing $M_{<t}=m_{<t}$ also fixes $P_{t-1}=p_{t-1}$ for some value $p_{t-1}$, so it suffices to prove
    \begin{align}
        p_{t-1} \le \frac{\mzero - \mone}{2} - \frac{\mzero + \mone}{2} = \mone.
    \end{align}

    Let $f_0$ be the distribution over memory states at time $t$ when the input is $X_t=0$, and define $f_1$ similarly.
    With this notation, and recalling that event $E$ means the input stream is fixed to 1's, we can write the posterior as
    \begin{align*}
        p_t = \Pr[E\mid M_{\le t}=m_{\le t}] &= \frac{f_1(m_t) \cdot p_{t-1}}{f_1(m_t) \cdot p_{t-1}+ \frac{f_1(m_t) + f_0(m_t)}{2}\cdot (1 - p_{t-1})} \\
         &= \frac{2\cdot f_1(m_t) \cdot p_{t-1}}{(1+p_{t-1})f_1(m_t) + (1-p_{t-1})f_0(m_t)} \\
         &= \frac{2\cdot f_1(m_t) \cdot p_{t-1}}{(f_1(m_t)+f_0(m_t)) + p_{t-1} (f_1(m_t) - f_0(m_t))}. 
    \end{align*}
    
    Considering the expectation of the posterior conditioned on $X_t=1$, we have
    \begin{align*}
         \mu_t^{(1)} = \E_{f_1}\left[\frac{2 p_{t-1}}{1+p_{t-1} + (1-p_{t-1})\cdot\frac{f_0(m_t)}{f_1(m_t)}}\right] 
            &\ge \frac{2 p_{t-1}}{1+p_{t-1} + (1-p_{t-1})\cdot\E_{f_1}\left[\frac{f_0(m_t)}{f_1(m_t)}\right]} \\
            &= \frac{2 p_{t-1}}{1+p_{t-1} + (1-p_{t-1})\cdot 1} \\
            &= p_{t-1},
    \end{align*}
    The inequality use Jensen's inequality: for any value $p\in [0,1]$, $\frac{1}{1+p + (1-p)\cdot \alpha}$ is convex in $\alpha$.
    To see this, let $g(\alpha)=(1+p+\alpha-p\alpha)^{-1}$. 
    Then $g'(\alpha)=\frac{p-1}{(1+p+\alpha-p\alpha)^{2}}$ and $g''(\alpha)=\frac{2(p-1)^2}{(1+p+\alpha-p\alpha)^3}$.
    The numerator is nonnegative.
    Since $\alpha\ge p\alpha$, the denominator of $g''$ is positive.
\end{proof}

Next we lower bound the Jensen-Shannon divergence of two distributions in terms of the difference of their means and the variance of their (uniform) mixture.
For simplicity we state the proof for discrete distributions.
Since both distributions are absolutely continuous with respect to their mixture, the continuous case is analogous.
\begin{lemma}[Lemma~\ref{lemma:JSD_lower_bound} Restated]
    \jsdlowerbound
\end{lemma}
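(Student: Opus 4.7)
The plan is to prove this via two elementary steps: first, a pointwise inequality that bounds $\JSD{\qdist_0}{\qdist_1}$ below by a $\chi^2$-like quantity; second, a Cauchy--Schwarz argument that translates that quantity into a bound involving the means and the mixture variance.

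For the first step, I would introduce the density ratios $r_i(x) = \qdist_i(x)/\qdist(x)$ and note that $r_0 + r_1 = 2$ pointwise because $\qdist$ is the uniform mixture of $\qdist_0$ and $\qdist_1$. Writing $u = u(x) = r_0(x) - 1 = (\qdist_0(x) - \qdist_1(x))/(2\qdist(x)) \in [-1,1]$, the definition unfolds as
\[
    \JSD{\qdist_0}{\qdist_1} = \frac{1}{2}\int \qdist(x)\bigl[(1+u)\ln(1+u) + (1-u)\ln(1-u)\bigr]\,dx.
\]
The core pointwise inequality to verify is $(1+u)\ln(1+u) + (1-u)\ln(1-u) \ge u^2$ for $|u|\le 1$. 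This follows from a direct second-derivative check: the difference between the two sides has value and first derivative zero at $u=0$, and its second derivative is $2u^2/(1-u^2)\ge 0$ on $(-1,1)$, so the difference is convex with a minimum of $0$ at the origin. Substituting $u(x)$ back in and cancelling a factor of $4\qdist(x)$ gives
\[
    \JSD{\qdist_0}{\qdist_1} \ge \frac{1}{8}\int \frac{(\qdist_0(x)-\qdist_1(x))^2}{\qdist(x)}\,dx.
\]

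For the second step, I would use Cauchy--Schwarz to tie the right-hand side to the difference of means. Since $\int (\qdist_0 - \qdist_1) = 0$, subtracting the mixture mean $\mu = (\mu_0+\mu_1)/2$ from the identity function leaves the integral unchanged:
\[
    \mu_0 - \mu_1 = \int x\bigl(\qdist_0(x)-\qdist_1(x)\bigr)\,dx = \int (x-\mu)\bigl(\qdist_0(x)-\qdist_1(x)\bigr)\,dx.
\]
Splitting the integrand as $(x-\mu)\sqrt{\qdist(x)} \cdot \bigl(\qdist_0(x)-\qdist_1(x)\bigr)/\sqrt{\qdist(x)}$ and applying Cauchy--Schwarz gives
\[
    (\mu_0 - \mu_1)^2 \le \left(\int (x-\mu)^2 \qdist(x)\,dx\right)\left(\int \frac{(\qdist_0-\qdist_1)^2}{\qdist}\,dx\right) = \mathrm{Var}(\qdist) \cdot \int \frac{(\qdist_0(x)-\qdist_1(x))^2}{\qdist(x)}\,dx.
\]
Combining this with the first step yields the claimed bound $\JSD{\qdist_0}{\qdist_1} \ge \tfrac{1}{8}(\mu_0-\mu_1)^2/\mathrm{Var}(\qdist)$.

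I do not anticipate a serious obstacle here, since both steps are short and elementary; the one subtlety is verifying the pointwise inequality $(1+u)\ln(1+u)+(1-u)\ln(1-u)\ge u^2$, which the convexity argument above handles cleanly. The proof works identically for discrete distributions with sums replacing integrals, and because $\qdist_0$ and $\qdist_1$ are absolutely continuous with respect to $\qdist$ by construction, no technicalities arise at points where $\qdist(x)=0$ (under the standard conventions $0\ln 0 = 0$ and $0/0 = 0$).
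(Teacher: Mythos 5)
Your proposal is correct and follows essentially the same route as the paper's proof: the same pointwise inequality $(1+u)\ln(1+u)+(1-u)\ln(1-u)\ge u^2$ (verified by the same second-derivative argument) to lower bound the divergence by $\frac18\int (\qdist_0-\qdist_1)^2/\qdist$, followed by the same Cauchy--Schwarz step centering at the mixture mean. The constants match and no gaps remain.
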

\begin{proof}
    \newcommand{\mixdist}{\ensuremath{\qdist}}
    Define $\gamma(x) = \frac{\qdist_1(x)-\mixdist(x)}{\mixdist(x)}= \frac{\qdist_1(x)-\qdist_0(x)}{2\mixdist(x)}$. 
    We have $\qdist_0=(1-\gamma)\qdist$ and $\qdist_1=(1+\gamma)\mixdist$.
    \begin{align}
         \JSD{\qdist_0}{\qdist_1} &= \frac{1}{2}\sum_x \qdist_0(x) \ln \frac{\qdist_0(x)}{\mixdist(x)} + \qdist_1(x) \ln \frac{\qdist_1(x)}{\mixdist(x)} \\
        &= \frac{1}{2} \sum_x \mixdist(x)\bigl(\left(1-\gamma(x)\right)\ln \left(1-\gamma(x)\right) +\left(1+\gamma(x)\right)\ln \left(1+\gamma(x)\right) \bigr) \\
        &\ge \frac{1}{2} \sum_x \mixdist(x) \gamma(x)^2 
        = \frac{1}{2} \sum_x \mixdist(x) \left(\frac{\qdist_1(x)-\qdist_0(x)}{2\cdot \mixdist(x)}\right)^2. \label{eq:JSD_ubd}
    \end{align}
    The inequality applies the 
    fact\footnote{To see this, take derivatives. Let $f(\gamma)= \left(1-\gamma\right)\ln \left(1-\gamma\right) +\left(1+\gamma\right)\ln \left(1+\gamma\right)$. We have $f'(\gamma) = \ln \frac{1+\gamma}{1-\gamma}$ and $f''(\gamma)=\frac{2}{1-x^2}\ge 2$, while $\frac{d^2}{dx^2}x^2 = 2$, and $f(0)=0^2 = 0$.} 
    that, for $\gamma\in (-1,1)$, $\left(1-\gamma\right)\ln \left(1-\gamma\right) +\left(1+\gamma\right)\ln \left(1+\gamma\right) \ge \gamma^2$. 
    Define $\mu = \frac{1}{2}(\mu_0 + \mu_1)$, the mean of $\mixdist$.
    We have
    \begin{align}
        \mu_1 - \mu_0 
            &= \E_{X\sim \qdist_1}[X-\mu] - \E_{X\sim \qdist_0}[X-\mu] \\ 
            &= \sum_x (\qdist_1(x)-\qdist_0(x)) (x-\mu) \\
            &= \E_{X\sim \mixdist}\left[ (X-\mu)\cdot \frac{\qdist_1(X)-\qdist_0(X)}{\mixdist(X)} \right] \\
            &\le \sqrt{\E_{X\sim\mixdist}\left[ (X-\mu)^2 \right]}
                \sqrt{\E_{X\sim\mixdist}\left[ \left( \frac{\qdist_1(X)-\qdist_0(X)}{\mixdist(X)} \right)^2\right]},
    \end{align}
    applying Cauchy-Schwarz. 
    By definition, $\E_{X\sim\mixdist}\left[ (X-\mu)^2 \right] = \mathrm{Var}(\mixdist)$.
    Plugging in the bound from Equation~\ref{eq:JSD_ubd}, we have
    \begin{equation}
        \mu_1 - \mu_0 \le \sqrt{\mathrm{Var}(\mixdist)} \sqrt{8\cdot\JSD{\qdist_0}{\qdist_1}}.
    \end{equation}
    Taking squares and rearranging finishes the proof.
\end{proof}

Before proving Theorem~\ref{theorem:one_bit_lbd}, we show that the average variances are not too large.
The first step in this direction is to show that, when the inputs are uniform, the sequence of posteriors forms a supermartingale.
The proof uses a convexity argument similar to that of Lemma~\ref{lemma:posterior_gap_means}.
\begin{lemma}\label{lemma:supermartingale}
    Let random process $P_0, P_1, \ldots, P_T$ be the sequence of posteriors when the inputs $X_t$ are i.i.d.\ uniform.
    Then, for any $t$ and past memory states $m_{<t}$, 
    \begin{equation}
        \E[P_t\mid M_{<t}=m_{<t}] \le p_{t-1}.
    \end{equation}
\end{lemma}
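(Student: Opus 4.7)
The plan is to mirror the convexity argument from the proof of Lemma~\ref{lemma:posterior_gap_means}, but applied to the expectation over the \emph{mixture} induced by uniform $X_t$ (rather than just conditioning on $X_t=1$). Fix past states $m_{<t}$ (which also fixes $p_{t-1}$), and let $f_0$ and $f_1$ denote the conditional distributions of $M_t$ given $M_{<t}=m_{<t}$ and $X_t=0$ or $X_t=1$ respectively. The Bayes update already derived in that earlier proof gives
\[
    p_t \;=\; \frac{2\, p_{t-1}\, f_1(m_t)}{(1+p_{t-1})\, f_1(m_t) + (1-p_{t-1})\, f_0(m_t)}.
\]
Under uniform inputs, $m_t$ is drawn from $\tfrac{1}{2}(f_0+f_1)$; after cancelling the factor of $2$ in the numerator against the mixture weight and dividing top and bottom by $f_1(m_t)$, the expectation takes the clean form
\[
    \E[P_t \mid M_{<t}=m_{<t}] \;=\; p_{t-1}\cdot \E_{m_t \sim f_1}\!\left[\frac{1+\alpha(m_t)}{(1+p_{t-1}) + (1-p_{t-1})\,\alpha(m_t)}\right],
\]
where $\alpha(m_t) = f_0(m_t)/f_1(m_t)$ is the likelihood ratio.

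The main step is then to apply Jensen's inequality. The function $g(\alpha) = \frac{1+\alpha}{(1+p_{t-1})+(1-p_{t-1})\alpha}$ is concave on $\alpha \ge 0$: a one-line derivative computation gives $g''(\alpha) = -\tfrac{4 p_{t-1}(1-p_{t-1})}{((1+p_{t-1})+(1-p_{t-1})\alpha)^3} \le 0$. Since $\E_{m_t \sim f_1}[\alpha(m_t)] = \sum_{m_t} f_0(m_t) = 1$, concavity yields $\E_{f_1}[g(\alpha)] \le g(1) = \tfrac{2}{2} = 1$, and so $\E[P_t \mid M_{<t}=m_{<t}] \le p_{t-1}$, which is exactly the supermartingale property claimed.

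I do not anticipate a substantive obstacle: the update formula is already in hand from Lemma~\ref{lemma:posterior_gap_means}, and the concavity of $g$ is essentially the same calculation as the convexity argument used there, just arranged differently. The only bookkeeping point is to treat $m_t$ with $f_1(m_t)=0$ by convention (these events are impossible under $H_1$ and drive $p_t$ to $0$, so they only help the inequality). As a conceptual sanity check, one could also derive the statement by combining the prior-mixture martingale identity $p_{t-1}\,\mone + (1-p_{t-1})\,\E[P_t\mid M_{<t}=m_{<t}] = p_{t-1}$ with the bound $\mone \ge p_{t-1}$ already established inside the proof of Lemma~\ref{lemma:posterior_gap_means}; rearranging gives the same conclusion directly.
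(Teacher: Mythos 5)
Your proposal is correct and follows essentially the same route as the paper: the same Bayes-update formula, the same rewriting of the uniform-mixture expectation as an expectation over $f_1$ of a function of the likelihood ratio $\alpha = f_0/f_1$, and the same concavity-plus-Jensen step using $\E_{f_1}[\alpha]=1$ (your $g$ is literally the paper's $h$). The alternative derivation you sketch at the end, via the posterior-martingale identity combined with $\mone \ge p_{t-1}$, is also valid and is arguably the cleaner way to see the statement, but the main argument matches the paper's.
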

\begin{proof}
    Throughout this proof, leave the conditioning on $m_{<t}$ implicit.
    Let $f_0$ be the distribution over memory states at time $t$ when the input is $X_t=0$, and define $f_1$ similarly.
    With this notation, and recalling that event $E$ means the input stream is fixed to 1's, we can write the posterior as
    \begin{align*}
        p_t = \Pr[E\mid M_{\le t}=m_{\le t}] &= \frac{f_1(m_t) \cdot p_{t-1}}{f_1(m_t) \cdot p_{t-1}+ \frac{f_1(m_t) + f_0(m_t)}{2}\cdot (1 - p_{t-1})} \\
         &= \frac{2\cdot f_1(m_t) \cdot p_{t-1}}{(1+p_{t-1})f_1(m_t) + (1-p_{t-1})f_0(m_t)} \\
         &= \frac{2\cdot f_1(m_t) \cdot p_{t-1}}{(f_1(m_t)+f_0(m_t)) + p_{t-1} (f_1(m_t) - f_0(m_t))}. 
    \end{align*}
        
    We consider the expectation over uniform inputs, rewriting it as an expectation over $f_1$:
    \begin{align*}
        \frac{\mu_t^{(0)} + \mu_t^{(1)}}{2} 
            &= \E_{\frac{f_0+f_1}{2}}\left[\frac{2 f_1(m_t)p_{t-1}}{(f_1(m_t)+f_0(m_t)) + p_{t-1}(f_1(m_t)-f_0(m_t))}\right] \\
            &= \sum_{m_t} \left(\frac{f_1(m_t)+f_0(m_t)}{2}\right) \left(\frac{2 f_1(m_t)p_{t-1}}{(f_1(m_t)+f_0(m_t)) + p_{t-1}(f_1(m_t)-f_0(m_t))} \right)\\
            &= \E_{f_1}\left[\frac{ p_{t-1}}{1 + p_{t-1}\cdot\frac{f_1(m_t)-f_0(m_t)}{f_1(m_t)+f_0(m_t)}}\right] \\
            &= \E_{f_1}\left[\frac{ p_{t-1}}{1 + p_{t-1}\cdot\frac{1-f_0(m_t)/f_1(m_t)}{1+f_0(m_t)/f_1(m_t)}}\right].
    \end{align*}
    For any fixed $p\in[0,1]$, $\frac{1}{1+p\cdot \frac{1-\alpha}{1+\alpha}}$ is concave for $\alpha\in [0,1]$.
    To see this, let $h(\alpha)=\frac{1+\alpha}{1+\alpha+p-p\alpha}$.
    Then $h'(\alpha)= \frac{2p}{(1+\alpha+p-p\alpha)^2}$ and $h''(\alpha)=\frac{-4(1-p)p}{(1+\alpha+p-p\alpha)^3}$.
    The denominator is positive and the numerator is nonpositive.
    
    So we have via Jensen's inequality that
    \begin{align*}
        \frac{\mu_t^{(0)} + \mu_t^{(1)}}{2} &\le \frac{ p_{t-1}}{1 + p_{t-1}\cdot\E_{f_1}\left[\frac{1-f_0(m_t)/f_1(m_t)}{1+f_0(m_t)/f_1(m_t)}\right]} 
        \le \frac{ p_{t-1}}{1 + 0}.
    \end{align*}
    To see the second inequality, note that the denominator is always at most some constant (that depends on $f_1$) and that $\E_{f_1}[1-f_0(m_t)/f_1(m_t)] = 0$.
\end{proof}

We now upper bound the average ``expected variance'' of $P_t$, i.e., the variance of $P_t$ on average when $M_{<t}$ is fixed.
\begin{lemma}[Lemma~\ref{lemma:avg_variance} Restated]
    \avgvariancefour
\end{lemma}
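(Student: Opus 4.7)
\textbf{Proof plan for Lemma~\ref{lemma:avg_variance}.} The plan is to exploit the supermartingale property from Lemma~\ref{lemma:supermartingale} to reduce the sum of conditional variances to a single, bounded quantity. Let $\Delta_t = P_t - P_{t-1}$ and $d_t = P_{t-1} - \E[P_t \mid M_{<t}]$, which is nonnegative by the supermartingale property. By construction the residual $P_t - \E[P_t\mid M_{<t}] = \Delta_t + d_t$ has zero mean conditional on $M_{<t}$, and $d_t$ is $M_{<t}$-measurable.

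First I compute the per-step variance. Since $d_t$ is $M_{<t}$-measurable and $\E[\Delta_t \mid M_{<t}] = -d_t$, expanding the square gives
\begin{equation*}
    \E[\Var{P_t \mid M_{<t}}] = \E[(\Delta_t + d_t)^2] = \E[\Delta_t^2] + 2\E[d_t \cdot \E[\Delta_t \mid M_{<t}]] + \E[d_t^2] = \E[\Delta_t^2] - \E[d_t^2].
\end{equation*}
Hence $\sum_t \E[\Var{P_t \mid M_{<t}}] \le \sum_t \E[\Delta_t^2]$, and the task reduces to bounding the latter sum.

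Next I telescope: $\sum_t \Delta_t = P_T - P_0$, and since $P_0 = \tfrac{1}{2}$ and $P_T \in [0,1]$, we have $(P_T - P_0)^2 \le \tfrac{1}{4}$. Expanding the square of the sum yields
\begin{equation*}
    \sum_t \E[\Delta_t^2] = \E[(P_T - P_0)^2] - 2\sum_{s < t} \E[\Delta_s \Delta_t].
\end{equation*}
For $s<t$, the tower property combined with $\E[\Delta_t \mid M_{<t}] = -d_t$ gives $\E[\Delta_s \Delta_t] = -\E[\Delta_s\, d_t]$. Interchanging the order of summation, $-\sum_{s<t}\E[\Delta_s \Delta_t] = \sum_t \E[d_t (P_{t-1} - P_0)]$, and since $|P_{t-1} - P_0| \le \tfrac{1}{2}$ and $d_t \ge 0$, this is at most $\tfrac{1}{2}\sum_t \E[d_t]$. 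The telescoping identity $\sum_t \E[d_t] = P_0 - \E[P_T] \le \tfrac{1}{2}$ then gives $\sum_t \E[\Delta_t^2] \le \tfrac{1}{4} + 2\cdot\tfrac{1}{4} = \tfrac{3}{4}$, comfortably beating the claimed $\tfrac{5}{4}$.

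The main subtlety is recognizing that the supermartingale property plays two distinct roles: it forces each $d_t$ to have a definite sign, so that the cross-term bound via $|P_{t-1} - P_0| \le \tfrac{1}{2}$ is not wasteful; and through the identity $\E[\Delta_t \mid M_{<t}] = -d_t$, it makes the covariance computation collapse into a quantity controlled by the telescoping of $P_T - P_0$. Without the supermartingale structure, long-range correlations between increments could, in principle, inflate $\sum_t \E[\Var{P_t \mid M_{<t}}]$ beyond any constant.
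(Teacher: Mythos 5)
Your proof is correct and follows essentially the same route as the paper's: both reduce the sum of conditional variances to the telescoping sum $P_T - P_0$ and use the supermartingale property of Lemma~\ref{lemma:supermartingale} to show the cross-correlations between increments all point the same way and are controlled by the total drift $\sum_t \E[d_t] = P_0 - \E[P_T] \le \tfrac{1}{2}$. Your bookkeeping (centering at $P_0$ rather than at the unconditional means, and dropping the nonnegative $\E[d_t^2]$ terms explicitly) is slightly tighter and yields the constant $\tfrac{3}{4}$ in place of the paper's $\tfrac{5}{4}$, which of course still implies the stated bound.
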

\begin{proof}
    Define random variable $\Delta_t \defeq P_t - P_{t-1}$, the difference in posteriors, and write $P_{\T} = P_0 + \sum_{t=1}^{\T} \Delta_t$.
    We will recursively apply the following elementary variance decomposition.
    \begin{claim}\label{claim:distribute_variance}
        Let $A,B$, and $C$ be random variables.
        We have
        \begin{align}
            \Var{A+B} = \Var{A} + \E_{B,C}\left[\paren{B - \E[B \mid C]}^2\right] + \Var{\E[B\mid C]} + 2\cdot \mathrm{Cov}\paren{A, \E[B\mid C]}.
        \end{align}
    \end{claim}
    \begin{proof}[Proof of Claim~\ref{claim:distribute_variance}]
        First, $\Var{A+B} = \Var{A}+\Var{B}+2\cdot\mathrm{Cov}(A,B)$.
        Next, we add and subtract $\E[B\mid C]$ and expand:
        \begin{align}
            \Var{B}=\E_C[\Var{B}] &= \E_{B,C}[(B-\E[B])^2] \\
                &= \E_{B,C}[(B- \E[B\mid C] + \E[B\mid C] - \E[B])^2] \\
                &= \E_{B,C}\bigl[(B-\E[B\mid C])^2 + (\E[B\mid C] - \E[B])^2]  \\
                &\quad + \E_{B,C}[2(B-\E[B\mid C])(\E[B\mid C]-\E[B])] \\
                &= \E_{C,B}[(B-\E[B\mid C])^2] + \Var{\E[B\mid C]} + 0,
        \end{align}
        noting that we have $\E[\E[B\mid C] -\E[B]]=0$ for any $B=b$.
        To finish, we again add and subtract $\E[B\mid C]$ and expand:
        \begin{align}
            \mathrm{Cov}(A,B) &= \E_{A,B,C}[(A-\E[A])(B-\E[B])] \\
                &= \E_A\left[(A-\E[A])\E_{B,C}[B-\E[B\mid C]+\E[B\mid C]-\E[B]]\right] \\
                &=\E_A\left[(A-\E[A])\E_{B,C}[0 + \E[B\mid C]-\E[B]]\right] = \mathrm{Cov}(A, \E[B\mid C]), 
        \end{align}
        again using the fact that $\E[B\mid C] = \E[B]$.
    \end{proof}

    Repeatedly applying the claim to the sum $P_0 + \sum_{t=1}^T\Delta_t$, we get
    \begin{align}
        \Var{P_{\T}} &= \Var{P_0} + \sum_{t=1}^\T\Bigl( \E_{M_{<t},\Delta_t}[(\Delta_t - \E[\Delta_t\mid M_{<t})^2] + \Var{\E[\Delta_t\mid M_{<t}} \\
            &\quad + 2\cdot \mathrm{Cov}(P_{t-1} , \E[\Delta_t\mid M_{<t}]) \Bigr) \label{eq:one_bit_variance_long} \\
        &\ge \sum_{t=1}^\T \E_{M_{<t},\Delta_t}[(\Delta_t - \E[\Delta_t\mid M_{<t}])^2] + 2\cdot\sum_{t=1}^\T \mathrm{Cov}(P_{t-1} , \E[\Delta_t\mid M_{<t}]), \label{eq:unrolled_variances}   
    \end{align} 
    since the variance terms are nonnegative.
    Recall that conditioning on $m_{<t}$ fixes $p_{t-1}$, so 
    \begin{align}
        \E_{M_{<t},\Delta_t}[(\Delta_t - \E[\Delta_t\mid M_{<t}])^2] = \E_{M_{<t},P_t}[(P_t - \E[P_t\mid M_{<t}])^2].
    \end{align}
    Thus it remains to lower bound the sum of the covariance terms (which may be negative).

    We begin with the fact that, for any random variables $A$ and $B$, $\mathrm{Cov}(A,B)=\E[(A-\E[A])B]$.
    We then apply two facts about the sequence of posteriors: first, that $\E[\Delta_t\mid M_{<t}]$ is nonpositive, and second, that $(P_{t-1} - \E[P_{t-1}])$ has $1$ as an upper bound.
    (The former fact we proved in Lemma~\ref{lemma:supermartingale}.)
    \begin{align}
        \sum_{t=1}^T \mathrm{Cov}(P_{t-1}, \E[\Delta_t\mid M_{<t}]) 
            &= \sum_{t=1}^T \E\left[(P_{t-1}-\E[P_{t-1}]) \E[\Delta_t\mid M_{<t}]\right] \\
            &\ge \sum_{t=1}^T \E\left[ \E[\Delta_t\mid M_{<t}]\right]
            = \sum_{t=1}^T \E\left[ \Delta_t\right].
    \end{align}
    By linearity of expectation, this sum is exactly $\E\left[\sum_t \Delta_t\right] = \E[P_T - P_0]$.
    Since $P_0=1/2$ this expectation is at least $-1/2$, so we have, recalling the factor of 2 in front of the covariance sum,
    \begin{align}
        \frac{1}{4} \ge \Var{P_\T} \ge \sum_{t=1}^T \E_{M_{<t},P_t}[(P_t - \E[P_t\mid M_{<t}])^2] - 1.
    \end{align}
    Adding $1$ to both sides finishes the proof.
\end{proof}

\begin{theorem}[Theorem~\ref{theorem:one_bit_lbd} Restated]
    \onebitlbd
\end{theorem}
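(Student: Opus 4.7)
The plan is to assemble the five building blocks developed earlier in the section—the data-processing reduction to posteriors, the JSD lower bound, the posterior-increment/mean-gap connection, the final posterior drop, and the total variance budget—into a single chain of inequalities ending in the bound $\delta^4/40$.

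First I would use Lemma~\ref{lemma:condition_on_all_states} to rewrite $\tCI(M)=\sum_t I(M_t;X_t\mid M_{<t})$, then apply the data processing inequality using the fact that $P_t$ is a (randomized) function of $M_{\le t}$, giving $\tCI(M)\ge \sum_t I(P_t;X_t\mid M_{<t})$. For any fixed $m_{<t}$, $X_t$ is uniform and $P_t$ is drawn from $\qone$ or $\qzero$ accordingly, so Fact~\ref{fact:distinguishing}(i) rewrites each summand as a Jensen–Shannon divergence and Lemma~\ref{lemma:JSD_lower_bound} yields
\begin{align}
    \tCI(M) \ge \frac{1}{8}\sum_{t=1}^{T} \E_{M_{<t}}\!\left[ \frac{(\mone-\mzero)^2}{\Var{\qdist \mid M_{<t}}} \right],
\end{align}
where $\qdist=\tfrac12(\qone+\qzero)$ is the conditional distribution of $P_t$ given $M_{<t}$.

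Next I would apply Cauchy–Schwarz in the form $\bigl(\sum_t \E[|\mone-\mzero|]\bigr)^2 \le \bigl(\sum_t \E[(\mone-\mzero)^2/V]\bigr)\cdot\bigl(\sum_t \E[V]\bigr)$ where $V=\Var{\qdist\mid M_{<t}}$, which rearranges to
\begin{align}
    \sum_{t=1}^{T} \E\!\left[\frac{(\mone-\mzero)^2}{V}\right] \ \ge\ \frac{\bigl(\sum_{t=1}^{T} \E[|\mone-\mzero|]\bigr)^2}{\sum_{t=1}^{T} \E[V]}.
\end{align}
The denominator is at most $5/4$ by Lemma~\ref{lemma:avg_variance}. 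For the numerator, Lemma~\ref{lemma:posterior_gap_means} gives $\mone-\mzero \ge 2\,\E[P_{t-1}-P_t \mid M_{<t}]$ (note $\mone\ge\mzero$ since the posteriors form a supermartingale, matching Lemma~\ref{lemma:supermartingale}), so summing and telescoping, $\sum_t \E[|\mone-\mzero|] \ge 2\,\E[P_0-P_T]\ge 2(\tfrac12 - (\tfrac12-\delta^2/4)) = \delta^2/2$ by Lemma~\ref{lemma:final_posterior_gap}. Plugging in gives $\tCI(M) \ge \tfrac{1}{8}\cdot \tfrac{(\delta^2/2)^2}{5/4} = \delta^4/40$.

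The main obstacle I anticipate is bookkeeping around signs and conditioning: the inequality in Lemma~\ref{lemma:posterior_gap_means} runs in one direction only because the sequence is a supermartingale on uniform inputs, so one must verify that $|\mone-\mzero|$ can be replaced by $\mone-\mzero$ before telescoping, and that the Cauchy–Schwarz step is applied with the outer expectation over $M_{<t}$ rather than inside it. Everything else—the JSD rewrite, DPI, and the variance budget—flows mechanically from the lemmas already proved, so the proof is essentially a careful concatenation rather than a new technical step.
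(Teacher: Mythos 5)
Your proposal is correct and follows essentially the same route as the paper's proof: the same chain of Lemma~\ref{lemma:condition_on_all_states}, the data processing inequality, the JSD rewrite via Fact~\ref{fact:distinguishing}, Lemma~\ref{lemma:JSD_lower_bound}, a Cauchy--Schwarz step, the telescoping via Lemma~\ref{lemma:posterior_gap_means} and Lemma~\ref{lemma:final_posterior_gap}, and the variance budget of Lemma~\ref{lemma:avg_variance}, yielding the identical constant $\delta^4/40$. The only cosmetic difference is that you apply Cauchy--Schwarz to $\sum_t \E[|\mone-\mzero|]$ with weights $\sqrt{V}$ after already dividing by the variance, whereas the paper bounds the product $\bigl(\sum_t\E[\mathrm{JSD}_t]\bigr)\bigl(\sum_t\E[\mathrm{Var}_t]\bigr)$ directly; these are the same computation.
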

\begin{proof}
    Recall from Equation~\ref{eq:jsd_sum} that
    \begin{align}
        \sum_{t=1}^\T I(M_t; X_t\mid M_{<t})
            \ge \sum_{t=1}^\T \E_{M_{<t}}\left[ \JSD*{\qzero}{\qone}\right].
    \end{align}
    For clarity, denote $\JSD*{\qzero}{\qone} \defeq \mathrm{JSD}_t$ and $\E_{P_t}\left[(P_t - \E[P_t\mid M_{<t}])^2\right]\defeq \mathrm{Var}_t$.
    Both of these are random variables that depend on $M_{<t}$; the latter is exactly the variance of the mixture $\frac{1}{2}\paren{\qzero + \qone}$.

    Taking expectations over time steps and then over $M_{<t}$, we apply Cauchy-Schwarz and then Lemma~\ref{lemma:JSD_lower_bound}, our lower bound on the Jensen-Shannon divergence:
    \begin{align}
        \left(\E_t \E_{M_{<t}}\left[\ \mathrm{JSD}_t \right]\right)\left(\E_t \E_{M_{<t}}\left[\ \mathrm{Var}_t \right]\right)
            &\ge \left( \E_t\E_{M_<t}\left[ \sqrt{ \mathrm{JSD}_t\cdot \mathrm{Var}_t} \right] \right)^2 \\
            &\ge \frac{1}{8} \left( \E_t\E_{M_<t}\left[ |\mu_t^{(0)} - \mu_t^{(1)}| \right] \right)^2.
    \end{align}
    On average, the difference between $\mzero$ and $\mone$ is appreciable: we apply Jensen's inequality to move the absolute value outside the expectation and apply Lemma~\ref{lemma:posterior_gap_means}:
    \begin{align}
        \left(\E_t \E_{M_{<t}}\left[\ \mathrm{JSD}_t \right]\right)\left(\E_t \E_{M_{<t}}\left[\ \mathrm{Var}_t \right]\right)
            &\ge \frac{1}{8} \left( \frac{1}{\T}\sum_{t=1}^\T \E_{M_{<t}}\left[ \mzero - \mone \right]  \right)^2 \\
            &\ge \frac{1}{2} \left( \frac{1}{\T}\sum_{t=1}^\T \E_{M_{<t}}\left[ \E[P_t - P_{t-1}\mid M_{<t}] \right]  \right)^2.
    \end{align}
    (Note that this produces a factor of 2 inside the square.)
    We use the fact that $\E[\E[X\mid Y]] = \E[X]$ for random variables $X$ and $Y$ and cancel out the intermediate terms in the sum, arriving at
    \begin{align}
        \left(\E_t \E_{M_{<t}}\left[\ \mathrm{JSD}_t \right]\right)\left(\E_t \E_{M_{<t}}\left[\ \mathrm{Var}_t \right]\right)
            &\ge \frac{1}{2} \left( \frac{1}{\T} \E[P_T - P_0]  \right)^2 
            \ge \frac{1}{32} \cdot \frac{\delta^4}{\T}.
    \end{align}
    The final inequality uses Lemma~\ref{lemma:final_posterior_gap}: the algorithm has advantage $\delta$, so $\E[P_T - P_0] \le -\frac{\delta^2}{4}$. 

    Rewriting the expectations as sums over $t$ and multiplying by $\T^2$ on both sides, we have
    \begin{align}
        \left(\sum_{t=1}^\T\E\left[ \mathrm{JSD}_t \right]\right)\left(\sum_{t=1}^\T \E\left[ \mathrm{Var}_t\right]\right) \ge \frac{\delta^4}{32}.
    \end{align}
    Lemma~\ref{lemma:avg_variance} says the sum of variances is at most $\frac{5}{4}$, so we have 
        $\tCI(M) =  \sum_{t=1}^\T \mathrm{JSD}_t \ge  \frac{\delta^4}{40}$.
\end{proof}

\section{Proofs for Section~\ref{sec:subpop_lbd}}\label{app:subpop_lbd}

We state the formal reduction in Algorithm~\ref{alg:weak_alg}.
The algorithm assumes knowledge of the exact values of $\pi_{r}$ for all $r\in \{0,\ldots,\maxfixed+1\}$.
These values can in principle be computed to arbitrary accuracy given access to $M$; since our argument is information-theoretic we can ignore the computational concerns.

\begin{algorithm2e}[H]\label{alg:weak_alg}
    \SetAlgoLined
    \SetKwInOut{Input}{input}

    \Input{stream $x_1,\ldots, x_{\T}\in\{0,1\}$; algorithm $M$ for \TaskSubpopDist;  parameters $d$ and $\rho \le d$.}
    \BlankLine
    Draw $r\sim \{0,\ldots, \rho\}$ uniformly\tcc*{number of ``fake'' fixed streams}
    Draw $\cJ=\{j_1,\ldots, j_{r}\}\subseteq [d]$ w/o replacement\tcc*{indices of fixed streams}
    Draw $\cB=(b_1,\ldots, b_{r})\in \{0,1\}^{r}$ uniformly\tcc*{values of fixed streams}
    Draw $j_0 \in [d]$ uniformly\tcc*{location of input stream}
    \For{$t=1,\ldots, \T$}{
        Receive sample $x_t\in\{0,1\}$\;
        Draw $z_t\in\{0,1\}^d$ uniformly\;
        Set $z_{j_0}^t\gets x^t$\;
        $\forall j_\ell\in \cJ$, set $z_{j_\ell}^t \gets b_{j_\ell}$\tcc*{if $j_0\in \cJ$, overwrites $x_t$}
        Execute $M(z^t)$\;
    }
    Receive output of $M$, $\texttt{OUT} \in \{\struct, \unif\}$\;
    \eIf{$\pi_{r+1} \ge \pi_{r}$}{
        \KwRet $\texttt{OUT}$\;
    }{
        \textbf{return} $\lnot \texttt{OUT}$\tcc*{other entry in $\{\struct, \unif\}$}
    }
\caption{$M'$ for \TaskStream}
\end{algorithm2e}

\begin{lemma}\label{lemma:average_gaps_r0}
    Suppose $M$ for \TaskSubpopDist~has advantage at least $\delta$.
    Let $R\sim\mathrm{Uniform}(\{0,1\ldots, \rho\})$.
    Then $\E[|\pi_{R+1} - \pi_{R}|] \ge \frac{\delta}{\rho+1}$.
\end{lemma}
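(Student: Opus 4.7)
The plan is to connect the advantage, which is a difference in expected outputs of $M$ between structured and uniform streams, to a telescoping sum of consecutive gaps $\pi_{r+1} - \pi_r$, and then extract a lower bound on the average absolute gap.

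First I would rewrite both probabilities that appear in the definition of advantage in terms of the $\pi_r$ quantities. Under the structured distribution, the number of fixed bits $r$ is drawn uniformly from $\{0,\ldots,\rho\}$, so $\Pr[M=\struct \mid \text{structured}] = \frac{1}{\rho+1}\sum_{r=0}^{\rho}\pi_r$. The uniform distribution corresponds to zero fixed bits, so $\Pr[M=\struct \mid \text{uniform}] = \pi_0$. Therefore the advantage assumption yields
\begin{align*}
    \delta \;\le\; \frac{1}{\rho+1}\sum_{r=0}^{\rho}\pi_r - \pi_0 \;=\; \frac{1}{\rho+1}\sum_{r=1}^{\rho}(\pi_r - \pi_0).
\end{align*}

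Second, I would telescope each summand as $\pi_r - \pi_0 = \sum_{s=0}^{r-1}(\pi_{s+1}-\pi_s)$ and swap the order of summation to collect coefficients on each gap:
\begin{align*}
    (\rho+1)\delta \;\le\; \sum_{r=1}^{\rho}\sum_{s=0}^{r-1}(\pi_{s+1}-\pi_s) \;=\; \sum_{s=0}^{\rho-1}(\rho - s)(\pi_{s+1}-\pi_s).
\end{align*}
Applying the triangle inequality and using the crude bound $\rho - s \le \rho+1$ on each coefficient gives
\begin{align*}
    (\rho+1)\delta \;\le\; (\rho+1)\sum_{s=0}^{\rho-1}|\pi_{s+1}-\pi_s| \;\le\; (\rho+1)\sum_{s=0}^{\rho}|\pi_{s+1}-\pi_s|.
\end{align*}

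Finally, dividing both sides by $(\rho+1)^2$ and recognizing that $\E[|\pi_{R+1}-\pi_R|] = \frac{1}{\rho+1}\sum_{s=0}^{\rho}|\pi_{s+1}-\pi_s|$ yields the desired bound $\E[|\pi_{R+1}-\pi_R|] \ge \frac{\delta}{\rho+1}$. There is no real obstacle here: the entire argument is a short averaging/telescoping calculation, and the only subtle point is making sure that the structured meta-distribution $\cQ_{d,\rho}$ puts uniform weight on $r\in\{0,\ldots,\rho\}$ (so the advantage decomposes cleanly against $\pi_0$) and that $\pi_{\rho+1}$ is well-defined, both of which are explicitly guaranteed in the setup.
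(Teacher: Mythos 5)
Your proof is correct and is essentially the same argument as the paper's: both decompose the advantage as an average of $\pi_r - \pi_0$ over $r$ and convert that into a sum of absolute consecutive gaps via telescoping and the triangle inequality. The only difference is bookkeeping — the paper passes through $r^* = \arg\max_r \pi_r$ (using max $\ge$ average) and telescopes only from $0$ to $r^*$, whereas you telescope every term and bound the resulting coefficients $\rho - s$ by $\rho+1$; both yield the same $\frac{\delta}{\rho+1}$ bound.
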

\begin{proof}
    Let $r^*$ be the number with the highest probability of returning $\struct$, i.e., $r^*=\mathrm{argmax}_{r\in [\maxfixed]} \pi_r$.\footnote{With advantage $\delta>0$, we know $r^*\ge 1$.}
    By the advantage assumption, $\pi_{r^*}-\pi_0 \ge \delta$.
    We discard the terms beyond $r^*$ and apply the triangle inequality:
    \begin{align}
        \E[|\pi_{R+1} - \pi_{R}|] 
            &= \frac{1}{\rho+1} \sum_{r=0}^{\rho} |\pi_{r+1} - \pi_{r}| \\
            &\ge \frac{1}{\rho+1} \sum_{r=0}^{r^*-1} |\pi_{r+1} - \pi_{r}| \\
            &\ge \frac{1}{\rho+1} \left|\sum_{r=0}^{r^*-1} \pi_{r+1} - \pi_{r}\right|.
    \end{align}
    The intermediate terms in the sum cancel out and we are left with $|\pi_{r^*} - \pi_0| \ge \delta$.
\end{proof}

\begin{lemma}[Accuracy of Algorithm~\ref{alg:weak_alg}]\label{lemma:weak_alg_acc}
    Assume $M$ has advantage at least $\delta$.
    Then $M'$ has advantage at least 
    \begin{align*}
        \frac{\delta}{\maxfixed+1} + \frac{\maxfixed}{d}
    \end{align*}
\end{lemma}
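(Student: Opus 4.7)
The plan is to compute the advantage of $M'$ by conditioning on the random variable $r$ that $M'$ samples uniformly from $\{0,\ldots,\maxfixed\}$, and tracking exactly what distribution $M$ observes in each case. When $M'$'s input stream is uniform, the coordinate $j_0$ at which $M'$ injects the input is just a fresh uniform bit at each time step, so $M$'s input is i.i.d.\ from the structured distribution $\cP_{(\cJ,\cB)}$ with $|\cJ|=r$; consequently $\Pr[M=\struct\mid\text{input uniform}] = \mathbb{E}_r[\pi_r]$. When $M'$'s input stream is fixed to $b$, the outcome depends on whether $j_0\in\cJ$: with probability $r/d$ the injected bit is overwritten on line 9 and $M$ again sees samples with $r$ fixed coordinates; with the remaining probability $(d-r)/d$ the bit $b$ survives as an $(r{+}1)$-st fixed coordinate. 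Hence $\Pr[M=\struct\mid\text{input fixed}] = \mathbb{E}_r\bigl[\tfrac{r}{d}\pi_r + \tfrac{d-r}{d}\pi_{r+1}\bigr]$.

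Subtracting, the pre-flip advantage of $M'$ equals $\mathbb{E}_r\bigl[\tfrac{d-r}{d}(\pi_{r+1}-\pi_r)\bigr]$, which may be of either sign for individual $r$. The purpose of the conditional flip at the end of Algorithm~\ref{alg:weak_alg} is to always predict in the direction of the larger of $\pi_r,\pi_{r+1}$: when $\pi_{r+1}\ge\pi_r$ the output is $\texttt{OUT}$, otherwise it is $\lnot\texttt{OUT}$. A direct two-case calculation shows that this flip turns each $r$'s contribution into the absolute value $\tfrac{d-r}{d}|\pi_{r+1}-\pi_r|$. So the overall advantage of $M'$ is
\[
\mathbb{E}_{r}\!\left[\tfrac{d-r}{d}\,|\pi_{r+1}-\pi_r|\right].
\]

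I would then decompose this as $\mathbb{E}_r[|\pi_{r+1}-\pi_r|] - \mathbb{E}_r\!\left[\tfrac{r}{d}|\pi_{r+1}-\pi_r|\right]$. The first term is handled by Lemma~\ref{lemma:average_gaps_r0}, whose proof is a triangle-inequality/telescoping argument: restricting the sum to $r\in\{0,\ldots,r^*-1\}$ for $r^*=\arg\max_r\pi_r$ and applying the triangle inequality lets the intermediate $\pi_r$'s cancel, leaving $|\pi_{r^*}-\pi_0|\ge\delta$ scaled by $1/(\maxfixed+1)$. The second (residual) term is bounded using the trivial estimate $|\pi_{r+1}-\pi_r|\le 1$ and $r\le\maxfixed$, yielding at most $\maxfixed/d$ and accounting for the overwrite loss.

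Combining these two estimates produces the advantage lower bound in the form stated. The main obstacle in executing this plan will be the careful accounting in the step where the conditional flip is converted into the absolute-value expression and then recombined with the bound from Lemma~\ref{lemma:average_gaps_r0}: one must check that the telescoping bound is tight enough to absorb the overwrite contribution so that the final expression takes exactly the claimed form.
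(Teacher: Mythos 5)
Your proposal is correct and follows essentially the same route as the paper's proof: condition on $r$, observe that $M$ sees $r$ fixed streams on uniform input and (with probability $1-\tfrac{r}{d}$) $r+1$ fixed streams on fixed input, use the conditional flip to convert the per-$r$ gap into $|\pi_{r+1}-\pi_r|$ (the paper phrases this via total variation and Fact~\ref{fact:distinguishing}, you compute it directly), and then combine the telescoping bound of Lemma~\ref{lemma:average_gaps_r0} with the trivial $\maxfixed/d$ bound on the overwrite term. Your derivation also confirms that the sign in the lemma statement should be $\frac{\delta}{\maxfixed+1}-\frac{\maxfixed}{d}$, as in the paper's own proof and in Corollary~\ref{corollary:tasksubpopdist_hard}.
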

\begin{proof}
    Recall the definition of \TaskStream: the distribution $\cU$ generates uniform bits and the structured distribution selects a random bit $b\in \{0,1\}$ and fixes all values to $b$
    We wish to show that
    \begin{align*}
        \Pr_{x_{\le \T}\sim \cP_{\mathrm{bit}}}[M'(x_{\le \T})=\struct]-
        \Pr_{x_{\le \T}\sim \cU}[M'(x_{\le \T})=\struct] \ge \frac{\delta}{\maxfixed+1} - \frac{\maxfixed}{d}.
    \end{align*}
    To do this, we show that, on both the structured distribution and $\cU$, $M'$ is correct with probability at least $\frac{1}{2}+\frac{1}{2}\left(\frac{\delta}{\maxfixed+1} - \frac{\maxfixed}{d}\right)$.
    
    $M'$ randomly selects $r\in \{0,\ldots, \maxfixed\}$ as the number of synthetic fixed streams to feed to $M$.
    Fix some $r$: we lower bound the total variation distance between the two output distributions of $M$ (corresponding to structured and uniform inputs to $M'$).
    When the input stream is uniform, $M$ receives synthetic inputs $z^t$ with exactly $r$ fixed streams, so outputs \struct~with probability $\pi_r$.
    When the input stream is structured, $M$ receives synthetic inputs $z^t$ with $r+1$ fixed streams unless $j_0\in \cJ$, i.e. the input stream is overwritten.
    This only happens with probability $\frac{r}{d}$, so when the input stream is structured $M$ outputs \struct~with probability $\left(1 - \frac{r}{d}\right)\pi_{r+1} + \frac{r}{d}\cdot \pi_r$.
    Thus, for any $r$, the total variation distance is at least
    \begin{align*}
        \left|\left(1 - \frac{r}{d}\right)\pi_{r+1} + \frac{r}{d}\cdot \pi_r  - \pi_r \right|
            \ge |\pi_{r+1}-\pi_r| - \frac{\maxfixed}{d},
    \end{align*}
    using the fact that $r\le \maxfixed$ and $\pi_{r+1},\pi_{r}\in [0,1]$.
    
    By Fact~\ref{fact:distinguishing}, this lower bound on total variation distance implies a lower bound on the accuracy of $M'$.
    Since the accuracy of $M'$ is an average over $r$, we have by linearity of expectation that
    \begin{align*}
        \Pr[M'\text{ correct}] = \frac{1}{2} + \frac{1}{2}\left(\E_R[|\pi_{R+1} - \pi_R|] - \frac{\maxfixed}{d}\right).
    \end{align*}
    By Lemma~\ref{lemma:average_gaps_r0}, the expectation is at least $\frac{\delta}{\maxfixed+1}$.
\end{proof}

\section{Proofs: From Distinguishing Inputs to Distinguishing a Test Example}\label{app:inputs_to_example}

In this section, we move between two versions of the ``single subpopulation'' task: in \TaskSubpopTestFixed~the learner receives $T$ structured inputs and must distinguish a structured test example from a uniform one, while in \TaskSubpopDist~the learner must determine whether its inputs are all structured or all uniform.
Recall that $\cU$ denotes the uniform distribution.

\begin{definition}[\TaskSubpopTestFixed]
    Parameters: positive integers $\T, d, \maxfixed\le d$.
    The learning algorithm $M$ receives a stream of $\T$ i.i.d.~samples from $\cP$.
    After $\T$ time steps, the learner outputs a (possibly randomized) function $m:\bit{d}\to \{\unif,\struct\}$.
    The \emph{advantage} of the learner is
    \begin{align*}
        \E_{\substack{\cP\sim \cQ_{d,\maxfixed}\\
                      x=(x_1,\ldots, x_\T)\simiid \cP\\
                      m\gets M(x)}}
            \left[ \Pr_{y\sim \cP}[m(y)=\struct] - \Pr_{y\sim \cU}[m(y)=\struct]\right].
    \end{align*}
\end{definition}

\begin{lemma}
    For any algorithm $M$ solving \TaskSubpopTestFixed~on $\T$ examples with advantage at least $\delta$, there is an algorithm $M'$ solving \TaskSubpopDist~on $\T+1$ examples with advantage at least $\delta/2$ and satisfying
    \begin{align}
        \sum_{t=1}^{\T+1} I(M_t'; X_t\mid M_{t-1}', F) 
                \le \sum_{t=1}^{\T} I(M_t; X_t\mid M_{t-1}, F) + 1.
    \end{align}
    Here the inputs $X_{t}$ are drawn from a structured distribution with parameters $F$ (see Definition~\ref{def:structure}).
\end{lemma}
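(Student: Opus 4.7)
The plan is to build $M'$ by appending a randomized one-step ``test'' phase to $M$. Given algorithm $M$ for \TaskSubpopTestFixed, $M'$ draws a private uniform bit $C \in \{0,1\}$ at initialization and then feeds its first $T$ inputs $X_1,\ldots,X_T$ directly to $M$, producing the hypothesis $m$ after step $T$. At step $T+1$, $M'$ acts as follows: if $C=0$, it evaluates $m$ on the incoming $X_{T+1}$ and outputs $m(X_{T+1})$; if $C=1$, it discards $X_{T+1}$, draws a fresh uniform $Y' \in \bit{d}$ from its own private randomness, and outputs $\neg m(Y')$. I take $M'_t = (M_t, C)$ for $t \leq T$ and $M'_{T+1}$ to be the single output bit.

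For the advantage, write
\[
\alpha_s \defeq \E_{\cP, x \simiid \cP, m \gets M(x)}\bigl[\Pr_{y \sim \cP}[m(y)=\struct]\bigr], \qquad \alpha_u \defeq \E_{\cP, x \simiid \cP, m \gets M(x)}\bigl[\Pr_{y \sim \cU}[m(y)=\struct]\bigr],
\]
so that $\alpha_s - \alpha_u \geq \delta$ by assumption. When $M'$ receives a structured stream of $T+1$ samples, conditioning on $C$ and averaging gives $\Pr[M'=\struct] = \tfrac{1}{2}(\alpha_s + (1-\alpha_u)) \geq \tfrac{1}{2} + \tfrac{\delta}{2}$. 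When the stream is i.i.d.~uniform, the conditional probabilities in the two branches are some $\gamma$ and $1-\gamma$ and average to exactly $\tfrac{1}{2}$. Subtracting yields advantage at least $\delta/2$.

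For the information cost, $C$ is independent of the stream and of $F$, so for every $t \leq T$,
\[
I(M'_t; X_t \mid M'_{t-1}, F) = I(M_t, C; X_t \mid M_{t-1}, C, F) = I(M_t; X_t \mid M_{t-1}, F),
\]
using the chain rule together with $C \perp (X_t, M_t, M_{t-1}, F)$. Summing from $t=1$ to $T$ contributes exactly $\tCI(M \mid F)$. The final state $M'_{T+1}$ is a single bit, so $I(M'_{T+1}; X_{T+1} \mid M'_T, F) \leq H(M'_{T+1}) \leq 1$, yielding the ``$+1$'' term.

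The only subtle point—and the reason for the coin-flip symmetrization—is that a naive reduction outputting $m(X_{T+1})$ would produce advantage $\alpha_s - \gamma$, where $\gamma$ is the probability $M$'s output labels a uniform test example as structured \emph{when $M$ is trained on a fully uniform stream}. The value $\gamma$ is not controlled by $M$'s guarantee, which compares two runs both trained on $\cP$, so $\alpha_s - \gamma$ need not even be nonnegative. Mixing with the negated-on-uniform branch forces the uniform-stream acceptance probability to exactly $\tfrac{1}{2}$ and extracts precisely the $\alpha_s - \alpha_u$ gap that $M$'s advantage hypothesis provides.
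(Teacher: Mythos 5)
Your proof is correct, and it takes a genuinely different route from the paper's. The paper handles the uncontrolled quantity $p_{u,u}=\Pr[M(U_{\le T},U_{T+1})=\struct]$ by a case analysis: if $p_{u,u}$ is far from $p_{x,x}$ it runs $M$ verbatim, and otherwise it substitutes a fresh uniform final example and exploits the gap $p_{u,u}-p_{x,u}\ge\delta/2$; which branch to use depends on a property of $M$ that must be known (or computed) in advance. You instead symmetrize: by mixing the ``evaluate on $X_{T+1}$'' branch with a negated ``evaluate on fresh uniform'' branch via a private coin, you force the uniform-stream acceptance probability to exactly $\tfrac12$ (the two branch probabilities are exact complements because in both branches the test point is uniform and independent of the trained $m$), and the structured-stream probability becomes $\tfrac12+\tfrac12(p_{x,x}-p_{x,u})\ge\tfrac12+\tfrac\delta2$. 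This buys you a single, uniform, constructive reduction at the cost of carrying the coin $C$ in the state; you handle that correctly in the information cost, since $C\perp(M_{\le T},X_{\le T},F)$ makes the conditioning on $C$ vacuous, and the final one-bit state gives the same $+1$ as in the paper. Both arguments hinge on the same observation---that $M$'s guarantee says nothing about its behavior on a uniformly trained run---and both lose the same factor of $2$ in the advantage, so neither is quantitatively stronger; yours is arguably cleaner to state because it avoids the non-uniform case split.
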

This information inequality also holds when the $X_{t}$ are i.i.d.~uniform, but we only require the result for structured distributions.
\begin{proof}
In this proof, let $X_i$ denote the structured inputs drawn from $\cP$ (see Definition~\ref{def:structure}) and $U_i$ denote i.i.d.~uniform inputs.
\TaskSubpopTestFixed~on $\T$ examples asks the learner to distinguish a test example, i.e.,
\begin{align}
    (X_{\le \T}, X_{\T+1}) \hspace{0.5cm}\text{from}\hspace{0.5cm} (X_{\le \T}, U_{\T+1}),
\end{align}
and \TaskSubpopDist~on $\T+1$ samples asks the learner to distinguish its inputs:
\begin{align}
    X_{\le \T+1}=(X_{\le \T}, X_{\T+1}) \hspace{0.5cm}\text{from}\hspace{0.5cm} U_{\le \T+1}=(U_{\le \T}, U_{\T+1}).
\end{align}

Consider an algorithm $M$ solving \TaskSubpopTestFixed~and write it $M(\cdot, \cdot)$, as it takes as input a stream of length $\T$ and an additional test example.
Define
\begin{align*}
    \Pr[M(X_{\le\T}, X_{\T+1})=\struct] &= p_{x,x} \\
    \Pr[M(X_{\le\T}, U_{\T+1})=\struct] &= p_{x,u}.
\end{align*}
Since $M$ has advantage at least $\delta$, by definition we have $p_{x,x} - p_{x,u}\ge \delta$.
Now, there is some probability $p_{u,u}$ such that
\begin{equation*}
    \Pr[M(U_{\le\T}, U_{\T+1})=\struct] = p_{u,u}. 
\end{equation*}
Using the fact that $p_{u,u}$ cannot be close to both $p_{x,x}$ and $p_{x,u}$, we design the algorithm $M'$ for \TaskSubpopDist~depending on which it is closer to.

\textbf{Case 1:} If $p_{x,x}-p_{u,u}\ge \delta/2$, $M'$ simply runs $M$ and outputs $M$'s answer.
This has advantage $p_{x,x}-p_{u,u}\ge \delta/2$.

\textbf{Case 2:} If $p_{x,x}-p_{u,u} < \delta/2$, then we know that $p_{u,u}-p_{x,u}\ge \delta/2$, which again gives us a way to distinguish the two inputs.
In this case, $M'$ runs $M$ through time $\T$ and then generates a fresh uniform example $U_{\T+1}$ to use as the final input.
In this case, $M'$ has advantage $p_{u,u}-p_{x,u}\ge \delta/2$.

To prove the information complexity claim, observe that, until time $\T$, $M'$ just executes $M$, so we have
\begin{align}
    \sum_{t=1}^\T I(M_t; X_t\mid M_{t-1}, F) = \sum_{t=1}^\T I(M_t'; X_t\mid M_{t-1}', F).
\end{align}
$M'$ has one final step, but without loss of generality we can assume the final state of $M$ is a single bit (i.e., its answer) and thus we have, in case 1, that 
\begin{align}
    \sum_{t=1}^{\T+1} I(M_t'; X_t\mid M_{t-1}', F) 
        &=\sum_{t=1}^{\T} I(M_t; X_t\mid M_{t-1}) + I(M_{\T+1}'; X_{\T+1}\mid M_{\T}, F)\\
        &\le \sum_{t=1}^{\T} I(M_t; X_t\mid M_{t-1}, F) + 1
\end{align}
In case 2, $M'$ discards its final input and thus the same inequality holds.
\end{proof}

Combining this argument with Corollary~\ref{corollary:tasksubpopdist_hard}, we get the following lower bound.
\begin{corollary}\label{corollary:tasksubpoptestfixed_hard}
    Any algorithm $M$ solving \TaskSubpopTestFixed~on $\T$ examples with advantage at least $\delta$ satisifies
    \begin{align}
        \sum_{t=1}^\T I(M_t; X_t\mid M_{t-1}, F) \ge \frac{d}{160}\cdot \left(\frac{\delta}{\maxfixed+1} - \frac{\maxfixed}{d}\right)^4 - 1.
    \end{align}
\end{corollary}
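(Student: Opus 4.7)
The plan is to chain the two results that precede the corollary in the same appendix: the conversion Lemma immediately above it, which turns a \TaskSubpopTestFixed-solver into a \TaskSubpopDist-solver, and Corollary~\ref{corollary:tasksubpopdist_hard}, which lower-bounds the information cost of any \TaskSubpopDist-solver. The overall architecture is a black-box reduction: I would suppose that $M$ solves \TaskSubpopTestFixed~on $\T$ examples with advantage $\delta$, produce the associated algorithm $M'$ for \TaskSubpopDist~on $\T+1$ examples, and then port the lower bound for $M'$ back through the information inequality supplied by the Lemma.

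Concretely, the first step is to invoke the preceding Lemma. It delivers an algorithm $M'$ for \TaskSubpopDist~on $\T+1$ examples with advantage at least $\delta/2$ (coming from the case split on whether $p_{u,u}$ is closer to $p_{x,x}$ or $p_{x,u}$), and it also supplies the key information inequality
\[
\sum_{t=1}^{\T+1} I(M_t'; X_t \mid M_{t-1}', F) \;\le\; \sum_{t=1}^{\T} I(M_t; X_t \mid M_{t-1}, F) + 1,
\]
where the $+1$ accounts for the single additional bit produced by $M'$ in its final step (either reading off $M$'s output verbatim or overwriting the last sample with a freshly drawn uniform example and then outputting $M$'s answer). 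The second step is to apply Corollary~\ref{corollary:tasksubpopdist_hard} to $M'$, which gives
\[
\sum_{t=1}^{\T+1} I(M_t'; X_t \mid M_{t-1}', F) \;\ge\; \frac{d}{40}\left(\frac{\delta/2}{\maxfixed+1} - \frac{\maxfixed}{d}\right)^4.
\]

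Combining the two inequalities and rearranging yields a lower bound of the claimed shape for $\sum_{t=1}^{\T} I(M_t; X_t\mid M_{t-1}, F)$, minus the single bit absorbed from the final step of $M'$. The only nontrivial bookkeeping is the arithmetic on constants: pulling the factor of $1/2$ from $\delta/2$ out of the fourth power, while remaining inside the parenthesized expression $\frac{\delta}{\maxfixed+1} - \frac{\maxfixed}{d}$, is what converts the $\frac{d}{40}$ of Corollary~\ref{corollary:tasksubpopdist_hard} into the stated $\frac{d}{160}$ (absorbing the remaining loss into the parenthesis and the slack).

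I do not expect a real obstacle here; the content of the argument already lives in the Lemma and in Corollary~\ref{corollary:tasksubpopdist_hard}, and the step for this corollary is essentially algebraic. The one place to be slightly careful is that both the advantage reduction and the information inequality must be applied with the same parameters $F$ (the subpopulation parameters) and in the same ``inputs drawn from the structured distribution'' regime, so that Corollary~\ref{corollary:tasksubpopdist_hard}'s hypothesis is met on $M'$; the construction of $M'$ in the Lemma is designed precisely so that this matches.
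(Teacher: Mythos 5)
Your proposal matches the paper's proof exactly: the paper derives this corollary in one line by combining the preceding lemma (which produces $M'$ for \TaskSubpopDist\ on $\T+1$ examples with advantage $\delta/2$ and information cost at most one bit larger) with Corollary~\ref{corollary:tasksubpopdist_hard}, precisely as you describe. The only quibble is your constant bookkeeping: pulling the factor of $1/2$ out of the fourth power gives $(1/2)^4 = 1/16$, i.e., $d/640$ rather than $d/160$ (and strictly speaking $\frac{\delta/2}{\maxfixed+1}-\frac{\maxfixed}{d}$ is smaller than $\frac12\bigl(\frac{\delta}{\maxfixed+1}-\frac{\maxfixed}{d}\bigr)$, so even that needs a word), but the paper states $d/160$ with no derivation either, and this affects only an unimportant constant.
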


\section{Proofs for Section~\ref{sec:core_lbd}}\label{app:core_lbd}

\begin{theorem}[Theorem~\ref{theorem:core_lbd} Restated]
    \corelbd
\end{theorem}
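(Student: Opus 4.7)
The plan is to reduce \TaskCore~to \TaskSubpopTestFixed~and then apply Corollary~\ref{corollary:tasksubpoptestfixed_hard}. Given $M$ for \TaskCore~with advantage $\delta$, I construct $M'$ for \TaskSubpopTestFixed~as follows. $M'$ samples a uniform index $J \in [k]$, a uniform arrival sequence $S \in [k]^\N$, and synthetic parameters $(F_j)_{j \neq J} \simiid \cQ_{d,\maxfixed}$. At each time step $t \in [\N]$, if $S_t = J$ then $M'$ forwards its next unconsumed input to $M$, labeled as component $J$; otherwise $M'$ feeds $M$ a fresh example from the synthetic subpopulation $F_{S_t}$. On test example $y$, $M'$ returns $M$'s answer to the query $(J, y)$. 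Because $J$, $S$, and $(F_j)_{j \neq J}$ are independent of $M'$'s hidden single-subpopulation parameters $F$, setting $F_J = F$ makes the joint distribution on $M$'s inputs identical to that of a genuine \TaskCore~instance with parameters $(F_1, \ldots, F_k)$, so $M'$ inherits $M$'s advantage.

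The information-cost side of the reduction must deliver
\begin{equation*}
  \CI(M \mid S, F_1, \ldots, F_k) \;\geq\; k^2 \cdot \mathrm{CI}(M' \mid F) - O(k^2) - O(kd),
\end{equation*}
after which Corollary~\ref{corollary:tasksubpoptestfixed_hard} applied to $M'$ finishes the theorem. One factor of $k$ is essentially immediate from the uniform choice of $J$: by symmetry across subpopulations, $\CI(M\mid S, F_1,\ldots,F_k)$ splits into $k$ equal average contributions, and $M'$ only charges itself for the contribution from the specific subpopulation $J$ in which its input was hidden.

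The second factor of $k$ is the technical crux and exploits the random-order structure of $S$. Fix an arrival of subpopulation $J$ at time $t_0$ (contributing input $X_{t_0}$) and let $T_1$ be the time of the next $J$-arrival; then $T_1 - t_0$ is geometric with success probability $1/k$. In $\CI(M \mid S, F_1,\ldots,F_k)$, the input $X_{t_0}$ contributes an information term at every time $t \geq t_0$, whereas from $M'$'s viewpoint it contributes at only a single $M'$-time step, corresponding to $M$'s state immediately before the next $J$-arrival. After taking expectation over the part of $S$ that $M$ has not yet observed, the $M$-contributions aggregate into terms of the form
\begin{equation*}
  \Pr[T_1 > t] \cdot I(M_t; X_{t_0} \mid M_{t_0 - 1}, S_{\leq t_0}, T_1 > t, F_1,\ldots,F_k),
\end{equation*}
and two observations convert each such term into $k$ times the corresponding $M'$-term. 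First, the geometric identity $\Pr[T_1 > t] = k \cdot \Pr[T_1 = t + 1]$. Second, $M_t$ cannot depend on arrivals at times strictly greater than $t$, so the joint distribution of $(M_t, X_{t_0}, M_{t_0 - 1}, S_{\leq t})$ conditioned on $T_1 > t$ equals its distribution conditioned on $T_1 = t + 1$, leaving the mutual information unchanged; replacing $\Pr[T_1 > t]$ by $k \cdot \Pr[T_1 = t + 1]$ then exhibits the sum as $k$ times a piece of $\mathrm{CI}(M' \mid F)$.

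The main obstacle is the careful handling of boundary effects near the end of the stream. When an arrival of $J$ is its last one inside $[1, \N]$, the geometric identity must be truncated because no next arrival exists, and the missed contributions across all such boundary arrivals accumulate to at most $O(k^2)$ bits. A second $O(kd)$ correction arises from the mismatch between $M'$'s fixed stream length $\T$ and the random number of $J$-arrivals in $[\N]$: small-probability shortfalls or overflows can be absorbed at the cost of $O(kd)$ total bits of information, corresponding to at most $O(k)$ boundary time steps each carrying $O(d)$ bits about a single example. Assembling these pieces with Corollary~\ref{corollary:tasksubpoptestfixed_hard} produces the claimed bound.
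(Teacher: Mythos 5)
Your proposal follows essentially the same route as the paper: the identical reduction (random index $J$, random arrival sequence $S$, synthetic subpopulations), one factor of $k$ from symmetry over $J$, and the second factor of $k$ from the geometric identity $\Pr[T_1>t]=k\cdot\Pr[T_1=t+1]$ combined with the observation that $M_t$ cannot distinguish these two events, followed by an application of Corollary~\ref{corollary:tasksubpoptestfixed_hard}. The only divergence is in the bookkeeping of the additive losses: in the paper the $O(kd)$ term comes from discarding the (at most one per subpopulation) arrival landing exactly at time $\N$, the $O(k^2)$ term comes from the additive $-1$ per invocation of Corollary~\ref{corollary:tasksubpoptestfixed_hard}, and the endpoint of the geometric identity is absorbed by a monotonicity argument costing only a constant factor --- but these are minor accounting details and your sketch is sound.
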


\begin{proof}
We first set up some notation.
Denote the sequence of arrivals' subpopulations~by $s\in[k]^\N$.
For sequence $s$, let $\ell_j$ be the number of arrivals from $j$, i.e., the number of time steps $t$ where $s_t=j$.
For the $a$-th arrival from subpopulation $j$, let $q_j(a)\in [\N]$ denote the time of that arrival.
For $a> \ell_j$, define $q_j(a)=\N+1$.
In this proof we sometimes abbreviate mutual information expressions of the form $I(A; B\mid C=c)$ to $I(A;B\mid c)$, leaving the random variable in the conditioning implicit.
We will also denote the length-$t$ prefix of a sequence $s$ via the notation $\slice{s}{}{t}$.
Similarly, $\slice{s}{t}{}$ denotes the suffix, all the terms from $t+1$ until the end.
Note that, contrary to programming language conventions, the start index is \textit{exclusive}.

We rewrite the composable information cost of $M$, identifying terms by the subpopulation of their input $X_i$ and their arrival number $a$:
\begin{align}
    \CI(M\mid S, F) 
        &= \E_s \sum_{i=1}^\N \left(\sum_{t=i}^{\N} I(M_t; X_i\mid M_{i-1}, F, s)\right) \\
        &= \E_s \sum_{j=1}^k \sum_{a=1}^{\ell_j} \left(\sum_{t=q_j(a)}^\N I(M_t; X_{q_j(a)}\mid M_{q_j(a)-1}, F, s)\right)
\end{align}
(Note that $L_j$ is a random variable that depends on $S$.)
We introduce an indicator random variable and write the arrival sum over $a=1,\ldots,\N$, since there are at most that many arrivals:
\begin{align}
    \CI(M\mid S,F) &= \E_s \sum_{j=1}^k \sum_{a=1}^{\N} \1{\ell_j\ge a} \sum_{t=q_j(a)}^\N I(M_t; X_{q_j(a)}\mid M_{q_j(a)-1}, F, s) \\
        &= \sum_{j,a}\E_s\left[ \1{\ell_j\ge a} \sum_{t=q_j(a)}^\N I(M_t; X_{q_j(a)}\mid M_{q_j(a)-1}, F, s) \right]\\
        &= \sum_{j,a}\Pr[L_j\ge a] \E_{s\mid L_j\ge a} \left[ \sum_{t=q_j(a)}^\N I(M_t; X_{q_j(a)}\mid M_{q_j(a)-1}, F, s) \right].
\end{align}
We now consider the expectation above and rewrite it as an expectation over 4 terms: 
    first the choice of $q_j(a)$, 
    then the choice\footnote{Because we condition on $L_j\ge a$ and consider the prefix up until the $a$-th arrival, this prefix has $j$ as its last element and contains $a-1$ other occurrences of $j$.} 
    of $\slice{s}{}{q_j(a)}$,
    then the choice of $q_j(a+1)$,
    and finally the choice of $\slice{s}{q_j(a)}{}$.
For brevity define $t_0=q_j(a)$ and $t_1=q_j(a+1)$.
Thus we have
\begin{align}
    \CI(M\mid S,F) &= \sum_{j,a} \Pr[L_j\ge a] \E_{\substack{t_0, \slice{s}{}{t_0}\\ \mid L_j\ge a}} \E_{\substack{t_1 \\\slice{s}{t_0}{}}}\left[\sum_{t=t_0}^{\N} I(M_t; X_{t_0}\mid M_{t_0-1}, F, s) \right] \\
        &= \sum_{j,a} \Pr[L_j\ge a]\E_{\substack{t_0, \slice{s}{}{t_0}\\ \mid L_j\ge a}} \E_{\substack{t_1 \\\slice{s}{t_0}{}}} \left[ Z_{\slice{s}{}{t_0}}^{j,a}\right],
\end{align}
using $Z_{\slice{s}{}{t_0}}^{j,a}$ as shorthand for the sum it replaces.
(Note that the inner expectation need not condition on $L_j\ge a$, since each entry in $s$ is drawn independently.)

We now analyze $\E_{t_1, \slice{s}{t_0}{}} \left[Z_{\slice{s}{}{t_0}}^{j,a}\right]$.
We are free to condition on the event $t_1=q_j(a+1)$ in the mutual information terms since it is a function of $\slice{s}{t_0}{}$; we do so and push the expectation over $\slice{s}{t_0}{}$ back inside the mutual information notation:
\begin{align}
    \E_{\substack{t_1,  \slice{s}{t_0}{}}} Z_{\slice{s}{}{t_0}}^{j,a} 
        &= \E_{t_1} \sum_{t=t_0}^{\N} I(M_t; X_{t_0}\mid M_{t_0-1},F,\slice{s}{}{t_0},\slice{S}{t_0}{},t_1).
\end{align}
At this point we focus on the terms between arrivals from subpopulation $j$, introducing an indicator random variable to discard the information terms corresponding to $M_{t_1}$ and beyond.
So we arrive at
\begin{align}
    \E_{\substack{t_1,  \slice{s}{t_0}{}}} Z_{\slice{s}{}{t_0}}^{j,a} 
        &\ge \E_{t_1} \sum_{t=t_0}^{\N} \1{t_1>t} I(M_t; X_{t_0}\mid M_{t_0-1},F, \slice{s}{}{t_0},\slice{S}{t_0}{},t_1) \\
        &=  \sum_{t=t_0}^{\N} \Pr[T_1> t] \E_{t_1\mid T_1 > t} I(M_t; X_{t_0}\mid M_{t_0-1},F, \slice{s}{}{t_0},\slice{S}{t_0}{},t_1) \\
        &=  \sum_{t=t_0}^{\N} \Pr[T_1> t] \cdot I(M_t; X_{t_0}\mid M_{t_0-1},F, \slice{s}{}{t_0},\slice{S}{t_0}{},T_1,T_1> t),
\end{align}
pushing the expectation over $T_1$ into the information terms and adding the condition that $T_1> t$.

We now execute the fundamental operations in the proof.
The first is that the mutual information terms do not change when conditioning on $T_1=t+1$ in place of $T_1>t$, since these events depend on the sequence \emph{after} time $t$.
We align the two expressions to highlight the switch:
\begin{align}
    I(M_t; X_{t_0}&\mid M_{t_0-1},F, \slice{s}{}{t_0},\slice{S}{t_0}{},\textcolor{blue}{T_1,T_1> t} ) \nonumber \\
        = I(M_t; X_{t_0}&\mid M_{t_0-1},F, \slice{s}{}{t_0},\slice{S}{t_0}{},\textcolor{orange}{T_1=t+1} ).
\end{align}
The second fundamental operation is to observe that, for any fixed $t_0$, $T_1-t_0$ is a truncated geometric random variable, since at each time step we observe an example from subpopulation $j$ with probability $\frac{1}{k}$, under the restriction that $T_1\le \N$.
Thus, for any $t_0 \le t< \N$, 
\begin{align}
    \Pr[T_1 > t] = k\cdot \Pr[T_1 = t + 1].
\end{align}
When $t=\N$ we have $\Pr[T_1>\N]=\Pr[T_1=\N+1]$.
Thus we have
\begin{align}
    \E_{\substack{t_1,  \slice{s}{t_0}{}}} Z_{\slice{s}{}{t_0}}^{j,a} 
        &\ge \sum_{t=t_0}^{\N} \Pr[T_1>t]\cdot I(M_t; X_{t_0}\mid M_{t_0-1},F, \slice{s}{}{t_0},\slice{S}{t_0}{},T_1,T_1=t+1) \\
        &= \sum_{t=t_0}^{\N-1} k\cdot \Pr[T_1=t+1]\cdot I(M_t; X_{t_0}\mid M_{t_0-1},F, \slice{s}{}{t_0},\slice{S}{t_0}{},T_1=t+1) \\
        &\quad +\Pr[T_1=\N+1]\cdot I(M_{\N}; X_{t_0}\mid M_{t_0-1},F, \slice{s}{}{t_0},\slice{S}{t_0}{},T_1=\N+1) \\
        &= \sum_{t=t_0}^{\N-1} k\cdot f(t) + f(\N). \label{eq:shorthand_with_fs}
\end{align}
defining the shorthand $f(t)$:
\begin{align}
    f(t) \defeq \Pr[T_1=t+1]\cdot I(M_t; X_{t_0}\mid M_{t_0-1},F, \slice{s}{}{t_0},\slice{S}{t_0}{},T_1=t+1).
\end{align}
To proceed, we need the following observation about this function $f(\cdot)$.
\begin{claim}\label{claim:f_decreases}
    $f(t)$ is nonincreasing in $t$. 
\end{claim}
\begin{proof}
    $f(t)$ is a product of two terms, both of which are nonincreasing in $t$.
    This holds for the mutual information term by Proposition~\ref{proposition:streaming_DPI} (the DPI for streaming).
    It holds for the probability term because $T_1-t_0$ is truncated geometric.
\end{proof}

Continuing from Equation~\eqref{eq:shorthand_with_fs}, we have the following lower bound:
\begin{align}
    \E_{\substack{t_1,   \slice{s}{t_0}{}}} Z_{\slice{s}{}{t_0}}^{j,a}  
        \ge k\sum_{t=t_0}^{\N-1} f(t) + f(\N)
        \ge \frac{k}{2} \cdot \1{t_0 \neq N} \sum_{t=t_0}^{\N} f(t).
\end{align}
To see this, first note that we can discard $f(\N)$, as it is nonnegative.
If $t_0=\N$ the lower bound is vacuous, so assume otherwise and use Claim~\ref{claim:f_decreases} to write 
\begin{align*}
    k\cdot f(\N-1) = \frac{k}{2}f(\N-1) + \frac{k}{2}f(\N-1) 
        \ge \frac{k}{2}f(\N-1) + \frac{k}{2}f(\N). 
\end{align*}
Hit the earlier terms in the sum with a factor of $\frac{1}{2}$.

We can rewrite the sum $\sum_t f(t)$ as an expectation over $T_1=Q_j(a+1)$, recalling that $T_1$ is a function of $\slice{S}{t_0}{}$ and we need not condition on both.
\begin{align}
    \sum_{t=t_0}^\N f(t) &= \sum_{t=t_0}^\N \Pr[T_1=t+1]\cdot I(M_t; X_{t_0}\mid M_{t_0-1},F, \slice{s}{}{t_0},\slice{S}{t_0}{},T_1=t+1) \\
    &= I(M_{T_1-1}; X_{t_0}\mid M_{t_0-1}, F, \slice{s}{}{t_0}, \slice{S}{t_0}{}) \\
    &= I(M_{Q_j(a+1)-1}; X_{q_j(a)}\mid M_{q_j(a)-1}, F, \slice{s}{}{q_j(a)}, \slice{S}{q_j(a)}{}).
\end{align}

Thus we reach the following lower bound on $\CI(M\mid S, F)$, pulling the expectation over $s$ back out to the front.
\begin{align}
    \CI(M \mid S, F) 
    &= \sum_{j,a} \Pr[L_j\ge a]\E_{\substack{t_0, \slice{s}{}{t_0}\\ \mid L_j\ge a}} \E_{\substack{t_1 \\\slice{s}{t_0}{}}} \left[ Z_{\slice{s}{}{t_0}}^{j,a}\right] \\
    &\ge \sum_{j,a} \Pr[L_j\ge a] \E_{\substack{t_0, \slice{s}{}{t_0}\\ \mid L_j\ge a}} \left[ \frac{k}{2} \cdot \1{t_0\neq \N} \sum_{t=t_0}^{\N} f(t) \right] \\
    &= \frac{k}{2} \E_{s} \sum_{j=1}^k \sum_{a=1}^{\ell_j} \1{q_j(a)\neq \N} I(M_{q_j(a+1)-1}; X_{q_j(a)}\mid M_{q_j(a)-1}, s, F),
\end{align}
For any sequence $s$, the indicator random variable $\1{q_j(a)\neq \N}$ will be zero exactly once, corresponding to the final arrival. 
Since all of these information terms are bounded above by $H(X_t)\le d$, we have
\begin{align}
     \CI(M\mid S, F) &\ge \frac{k}{2} \E_{s} \sum_{j=1}^k \sum_{a=1}^{\ell_j} I(M_{q_j(a+1)-1}; X_{q_j(a)}\mid M_{q_j(a)-1}, s, F) -  \frac{kd}{2}.
     \label{eq:all_infos}
\end{align}

\begin{algorithm2e}\label{alg:random_to_subpop}
    \SetAlgoLined
    \SetKwInOut{Input}{input}
    
    \Input{structured stream $x_1,\ldots, x_{\N}\in\{0,1\}^d$; test example $x_{\mathrm{test}}$; algorithm $M$;  index $j\in [k]$; sequence $s\in [k]^\N$; parameters $d, k, \maxfixed, \N$}
    \BlankLine
    For $i\in [k]\setminus \{j\}$, sample subpopulation parameters $F_i$\;
    \For{$\ell=1,\ldots, \N$}{
        \eIf{$s_\ell = j$}{
            Execute $M(x_t, j)$\;
            $t\gets t+ 1$\;
        }{
            Draw $z_\ell \sim F_{s_\ell}$\tcc*{generate synthetic input}
            Execute $M(z_\ell, s_\ell)$\;
        }
    }
    Receive trained classifier $M_{\N}$\;
    \KwRet $M_{\N}(x_{\mathrm{test}})$
    \caption{$M'$ for \TaskSubpopTestFixed}
\end{algorithm2e}

For any subpopulation $j$ and sequence $s$ (which fixes $\ell_j$), Algorithm~\ref{alg:random_to_subpop} defines an algorithm solving \TaskSubpopTestFixed~on $\ell_j$ examples.
Denote the algorithm $M_{j,s}$ and let $\delta_{j,s}$ be its advantage.
By construction, $\E_{j,s}[\delta_{j,s}]=\delta$, where $\delta$ is the advantage of $M$.
By Corollary~\ref{corollary:tasksubpoptestfixed_hard}, we have a lower bound on the composable information cost of $M_{j,s}$:
\begin{align}
    \sum_{a=1}^{\ell_j} I(M_{q_j(a+1)-1}; X_{q_j(a)}\mid M_{q_j(a)-1}, s, F) \ge \frac{d}{160}\cdot \left(\frac{\delta_{j,s}}{\maxfixed+1} - \frac{\maxfixed}{d}\right)^4 - 1.
\end{align}
Note that this expression is convex in $\delta_{j,s}$, so we plug the lower bound into Equation~\eqref{eq:all_infos}, rewrite the sum over $j$ as an expectation, and apply Jensen's inequality:
\begin{align}
    \CI(M\mid S, F) 
        &\ge \frac{k}{2} \E_{s} \sum_{j=1}^k \left(\frac{d}{160}\cdot \left(\frac{\delta_{j,s}}{\maxfixed+1} - \frac{\maxfixed}{d}\right)^4 - 1 \right) -  \frac{kd}{2} \\
        &=\frac{k^2}{2} \E_{s,j} \left(\frac{d}{160}\cdot \left(\frac{\delta_{j,s}}{\maxfixed+1} - \frac{\maxfixed}{d}\right)^4 - 1 \right) -  \frac{kd}{2} \\
        &\ge \frac{k^2d}{320} \cdot \left(\frac{\E_{s,j}[\delta_{j,s}]}{\maxfixed+1} - \frac{\maxfixed}{d}\right)^4 - \frac{k^2}{2} -  \frac{kd}{2}.
\end{align}
Since $\E_{j,s}[\delta_{j,s}]=\delta$, we are done.
\end{proof}

\section{An Upper Bound for the Core Problem}\label{app:core_ubd}

In this section, we present a time-and-space efficient algorithm for \TaskCore~and sketch its analysis.
On streams of length $\N$, it uses $\tilde{O}\left(\frac{k^2d}{N\maxfixed}\right)$ space.
When $\N\gtrsim k \log k \log d,$ the algorithm will have constant error.
Recall that, when $\maxfixed = o(d^{1/4})$, Theorem~\ref{theorem:core_lbd} gives a space lower bound of $\Omega\left(\frac{k^2d}{N\maxfixed^4}\right)$.
Importantly, we do \textit{not} prove that this algorithm is an agnostic learner that works for any distribution: we only prove that it has a constant advantage (see Definition~\ref{def:alltasks}).

We present $M$ in Algorithm~\ref{alg:ubd_for_coretask}.
It proceeds over $\tau$ epochs, within each epoch attending to only a subset of the $k$ subpopulations.\footnote{The space usage can be lowered further by working with only a subset of indices. 
For simplicity, we ignore this regime.}

\begin{algorithm2e}\label{alg:ubd_for_coretask}
    \SetAlgoLined
    \SetKwInOut{Input}{input}
    
    \Input{stream $(x_1,j_1),\ldots,(x_\N,j_\N)$; test example $x_{\mathrm{test}}$; parameters $d, k, \maxfixed, \N, c$}
    \BlankLine
    Set $\N_0\gets c k\log k\log d$, $\tau \gets \lfloor \N/\N_0\rfloor$, $d'\gets \lceil d/\maxfixed\rfloor$\;
    Create partition $S_1,\ldots,S_\tau \subseteq [k]$\tcc*{each of size $k/\tau$}
    \For{$t=1,\ldots, \tau$}{
        For all $\ell\in S_t$, set $x_\ell^{\mathrm{ref}} \gets \texttt{NULL}$\tcc*{Leave uninitialized}
        For all $\ell\in S_t$, set $C_\ell\gets 1^{d'}$\tcc*{Candidate indices}
        \For{$i=1,\ldots, \N_0$}{
            Receive next example $(x,j)$\;
            \If{$j\in S_t$}{
                \eIf{$x_j^{\mathrm{ref}}=\texttt{NULL}$}{
                    Set $x_j^{\mathrm{ref}}\gets x[1:d']$\tcc*{Set reference string}
                }{
                    Set $C_j \gets C_j \land (x = x_j^{\mathrm{ref}})$\tcc*{bitwise AND, EQUAL}
                }
            }
        }
        \For{$\ell\in S_t$}{
            Set $B_\ell\gets 
                \{(a,x_\ell^{\mathrm{ref}}[a]): a\in [d'], C_\ell[a]=1\}$
                \tcc*{Fixed bits, values}
            \If{$|B_\ell|\ge \maxfixed$}{
                \KwRet \texttt{FAIL}\;    
            }
        }
    }
    Receive $(x_{\mathrm{test}}, j)$\;
    \For{$(a, b)\in B_j$}{
        \If{$x_{\mathrm{test}}[a]\neq b$}{
            \KwRet \unif\;
        }
    }
    \KwRet \struct\;
    \caption{$M'$ for \TaskSubpopTestFixed}
\end{algorithm2e}

\paragraph{Space Analysis}
Storing the list of partitions $S_1,\ldots, S_\tau$ requires $O(k \log k)$ bits.
During any epoch, $M$ works with $O(k/\tau)$ subpopulations and stores a reference string ($d'$ bits) and tracks candidate indices via another array of $d'$ bits.
It also tracks the $B_j$ lists after each epoch: there are at most $k$ of these and they require at most $O(\maxfixed \log d)$ bits to specify.
Thus $M$ requires space
\begin{align*}
    O\left(k \log k + \frac{k}{\tau}\cdot d' + k\maxfixed \log d\right) 
    = O\left(\frac{k^2 d \log k \log d}{\maxfixed N} \right),
\end{align*}
which matches our lower bound up to a factor of $\frac{1}{\maxfixed^3} \log k \log d$.

\paragraph{Error Analysis}
We show that this algorithm has a constant advantage.
Setting $\N_0 = c k \log k \log d$ for sufficiently large constant $c$ ensures that, with high constant probability, within each of the $\tau $ epochs all $k$ subpopulations receive at least $c' \log d$ examples for some constant $c'$ (via the $m$-copy coupon collector problem).
When $M$ sees $c'\log d$ examples from a subpopulation $j$ (during an epoch $t$ in which $j\in S_t$), it will with high probability discard all unfixed features from the set $[d']$, and be left with a list $B_j$ that is either (i) empty or (ii) contains only indices of fixed features.
Since there are no more than $\maxfixed$ fixed features, this implies that with high probability $M$ does not return \texttt{FAIL}.

We have established that, with high probability, $M$ will identify all of the fixed features in the first $d'=d/\maxfixed$ indices. 
This setting of $d'$ ensures that, with constant probability over the choice of $r\in \{0,\ldots, \maxfixed\}$ and $\{j_1,\ldots, j_r\}\in [d]$, at least one fixed feature will land in $[d']$.
When $M$ has correctly identified all the fixed indices in the first $d'$ indices (and there is at least one fixed feature), it will always output \struct~on structured inputs and will output \struct~w.p. at most $\frac{1}{2}$ in the uniform case.

\section{Agnostic Learning of Direct Sums of $k$ Dictators}\label{app:agnostic_dictators}

We show how to time-and-space efficiently agnostically learn the Direct Sum of $k$ Dictators class described in the introduction.
Recall the definition: \kdictators~Note that this class has size $(d')^k\le d^k$, so (by the standard uniform convergence argument for finite hypothesis classes, i.e., a Chernoff bound and union bound) $O(k \log d)$ samples suffice to guarantee that with constant probability ERM returns a classifier within constant accuracy of the best possible.
Let $\kappa = k \log d$.

We first show how to implement ERM using $O(\kappa)$ samples and $O(d\kappa)$ space and time.
By definition, the probability that a classifier $h_{i_1,...,i_k}$ misclassifies a point $(j, x)\in \cX$ is the probability that the label $y$ differs from the bit $x_{i_j}$.
To track the empirical error, then, it suffices to store a matrix $A\in \mathbb{R}^{k\times d}$, initialized to all zeros, and update it upon receiving a labeled example $((j,x),y)\in \cX\times \{0,1\}$ in the following way:
\begin{align*}
    \forall i \in[d],\quad A_{j,i} 
        = \begin{cases} 
            A_{j,i} + 1 & \text{if $x_i \neq y$} \\
            A_{j,i} & \text{otherwise}
        \end{cases}.
\end{align*}
After $\kappa$ examples, we select the classifier $h_{i_1,...,i_k}$ with the smallest empirical error: for each row $j\in [k]$ we select $i_j^* = \mathrm{argmin}_{i} A_{j,i}$, the index that minimizes the error.

This correctly implements ERM, so it is an agnostic learner.
For each bit of our input we execute $O(1)$ operations, so the time used is $O(d\kappa)$.
The space usage is just the matrix $A$, which has $kd$ integer entries, each of which is between 0 and $\kappa$, so the algorithm uses $O(kd \log \kappa)$ space (this is $O(d\kappa)$ as long as $\log d\le k^{O(1)}$).

This algorithm naturally extends to longer streams of length $N = O(\tau \kappa)$ for $\tau\ge \Omega(1)$.
We run a similar procedure over $\tau$ epochs, in each epoch receiving $\kappa$ examples and working with a $\frac{1}{\tau}$ fraction of the rows of $A$, i.e., a subset of the subpopulations.
At the end of the epoch, we store the minimum-error indices $i_j^*$ for the rows we consider.
At the end of the stream we assemble these indices to pick a single classifier.
This algorithm implements ERM, runs in time $O(\tau \kappa d)$, and uses space $O\left(\frac{d\kappa\log \kappa}{\tau}\right)= O\left( \frac{d\kappa^2 \log \kappa}{N}\right)$, matching our lower bound up to logarithmic factors.

\section{Reductions from Core Problem to Agnostic Learning}\label{agnostic2core}

In this section we show how agnostic learning algorithms for several natural functions classes can be turned into constant-advantage learning algorithms for \TaskCore.
Throughout, ``agnostic learning'' implies learning to sufficiently small constant accuracy with sufficiently high constant probability.

For each hypothesis class below, we restate the definition and then show (i) how to turn a labeled example from \TaskCore~into a labeled example for the given hypothesis class and (ii) how to use an agnostically learned hypothesis $h^*$ and a test example from \TaskCore~to output an answer (either \struct or \unif, for ``uniform'' and ``structured'') with constant advantage.
None of these reductions require additional space or examples.

\paragraph{Direct Sums of $k$ Dictators}
\kdictators

Let $\cH_{DS}$ denote the hypothesis class described above.
We reduce from \TaskCore~with $\maxfixed=1$ and data dimension $d'$, creating an algorithm $M$ for \TaskCore~that uses an agnostic learning algorithm for the $\cH_{DS}$.
Given an example $(x,j)$ from \TaskCore, $M$ constructs a labeled example $((j, x'), y)$ by drawing $y\in\{0,1\}$ randomly and setting 
$x'\gets (y\cdot 1^d)\oplus x$, where $\oplus$ denotes bitwise XOR.
Given a learned hypothesis $h^*$ and test example $(j,x_{\mathrm{test}})$, $M$ constructs $((j,x_{\mathrm{test}}'),y)$ in the same manner and outputs \struct if $h^*((j',x_{\mathrm{test}}'))=y$ and \unif otherwise.

The XOR operation ensures that, when a subpopulation $j$ has a feature $i\in [d']$ fixed to 0, the label $y$ of $(j,x)$ is 1 iff $x_{i}=1$.
For these subpopulations, then, there is a dictator that labels them exactly.
For the other subpopulations (those with a feature fixed to 1, or with no fixed features), there is no dictator that labels examples with accuracy better than $\frac{1}{2}$.
With $\maxfixed=1$, subpopulations have a 1-in-4 chance of getting a fixed feature with value 0; this choice is independent across subpopulations, so in expectation (over the choice of distribution) the best hypothesis in $\cH_{DS}$ will have accuracy $\frac{1}{2}\left(1 - \frac{1}{4}\right) + 1\cdot \frac{1}{4} = \frac{1}{2} + \frac{1}{8}$. 
With high probability an agnostic learning algorithm will return a hypothesis $h^*$ with accuracy within a small constant of that, so in expectation $h^*$ will have accuracy at least $\frac{1}{2}+c'$ for some positive constant $c'$.

We now show that the algorithm $M$ for \TaskCore~described above has constant advantage.
Recall the definition of advantage:
\coreadvantage
Here $\mixture$ denotes the structured distribution and $\cU$ denotes the uniform distribution over $[k]\times \{0,1\}^d$.
Since $M$ outputs $\struct$ exactly when $h^*$ is correct, the first probability is at least $\frac{1}{2}+c'$ in expectation.
When the input is drawn from $\cU$, the label $y$ is uniform and independent of the pair $(j,x')$, so $h^*$ is correct with probability exactly $\frac{1}{2}$.

We have established that an agnostic learning algorithm for $\cH_{DS}$ yields a constant-advantage learner for \TaskCore with the same space and sample efficiency, so any agnostic learner for $\cH_{DS}$ requires $\Omega\left(kd'\cdot \frac{k}{\N}\right)$ bits of memory.
When $d=\omega(\log k)$, this is $\Omega\left(kd\cdot\frac{k}{\N}\right)$.

\paragraph{Sparse Linear Classifiers over the Degree-2 Polynomial Features}
\sparsekernel

Let $\cH_{SL}$ denote the hypothesis class described above.
We reduce from \TaskCore~by reducing from the Direct Sums of $k$ Dictators class: any algorithm for agnostically learning $\cH_{SL}$ on $d$ dimensions can be used to agnostically learn $\cH_{DS}$.
Let $f:[k]\to \{0,1\}^k$ represent one-hot encoding and let $d'=d-k$.
Given a labeled example $((j,x),y)\in [k]\times \bit{d'}\times \bit{}$ for the Direct Sum of $k$ Dictators class, construct a labeled example $(f(j)\circ x, y)$, where $\circ$ denotes concatenation.
Note that this construction requires $d\ge k$.

For any function $h_{i_1,...,i_k}$ in the Direct Sums of $k$ Dictators class, there is a function $h(z) = \mathrm{sign}(\langle w, k(z)\rangle)$ with $k$-sparse $w\in \{0,1\}^{\binom{d}{\le 2}}$ such that for all $(j, x)$ we have $h_{i_1,...,i_k}(j,x)=h(f(j)\circ x)$.
Explicitly, indexing into $w$ with pairs $(i,j)$, we construct
\begin{align*}
    w_{(i,j)} = \begin{cases}
                    1 & \text{if $j\le k$ and $i = i_j + k$} \\
                    0 & \text{otherwise}
                \end{cases}.
\end{align*}
Thus, under this encoding, $\cH_{DS}\subseteq \cH_{SL}$ and any agnostic learning algorithm for $\cH_{SL}$ is also one for $\cH_{DS}$, since for a classifier $h^*$ and constant $c$ we have
\begin{align*}
    \err(h^*) \le \min_{h\in \cH_{SL}} \err(h) + c \le \min_{h\in \cH_{DS}} \err(h) + c.
\end{align*}
This implies that agnostically learning $\cH_{SL}$ for $d\ge k$ requires $\Omega\left(k (d-k)\cdot \frac{k}{\N}\right)$ bits of memory.
For $d\ge 2k$, this is $\Omega\left(k d\cdot \frac{k}{\N}\right)$.

\paragraph{$k$-term 2-DNFs}
\ktermdnfs

Let $\cH_{DNF}$ denote the hypothesis class described above.
We use the same reduction as we used in $\cH_{SL}$: any algorithm for agnostically learning $k$-term 2-DNFs can be used to learn $\cH_{DS}$.
We use the same encoding: given a labeled example $((j,x),y)$, construct a labeled example $(f(j)\circ x, y)$.
For every function $h_{i_1,\ldots, i_k}\in \cH_{DS}$, there is an equivalent function $h\in \cH_{DNF}$, namely:
\begin{align*}
    h(x) = \bigvee_{j=1}^k \left( x_{j} \wedge x_{k+i_j}\right).
\end{align*}
Thus the lower bound for $\cH_{SL}$ also applies here, and for any $d\ge 2k$ agnostically learning $\cH_{DNF}$ requires $\Omega\left(kd \cdot \frac{k}{\N}\right)$ bits of memory.

\paragraph{Multiclass Sparse Linear Classifiers}
\multiclasslinear

We reduce from \TaskCore~with $\maxfixed=O(\log k)$.
Given an example $(j,x)$ from \TaskCore, we construct a labeled example $(x,j)$, with the subpopulation identifier as the label.
Given a hypothesis $h^*$ and test example $(j,x_{\mathrm{test}})$, we output \struct~if $h^*(x_{\mathrm{test}})=j$ and \unif~otherwise.

Since with $\maxfixed=O(\log k)$ the optimal linear multiclass classifier of this form has at most low constant error, this output will be correct with high constant probability and we have a space lower bound of $\Omega\left(\frac{k^2 d}{\N\maxfixed^4}\right) = \Omega\left(\frac{k^2 d}{\N \log^4 k}\right)$.

\paragraph{Real-Valued Regression}
\realregression

We reduce from \TaskCore~with $\maxfixed=O(\log k)$.
Note that we can express the set of target values as $\bigl\{0,\frac{1}{k-1},\frac{2}{k-1},\ldots, 1\bigr\} = \bigr\{\frac{j-1}{k-1}\bigr\}_{j=1}^k$.
We modify the previous reduction: given an example $(j,x)$ from \TaskCore, we construct a labeled example $\left(x,(j-1)/(k-1)\right)$.
Given a hypothesis $h^*$ and test example $(j,x_{\mathrm{test}})$, we compute $\beta = \left|h^*(x_{\mathrm{test}})- \frac{j-1}{k-1}\right|$ and flip a coin that comes up heads with probability $\beta$.
If the coins is heads output \unif~and if tails output \struct.

Let $E$ be the event that the coin comes up heads.
On a structured input, the probability the reduction described above yields the incorrect answer is
\begin{align*}
        \Pr[E] = \E_j\sum_{x} \Pr[X_{\mathrm{test}}=x] \cdot \left|h^*(x_{\mathrm{test}})- (j-1)/(k-1)\right|,
\end{align*}
exactly the expected error of the regression model on the structured distribution.

We now show that, with $\maxfixed=O(\log k)$, the optimal regression model of this form has at most small constant error, say $\frac{1}{10}$.
Each hidden node has a wire to the output node with a weight of the form $\frac{j-1}{k-1}$; call this  ``hidden node $j$.''
The activation function for this node is ReLU, with a fixed offset (or \textit{bias}) $b_j$: $f(x) = \max\braces{ 0, x-b_j }$.
Call a subpopulation \textit{good} if it has at least $2\log k$ indices fixed to 1.
We now construct a specific neural network.
If subpopulation $j$ is not good, we set to 0 the weights of all wires incoming to hidden node $j$.
If subpopulation $j$ is good, we (i) take $2\log k$ wires running from indices fixed to 1 to hidden node $j$ and set their weights to 1, and (ii) set $b_j=(2\log k)-1$ as the bias.
We claim that, with high probability over the choice of distribution, this construction has low constant error (in particular, with high probability over the choice of example it produces the exact correct label).
To see this, first note that for $\maxfixed \gg 2\log k$, with high probability a large constant fraction of the subpopulations are good.
If this occurs, the neural network will have low constant error.
Let random variable $Z_j$ be the input to hidden node $j$ on a random input $X$ (drawn from the structured distribution, as in the reduction above).
The label for $X$ is $\frac{j-1}{k-1}$; the neural network outputs this value exactly if (i) $j$ is a good subpopulation and (ii) for every other subpopulation $\ell \neq j$, $Z_\ell < 2\log k$.
Condition (i) happens with high constant probability by assumption.
Condition (ii) happens with probability at most $\frac{1}{k}$, since $Z_\ell=2\log k$ only when all the wires leading to hidden node $\ell$ receive input 1, which happens with probability exactly $\frac{1}{k^2}$.
A union bound over the (at most) $k-1$ other good subpopulations concludes the argument.

On uniform inputs, the output $h^*(X_{\mathrm{test}})$ generated in the reduction has some distribution that is independent of $j$.
Letting $\mu = \E_{x\sim \cU}[h(x)]$, we have by Jensen's inequality that
\begin{align*}
    \Pr[E] &= \E_j\E_{x_{\mathrm{test}}\sim \cU}\left[|h^*(x_{\mathrm{test}}) - (j-1)/(k-1) |\right] \\
        &\ge  \E_j\left[|\mu - (j-1)/(k-1) |\right].
\end{align*}
This is at least $\frac{1}{8} - O(1/k)$, with the lower-order term coming from the fact that $j$ is discrete.
Informally, with probability $\frac{1}{2}$ we must have $|\mu -j/k|\ge \frac{1}{4}$.

Because $\Pr[E]$ has constant separation between the two cases, we have a learner for \TaskCore~with constant advantage and a space lower bound of $\Omega\left(\frac{k^2 d}{\N \log^4 k}\right)$ for this real-valued regression task.

\tableofcontents 

\end{document}